\DeclareMathOperator*{\argmin}{arg\,min}
\def\etal{{\em et al.\/}\, }
\def\real{\text{real}}
\def\bgamma{\mbox{{\boldmath $\gamma$}}}
\def\bsigma{\mbox{{\boldmath $\sigma$}}}
\def\mB{{\mathcal B}}
\def\mC{{\mathcal C}}
\def\mD{{\mathcal D}}
\def\mF{{\mathcal F}}
\def\mG{{\mathcal G}}
\def\mH{{\mathcal H}}
\def\mM{{\mathcal M}}
\def\mN{{\mathcal N}}
\def\mU{{\mathcal U}}
\def\mV{{\mathcal V}}
\def\mW{{\mathcal W}}
\def\mX{{\mathcal X}}
\DeclareMathAlphabet\mathbfcal{OMS}{cmsy}{b}{n}
\def\0{{\bf 0}}
\def\1{{\bf 1}}
\def\bI{{\bf I}}
\def\bV{{\bf V}}
\def\bX{{\bf X}}
\def\bh{{\bf h}}
\def\br{{\bf r}}
\def\bv{{\bf v}}
\def\bw{{\bf w}}
\def\bx{{\bf x}}
\def\bz{{\bf z}}
\def\mmE{{\mathbb E}}
\def\mmR{{\mathbb R}}
\def\trsp{{\sf T}}
\def\st{{\mathrm{s.t.}}}
\def\bx{{\bf x}}
\def\bX{{\bf X}}
\def\bw{{\bf w}}
\def\bh{{\bf h}}
\def\bz{{\bf z}}
\def\st{{\mathrm{s.t.}}}
\def\ie{\mbox{\textit{i.e.}, }}
\def\eg{\mbox{\textit{e.g.}, }}
\def\wrt{\mbox{\textit{w.r.t. }}}
\newtheorem{coll}{Corollary}
\newtheorem*{*coll}{Corollary}
\newtheorem{deftn}{Definition}
\newtheorem{thm}{Theorem}
\newtheorem*{*thm}{Theorem}
\newtheorem{prop}{Proposition}
\newtheorem{lemma}{Lemma}
\newtheorem*{*lemma}{Lemma}
\newenvironment*{proof}{\textbf{Proof}\quad}{\hfill $\square$\\ \par}
\begin{document}
	\title{Improving Generative Adversarial Networks with Local Coordinate Coding}
	
	\author{
		Jiezhang Cao$^*$,
		Yong Guo$^*$,
		Qingyao Wu,
		Chunhua Shen,
		Junzhou Huang,
		Mingkui Tan$^\dagger$
		\thanks{Jiezhang Cao and Yong Guo are with 
			School of Software Engineering, South China University of Technology, Guangzhou 510640, China, and also with Pazhou Laboratory,
			Guangzhou 510335, China. E-mail: \{secaojiezhang, guo.yong\}@mail.scut.edu.cn}
		\thanks{Mingkui Tan and Qingyao Wu are with School of Software Engineering, South China University of Technology, Guangzhou 510640, China. E-mail: \{mingkuitan, qyw\}@scut.edu.cn}
		\thanks{Chunhua Shen is 
			with 
			The University of Adelaide, Adelaide, SA 5005, Australia. E-mail: chunhua.shen@adelaide.edu.au
		}
		\thanks{Junzhou Huang is with Tencent AI Laboratory, Shenzhen 518000, China, and also with Department of Computer Science and Engineering, University of Texas at Arlington, Arlington, TX 76019, America. E-mail: jzhuang@uta.edu
		}
		\thanks{$*$ Authors contributed equally. $^\dagger$ Corresponding author.}
	}

	\markboth{Journal of \LaTeX\ Class Files, 2020}%
	{Cao \MakeLowercase{\textit{et al.}}: Improving Generative Adversarial Networks with Local Coordinate Coding}
	%




	\IEEEtitleabstractindextext{
		\begin{abstract}
			\justifying
			Generative adversarial networks (GANs) have shown remarkable success in generating realistic data from some predefined prior distribution (\eg Gaussian noises). 
			However, such prior distribution is often independent of real data and thus may lose semantic information (\eg geometric structure or content in images) of data.
			In practice, the semantic information might be represented by some latent distribution learned from data. However,
			such latent distribution may incur difficulties in data sampling for GANs.
			In this paper, rather than sampling from the predefined prior distribution, we propose an LCCGAN model with local coordinate coding (LCC) to improve the performance of generating data. 
			First, we propose an LCC sampling method in LCCGAN to sample meaningful points from the latent manifold. With the LCC sampling method, we can  exploit the local information on the latent manifold and thus produce new data with promising quality.
			Second, we propose an improved version, namely LCCGAN++, by introducing a higher-order term in the generator approximation. This term is able to achieve better approximation and thus further improve the performance. 
			More critically,
			we derive the generalization bound for both LCCGAN and LCCGAN++ and prove that a low-dimensional input is sufficient to achieve good generalization performance.
			Extensive experiments on four benchmark datasets demonstrate the superiority of the proposed method over existing GANs.
		\end{abstract}
		
		\begin{IEEEkeywords}
			Generative adversarial networks, 
			local coordinate coding, latent distribution, generalization performance
	\end{IEEEkeywords}}

	\maketitle

	\IEEEdisplaynontitleabstractindextext

	%
	\IEEEpeerreviewmaketitle

	\IEEEraisesectionheading{\section{Introduction}\label{sec:introduction}}
	
	\IEEEPARstart{G}{enerative} adversarial networks (GANs) \cite{goodfellow2014gans} have been successfully applied in many computer vision tasks, such as image generation \cite{cao2018adversarial, arjovsky2017wasserstein, zhu2019dynamic, lin2019exploring, mao2018on, otverdout2020dynamic, pan2020physics}, video prediction \cite{ranzato2014video, mathieu2015deep}, image translation \cite{cao2019multi, isola2017image, kim2017learning} and domain adaptation \cite{tzeng2017adversarial, zhu2019simreal, guo2020closed}.
	In general, a GAN consists of a generator and a discriminator to play a two-player game.
	Specifically, the generator learns from a simple prior distribution (\eg Gaussian distribution \cite{goodfellow2014gans}) to produce plausible samples to fool the discriminator, while the discriminator distinguishes the fake samples from the real data.
	Recently, many studies \cite{cao2018adversarial, radford2015unsupervised, arjovsky2017wasserstein, karras2017progressive, guo2019auto} have been proposed to
	improve the performance of GANs, which, however, suffer from three limitations. 
	
	First, many GANs use some simple prior distribution, such as Gaussian distribution~\cite{goodfellow2014gans} and uniform distribution~\cite{radford2015unsupervised}.
	However, such predefined prior distribution is often independent of the data distribution.
	Besides, these methods may produce images with distorted structures without sufficient semantic information.
	Although such semantic information can be represented by some latent distributions, \eg extracting embeddings using an autoencoder~\cite{hinton2006reducing}, how to conduct sampling from these distributions still remains 
	a largely unsolved problem in GANs.
	
	Second, the correspondence between the semantic information and the dimension of the latent distribution is not yet fully exploited. 
	Most GANs~\cite{goodfellow2014gans, arjovsky2017wasserstein} use a global coordinate system to represent the data manifold and employ random noises as codings to generate data (See Fig. \ref{Fig: comparison_GCC_LCC}). 
	However, these methods fail to exploit the underlying geometry and capture the local information of data. 
	As a result, they are possible to sample meaningless points in such global coordinate system.  
	For	this issue,	how to exploit the semantic information of data and such correspondence is a very challenging problem.
	
	Third, the generalization ability of GANs \wrt the dimension of the latent distribution remains unclear.
	In practice, the performance of GANs is often sensitive to the dimension of the latent distribution~\cite{cao2018adversarial}.
	Unfortunately, it is hard to define the generalization of GANs and analyze the dimensionality of the latent distribution, since the prior distribution is independent of real data.
	Therefore, how to study the role of the dimension of the latent distribution and investigate its impact on the generalization ability become increasingly important. 
	
	In this paper, relying on the manifold assumption on images~\cite{tenenbaum2000global, roweis2000nonlinear}, we propose a novel generative model using local coordinate coding (LCC)~\cite{yu2009nonlinear} to improve the performance of GANs. 
	Specifically, we first employ an autoencoder to learn the embeddings lying on the latent manifold to capture the semantic information of real data.
	Then, we develop a new LCC sampling method for training GANs by exploiting the local information on the latent manifold. 
	For convenience, we term this method LCCGAN, which appeared in \cite{cao2018adversarial}.
	
	Based on LCCGAN, we propose an improved version, namely LCCGAN++, by introducing a higher-order term to further improve the approximation of a generator. 
	By using this term, the improved version shows more stable training behavior and is able to achieve better performance than LCCGAN.
	More critically, we analyze the generalization performance for both LCCGAN and LCCGAN++, and theoretically prove that a low-dimensional input is sufficient to achieve good generalization performance.
	
	The contributions of this paper are summarized as follows.
	\begin{itemize}
		\item We propose an LCC sampling method for GANs to capture the local information of real data. With the  LCC sampling method, the proposed scheme, namely LCCGAN, is able to sample meaningful points from the latent manifold to generate new data.
		
		\item Based on LCCGAN, we propose an improved version LCCGAN++ by introducing a higher-order term to further improve the approximation of a generative model. LCCGAN++ shows more stable training behavior and better performance than our preliminary work LCCGAN.
		
		\item 
		We derive the generalization bound for both LCCGAN and LCCGAN++ based on Rademacher complexity of the discriminator set and the error \wrt the intrinsic dimensionality of the latent manifold.
		In particular, we theoretically prove that a 
		low-dimensional	input is sufficient to achieve good generalization performance.
		
		\item
		Extensive experiments on several real-world datasets demonstrate the superiority of the proposed method over several baseline methods.
		Moreover, our proposed method has good scalability to generate high-resolution images even when the input dimension is low. 		
	\end{itemize}

	\section{Related Work} \label{sec:related_work}
	
	\textbf{Generative adversarial networks.} 
	Most generative adversarial networks (GANs) employ a global coordinate system with some prior distribution (such as Gaussian distribution or uniform distribution) to generate samples~\cite{goodfellow2014gans,gulrajani2017improved,karras2017progressive}.
	Unfortunately, using the global coordinate system may fail to learn the underlying geometry of data and thus often samples meaningless points to generate distorted data.
	Moreover, such prior distributions are independent of the data distributions, which may lose semantic information of real data and lead to difficulties in analyzing the dimension of latent space.
	To address this, LGAN \cite{qi2018global} uses local coordinate systems and presents a local generator whose input is sampled from a mixture of Gaussian noises with {the} discrete distribution. 
	As a result, LGAN is able to generate images 
	of good quality.
	However, this method is difficult to explore the correlation between the semantic information of real data and the dimension of a latent distribution. 
	Recently, LCCGAN \cite{cao2018adversarial} has employed a local coordinate system to exploit such correlation and improved the performance of GANs.
	
	Furthermore, some generative models conduct sampling via some learned posterior distribution.
	For example, the variational autoencoder (VAE) \cite{kingma2013auto} combines a generative model and an approximate inference model to perform posterior inference. 
	Moreover, the Wasserstein autoencoder (WAE) \cite{tolstikhin2018wasserstein} builds a real data distribution by minimizing a term of the Wasserstein distance between the model distribution and the target distribution, encouraging the encoded training distribution to match the prior. 
	In addition, the adversarial autoencoder (AAE) ~\cite{makhzani2015adversarial} matches the aggregated posterior distribution to the prior distribution to perform variational inference. 
	To further improve the performance of GANs, many methods seek to use neural architecture search techniques~\cite{zoph2016neural,guo2019nat} to automatically find good GAN models~\cite{gong2019autogan}.
	However, these methods cannot directly conduct sampling on the posterior distribution. Moreover, since they globally parameterize the manifold, they would lose local semantic information or have difficulty accessing the local geometry along the manifold.

	\textbf{Generalization analysis of GANs.}
	Existing methods seek to improve the generalization performance of GANs.
	Recently, Dziugaite \etal \cite{dziugaite2015training} apply maximum mean discrepancy to improve the performance of generative models, and provide a generalization analysis of the models.
	Moreover, Thanh-Tung \etal \cite{thanh-tung2018improving} show that discriminators trained on discrete datasets with the original GAN loss would fail to guarantee good generalization performance of GANs and thus provide a zero-centered gradient penalty to improve the generalization of the discriminator.
	In addition, Jiang \etal \cite{jiang2018on} derive a generalization bound under spectrum control based on the PAC-learning framework and prove that the spectrum control is able to improve the generalization ability of GAN models.
	However, these generalization analysis methods do not understand the generalization performance of GANs well from the rigorous mathematical definition.
	
	To address this shortcoming, Arora \etal \cite{arora2017gans} formally provide a definition of the generalization for GANs, and prove that the neural net distance is able to guarantee the generalization performance of GANs. 
	In contrast, the Jensen-Shannon divergence and the Wasserstein distance do not generalize with any polynomial number of examples. 
	Based on the definition of the generalization, Zhang \etal \cite{zhang2018on} use different evaluation metrics to develop several generalization bounds between the true distribution and learned distribution, and prove that the set of discriminators should be large enough to identify the true distribution and small enough surpass memorizing samples.
	Furthermore, Cao \etal \cite{cao2018adversarial} employ the neural net distance to define the generalization \wrt the dimension of the latent distribution. In addition, they develop a generalization bound related to the Rademacher complexity of the discriminator set, and prove that a low-dimensional input is sufficient to achieve good generalization performance.
	Recently, Cao \etal \cite{cao2019multi} extend the definition of the generalization of GANs to the case of multiple domains.
	However, this method is hard to understand the generalization performance of the GANs \wrt the dimension of the latent distribution.
	To better understand the generalization performance of GANs, we further study the relationship between the generalization and the dimension of the latent distribution in this paper.
	
	\begin{figure}[t]
		\centering
		\subfigure[{Global coordinate system}]
		{
			\includegraphics[width = 0.47\columnwidth]{./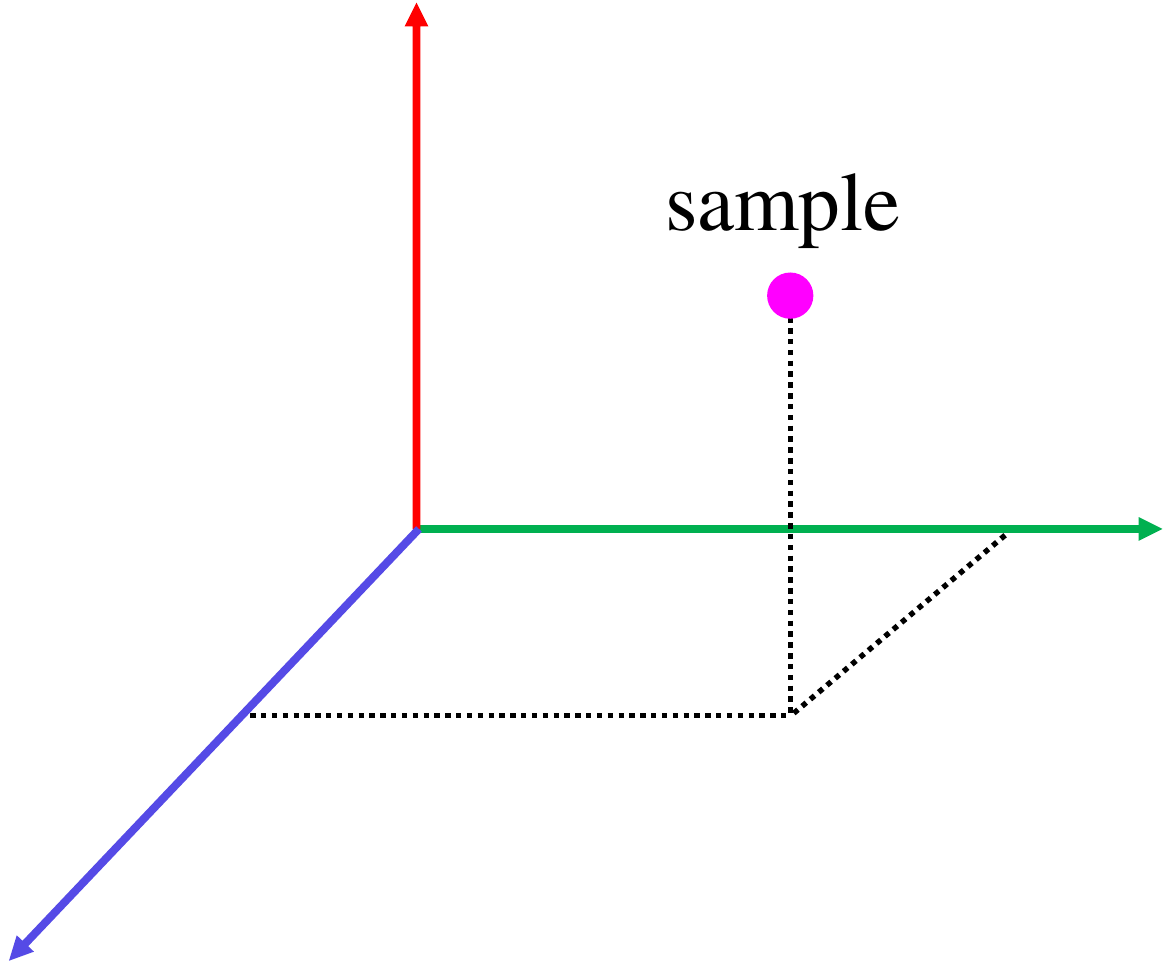}
		}
		\subfigure[Local coordinate system]{
			\includegraphics[width = 0.47\columnwidth]{./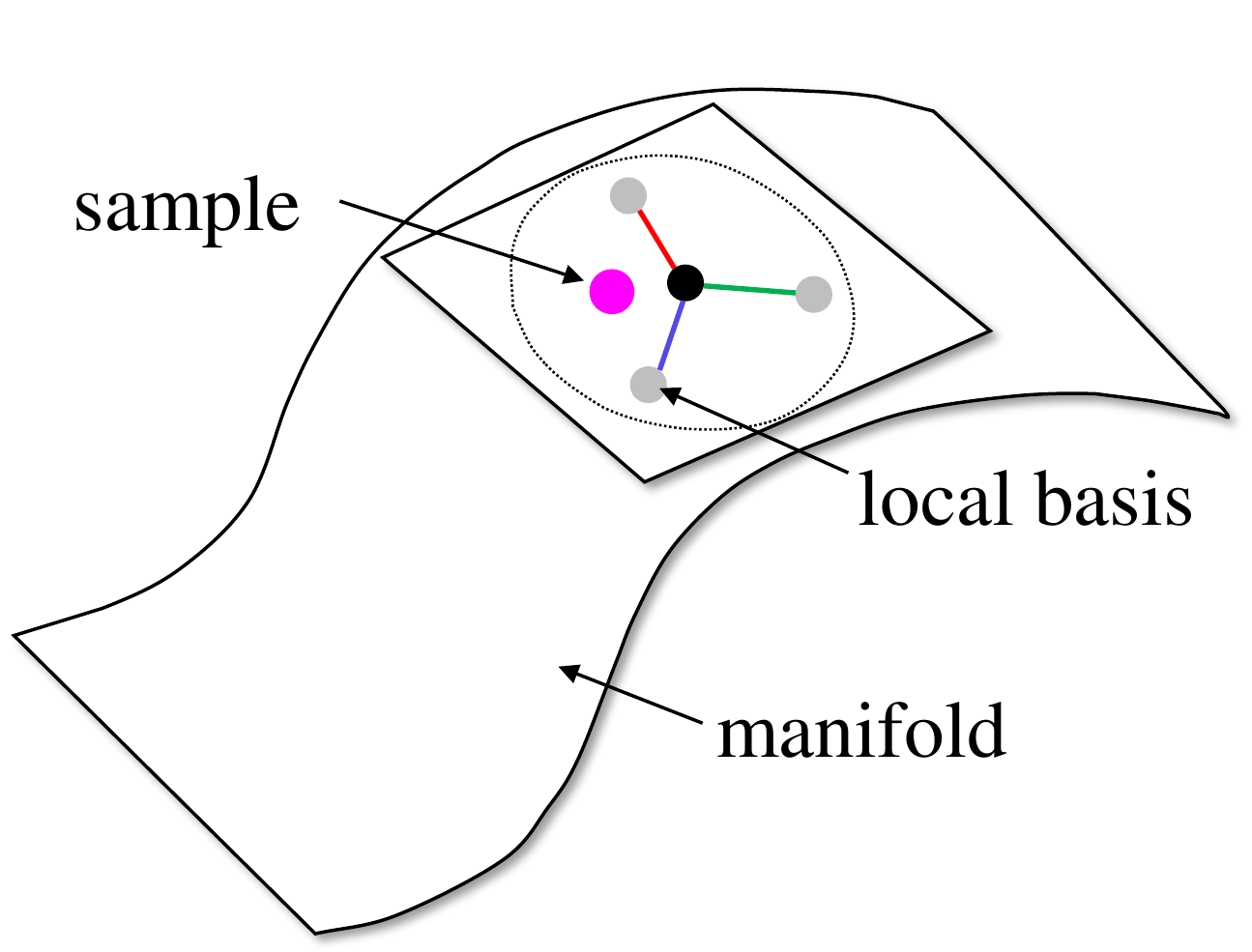}
		}
		\caption{Comparisons of the global and local coordinate system. 
			(a) In the global coordinate system, most GANs use a global cooridinate coding as an input to generate data.
			In this way, it is hard to learn the underlying geometry of real data. Therefore, they often sample meaningless points in such a global coordinate system.
			(b) In the local coordinate system, our LCCGANs learn a set of local bases on the manifold to sample new points. Then, they are able to learn the underlying geometry and capture the local information of real data. As a result, they can sample a new point with the semantic information.}
		\label{Fig: comparison_GCC_LCC}
	\end{figure}

	\section{Preliminaries} \label{sec:preliminaries}
	\textbf{Notation.}
	Throughout the paper, we use the following notations.
	Specifically, we use bold lower-case letters (\eg $ \bx $) to denote vectors and bold upper-case letters (\eg $ \bX $) to denote matrices, and we use calligraphic letters (\eg $\mX$) for a set or a space.
	Let $\mu$ and $\nu$ be distributions.
	We denote by the superscript $ ^{\trsp} $ the transpose of a vector or matrix, 
	and denote by $ \| \cdot \| $ the Euclidean norm ($ \ell_2 $-norm) on $ \mmR^d $, \ie 
	$ \| \bx \| {=} \| \bx \|_2 {=} (\sum_i x_i^2)^{1/2} $.

	\subsection{Local Coordinate Coding}
	Based on the manifold assumption on images~\cite{tenenbaum2000global, roweis2000nonlinear}, each data point $\bx$ on the manifold can be locally approximated by a linear combination of its nearby bases, and the linear weights become its local coordinate coding (LCC) \cite{yu2009nonlinear}.
	Specifically,
	the coordinate coding can be defined as follows.

	\begin{deftn} \textbf{\emph{(Coordinate coding \cite{yu2009nonlinear})}} \label{definition: Coordinate Coding}
		A coordinate coding is a pair $ (\bgamma, \mC) $, where $ \mC \subset \mmR^d $ is a set of anchor points (\ie bases), 
		and let $ \gamma $ be a map of $ \bx \in \mmR^d $ to $ \left[ \gamma_{\bv} (\bx) \right]_{\bv \in \mC} \in \mmR^{|\mC|} $ such that $ \sum_{\bv} \gamma_{\bv} (\bx) = 1 $, and the linear approximation of $ \bx \in \mmR^d $ is defined as
		\begin{align}
			\br(\bx) := \sum_{\bv \in \mC} \gamma_{\bv} (\bx) \bv. 
		\end{align}
	\end{deftn}
	
	When a data point lies on a manifold, and the bases are sufficiently localized, such data point can be approximated by a linear combination of the anchor points \cite{yu2009nonlinear}.
	In practice, such anchor points (\ie local bases) form a local coordinate system to approximate data points.
	
	In addition, we employ some useful properties (\eg Lipschitz smoothness) of a function to develop our method when the data points are in a local region. Specifically, the Lipschitz smoothness of a function can be defined as follows.
	\begin{deftn} \textbf{\emph{(Lipschitz smoothness \cite{yu2010improved})}} \label{def:Lip_smooth}
		A function $ f(\bx) $ in $ \mmR^d $ is $ (L_{\bx}, L_{f}, L_{\nu}) $-Lipschitz smooth if
		\begin{enumerate}[leftmargin=*]
			\item 
			$ \| \nabla f(\bx)^{\trsp} (\bx' - \bx) \| {\leq} L_{\bx} \| \bx {-} \bx' \| $, 
			\item 
			$ \| f(\bx') {-} f(\bx) {-} \nabla f(\bx)^\trsp (\bx' {-} \bx) \| {\leq} L_{f} \| \bx {-} \bx' \|^2 $,
			\item 
			$ \| f(\bx') {-} f(\bx) {-} \frac{1}{2} \left( \nabla f(\bx') {+} \nabla f(\bx) \right)^{\trsp} \!\!(\bx' {-} \bx) \| {\leq} L_{\nu} \| \bx {-} \bx' \|^3 $,
		\end{enumerate}
		where $ L_{\bx}, L_{f}, L_{\nu} {>} 0 $.
	\end{deftn}
	
	In Definition \ref{def:Lip_smooth}, the Lipschitz constants $ L_{\bx} $, $ L_f $ and $ L_{\nu} $ are finite if the function $ f(\bx) $, the derivative $ \nabla f(\bx) $ and the Hessian of $ f(\bx) $ are Lipschitz smooth, respectively.
	These constants measure the of smoothness of $ f(\bx) $ at different levels \cite{yu2010improved}.  
	
	\begin{figure}[t]
		\centering
		\subfigure[{Local function approximation}]
		{
			\label{fig:LCC_a}
			\includegraphics[width = 0.47\columnwidth]{./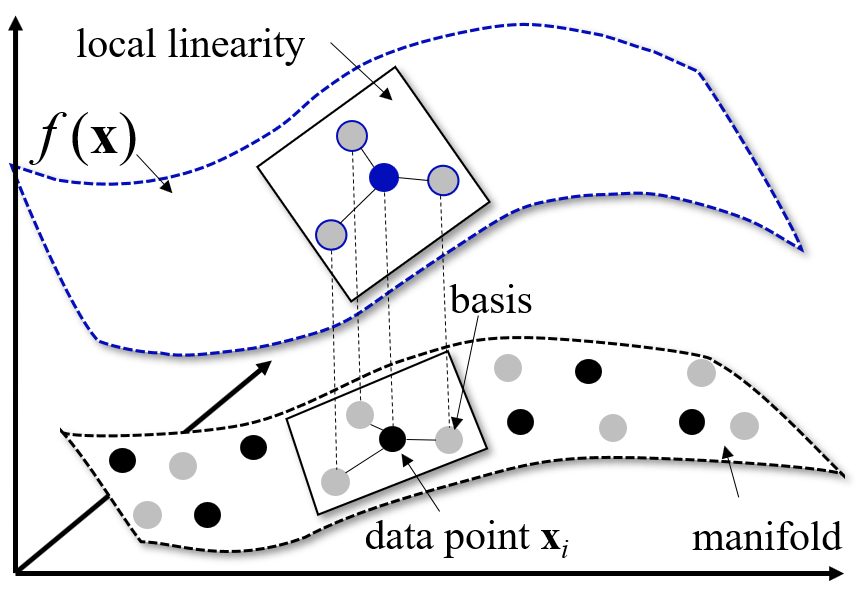}
		}
		\subfigure[Global function approximation]{
			\label{fig:LCC_b}
			\includegraphics[width = 0.47\columnwidth]{./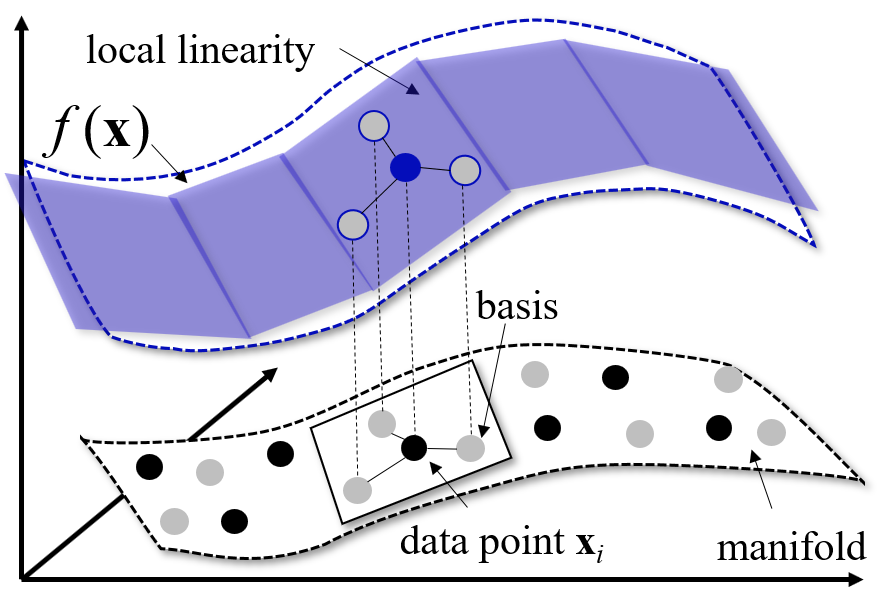}
		}
		\caption{A geometric view of the function approximation using the local coordinate coding. Given a set of local bases, if data lie on a manifold, a nonlinear function $ f(\bx) $ can be locally approximated by a linear function \wrt the local coordinate coding. Given all bases, we learn many local coordinate systems on the manifold, then the function $ f(\bx) $ can be globally approximated. }
		\label{fig:LCC}
	\end{figure}

	\subsection{Latent Manifold and Data Approximation}
	Based on the manifold assumption, high-dimensional data (\eg images) in the real world often lie on some low dimensional manifold \cite{tenenbaum2000global, roweis2000nonlinear}.
	Formally, the latent manifold and its intrinsic dimensionality can be defined as follows.
	
	\begin{deftn} \textbf{\emph{(Latent manifold \cite{yu2009nonlinear})}} \label{definition: manifold}
		A subset $ \mM $ embedded in the latent space $ \mmR^{d_{B}} $ is called a latent manifold with an \textbf{intrinsic dimension} $ d := d_{\mM} $, if there exists a constant $ c_{\mM} $, such that given any $ \bh \in \mM $, there exist $ d $ bases (tangent directions) $ \bv_1(\bh), \ldots, \bv_{d}(\bh) \in \mmR^{d_B} $ such that $ \forall\; \bh' \in \mM:$
		\begin{align}
			\inf_{\bgamma \in \mmR^{d}} \left\| \bh' - \bh - \sum_{j=1}^{d} \gamma_j \bv_j(\bh) \right\| \leq c_{\mM} \| \bh' - \bh \|^{2},
		\end{align}
		where $ \bgamma = [\gamma_1, \ldots, \gamma_d]^{\trsp} $ is the local coding of a latent point $ \bh $ using the corresponding bases.
	\end{deftn}
	
	According to Definition~\ref{definition: manifold}, one can learn a latent manifold $ \mM $ embedded in the latent space $ \mmR^{d_B} $ to build a relationship between the latent distribution and the data distribution. In this sense, we are able to generate promising images by sampling new points in the latent manifold. However, how to learn a good latent manifold is still an important problem.
	
	\subsection{Generative Adversarial Networks}
	Existing studies~\cite{goodfellow2014gans, arjovsky2017wasserstein} use the Jensen-Shannon divergence and Wasserstein distance to measure the similarity between two different distributions. However, these measures cannot generalize with any polynomial number of examples \cite{arora2017gans}.
	To guarantee the generalization performance of GANs, we apply the following neural network distance \cite{arora2017gans} to measure the divergence between two distributions.
	
	\begin{deftn} \textbf{\emph{(Neural network distance \cite{arora2017gans}) }} \label{definition: F_distance}
		Let $ \mF $ be a set of neural networks
		from $ \mmR^d $ to $ [0, 1] $ and $ \phi $ be a concave measure function; then, for $ D \in \mF $, the neural network distance \wrt $ \phi $ between two distributions $ \mu $ and $ \nu $ can be defined as
		\begin{equation}
		\begin{aligned}
		d_{\mF{,} \phi} (\mu{,} \nu) = {\sup\limits_{D {\in} \mF}} &\left| \mathop \mmE\limits_{\bx {\sim} \mu} \left[ \phi(D(\bx)) \right] \right. \\
		&\left.{+} \mathop \mmE\limits_{\bx {\sim} \nu} \left[ \phi(1 - {D}(\bx)) \right] \right| {-} 2 \phi \left(\frac{1}{2}\right),
		\end{aligned}
		\end{equation}
	\end{deftn}
	where $ 2 \phi({1 \mathord{\left/ {\vphantom {1 2}} \right. \kern-\nulldelimiterspace} 2}) $ is a constant with the given function $ \phi(\cdot) $. For simplicity, we omit this term in practice. 
	
	\vspace{10pt}
	\textbf{Objective function of general GANs. }
	Let $ G_{u} $ be a generator and $ D_{v} $ be a discriminator, where $ u \in \mU $ and $ v \in \mV $ are their parameters, and $ \mU $ and $ \mV $ are parameter spaces.
	Based on the definition of the neural network distance, the objective function of GANs can be defined as
	\begin{align}
		\mathop {\min}\limits_{u \in \mU} \mathop {\max}\limits_{v \in \mV} \mathop\mmE_{\bx \sim \mD_{\real}} \left[ \phi (D_{v}(\bx)) \right] + \mathop\mmE_{\bx \sim \mD_{G_{u}}} \left[ \phi (1 - D_{v}(\bx)) \right],
	\end{align}
	where $ \mD_{\real} $ is the real distribution and $ \mD_{G_{u}} $ is the distribution generated by $ G_{u} $, and $ \phi: [0, 1] \rightarrow \mmR $ is any monotone function. 
	For example, when $ \phi(t) {=} \log(t) $ and $ \mF {=} \{f{:\;} \bx {\to} [0, 1] \} $, then minimizing $ d_{\mF, \phi} (\mu, \nu) $ is equivalent to the original GAN objective.
	When $ \phi(t) {=} t $, $ f {\in} \mF $ and $ f $ is $ 1 $-Lipschitz, then $ d_{\mF, \phi} (\mu, \nu) $ corresponds to the Wasserstein distance. 
	

	\begin{figure*}[htp]
		\centering
		{
			\includegraphics[width=0.75\linewidth]{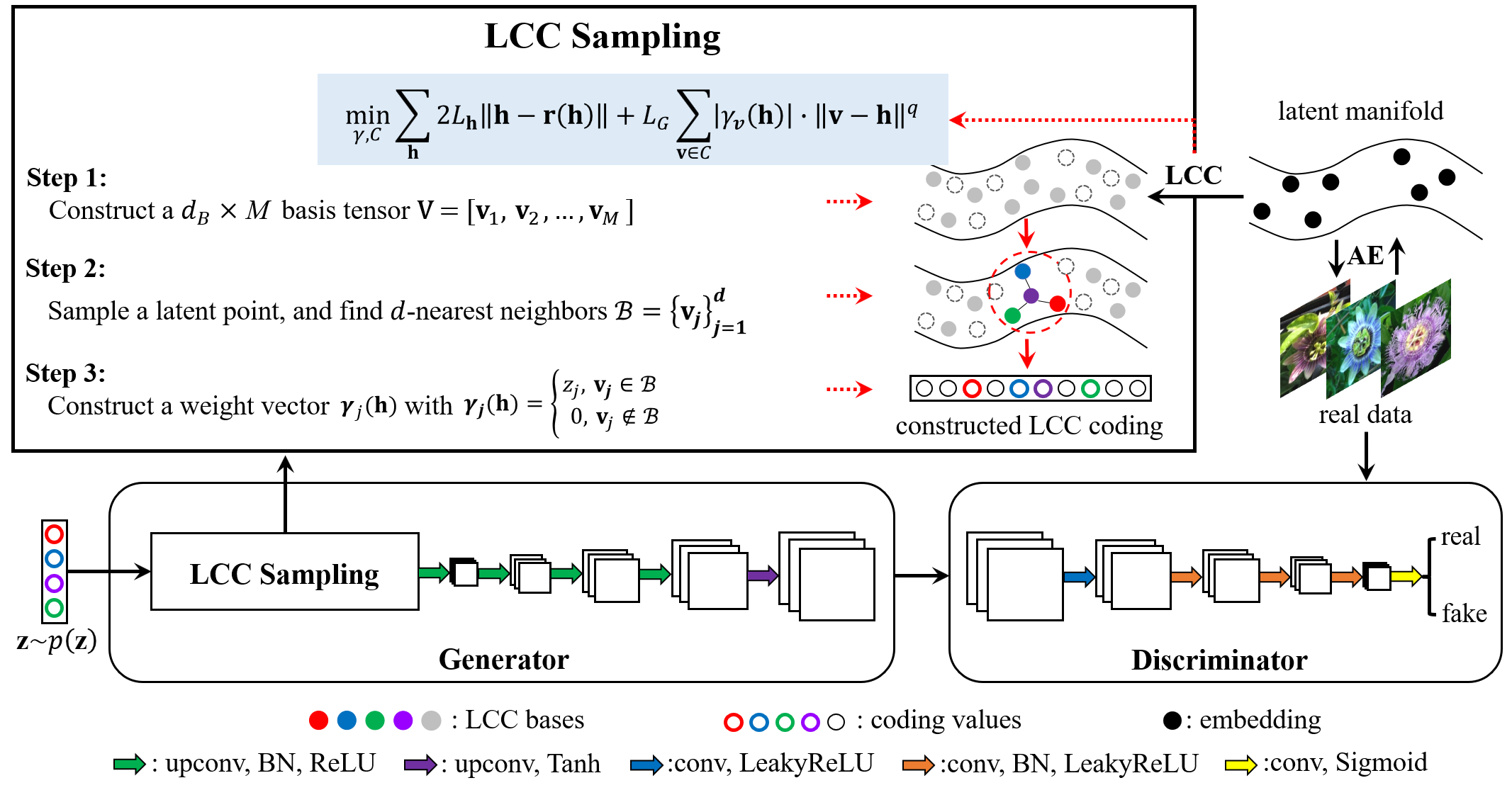}
			\caption{The scheme of the proposed method. We use an autoencoder to learn the embeddings on the latent manifold from real data.
				Then, we minimize the objective function of LCC with different $q$ to learn a set of bases such that the LCC sampling can be conducted. 
				Specifically, we train LCCGAN with $q{=}2$ and LCCGAN++ with $q{=}3$.
				Last, LCCGAN takes the constructed LCC codings as an input to generate new data. }
			\label{Fig: AE_LCC_}
		}
	\end{figure*}
	
	\section{Generative Models with LCC} \label{sec:lccgan_v1}
	Most existing methods \cite{goodfellow2014gans, arjovsky2017wasserstein} employ a global coordinate system to generate data. However, these methods often sample meaningless points in such global coordinate system. Besides, it is hard to exploit the underlying geometry and the local information of data.
	
	To address the above issues, we seek to improve GANs with LCC. The overall structure of the proposed method, called LCCGAN, is illustrated in Fig. \ref{Fig: AE_LCC_}.
	Specifically, we use an autoencoder (AE) to learn the embeddings over a latent manifold of real data and then employ LCC to learn a set of bases to form local coordinate systems. After that, we introduce 
	LCC into GANs by approximating the generator using a linear function \wrt a set of codings (See Section \ref{sebsec:G_appro}).
	Relying on such an approximation, we propose an LCC-based sampling method to exploit the local information of data (See Section \ref{subsec:lcc_sampling}). 
	
	\subsection{Generator Approximation Based on LCC} \label{sebsec:G_appro}
	
	Based on Definition \ref{definition: manifold}, any point on the latent manifold can be approximated by a linear combination of a set of local bases.
	Inspired by this, if the bases are sufficiently localized, the generator of GANs can also be approximated by a linear function \wrt a set of codings.
	Therefore, we approximate the generator as follows.
	
	\begin{lemma} ~\textbf{\emph{(Generator Approximation \cite{cao2018adversarial})}} \label{lemma: Generator_Approximation_1}
		Let $ \br(\bh) = \sum_{\bv} \gamma_{\bv} (\bh) \bv $,
		and $ (\bgamma, \mC) $ be an arbitrary coordinate coding.
		Given an $ (L_{\bh}, L_{G}) $-Lipschitz smooth generator $ G_{u} $, for all $ \bh \in \mmR^{d_B} $:
		\begin{equation}\label{eqn:G_appro}
		\begin{aligned}
		&\left\| G_{u}\left(\sum\nolimits_{\bv \in \mC} \gamma_{\bv}(\bh) \bv\right) {-} \sum\nolimits_{\bv \in \mC} \gamma_{\bv} (\bh) G_{u}(\bv) \right\|  \\ 
		\leq& 2L_{\bh} \| \bh {-} \br(\bh) \| {+} L_G \sum\nolimits_{\bv \in \mC} |\gamma_{\bv} (\bh)| {\cdot} \| \bv {-} \br(\bh) \|^{2}.
		\end{aligned}
		\end{equation}
	\end{lemma}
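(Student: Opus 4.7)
The plan is to insert the anchor point $G_u(\bh)$ and split the error into two parts, then exploit the two defining identities of LCC, namely $\sum_\bv \gamma_\bv(\bh)=1$ and $\sum_\bv \gamma_\bv(\bh)\bv=\br(\bh)$, to cancel the first-order cross terms that arise from Taylor expansion. Concretely, by the triangle inequality,
\begin{equation*}
\left\| G_u(\br(\bh)) - \sum_{\bv\in\mC}\gamma_\bv(\bh)G_u(\bv)\right\| \leq \bigl\| G_u(\br(\bh))-G_u(\bh)\bigr\| + \Bigl\| G_u(\bh)-\sum_{\bv\in\mC}\gamma_\bv(\bh)G_u(\bv)\Bigr\|.
\end{equation*}
The first summand is handled directly by condition~(1) of Definition~\ref{def:Lip_smooth} interpreted as a Lipschitz bound on $G_u$, giving $\|G_u(\br(\bh))-G_u(\bh)\|\leq L_\bh\|\bh-\br(\bh)\|$. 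This will contribute one copy of $L_\bh\|\bh-\br(\bh)\|$ to the final bound.

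For the second summand, I would use $\sum_\bv \gamma_\bv(\bh)=1$ to rewrite $G_u(\bh)-\sum_\bv\gamma_\bv(\bh)G_u(\bv)=\sum_\bv \gamma_\bv(\bh)[G_u(\bh)-G_u(\bv)]$, and then Taylor-expand each $G_u(\bv)$ around $\br(\bh)$ (rather than around $\bh$). Using condition~(2) of Definition~\ref{def:Lip_smooth}, I get
\begin{equation*}
G_u(\bv) = G_u(\br(\bh)) + \nabla G_u(\br(\bh))^{\trsp}(\bv-\br(\bh)) + R_\bv,\qquad \|R_\bv\|\leq L_G\|\bv-\br(\bh)\|^2.
\end{equation*}
Substituting back and taking the weighted sum, the constant term $G_u(\bh)-G_u(\br(\bh))$ survives (because $\sum_\bv \gamma_\bv(\bh)=1$), the gradient term is annihilated by the identity $\sum_\bv\gamma_\bv(\bh)(\bv-\br(\bh))=0$, and only the weighted remainders $-\sum_\bv \gamma_\bv(\bh)R_\bv$ are left. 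Bounding these by the triangle inequality yields
\begin{equation*}
\Bigl\|G_u(\bh)-\sum_{\bv\in\mC}\gamma_\bv(\bh)G_u(\bv)\Bigr\| \leq L_\bh\|\bh-\br(\bh)\| + L_G\sum_{\bv\in\mC}|\gamma_\bv(\bh)|\cdot\|\bv-\br(\bh)\|^2,
\end{equation*}
where the first term again comes from applying condition~(1) to $G_u(\bh)-G_u(\br(\bh))$. Adding this to the bound on the first summand produces exactly $2L_\bh\|\bh-\br(\bh)\|+L_G\sum_\bv|\gamma_\bv(\bh)|\|\bv-\br(\bh)\|^2$ as claimed.

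The only delicate step is the choice of expansion point: if I were to expand around $\bh$ instead of $\br(\bh)$, the residual would appear as $\|\bv-\bh\|^2$ and the first-order term $\nabla G_u(\bh)^{\trsp}(\br(\bh)-\bh)$ would not vanish, producing an unwanted cross term that cannot be absorbed cleanly. Expanding around $\br(\bh)$ is exactly what lets the LCC identity $\sum_\bv \gamma_\bv(\bh)(\bv-\br(\bh))=0$ eliminate the linear contribution and leave a pure quadratic residual localized at the anchor. Once this bookkeeping is settled, the rest of the argument is a direct application of the two Lipschitz-smoothness conditions, with the factor of $2$ in front of $L_\bh\|\bh-\br(\bh)\|$ being the sum of the two first-order contributions from the two split pieces.
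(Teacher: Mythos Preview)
Your proposal is correct and follows essentially the same route as the paper: split at $G_u(\bh)$ via the triangle inequality, bound $\|G_u(\br(\bh))-G_u(\bh)\|$ by the first Lipschitz condition, and for the second piece Taylor-expand each $G_u(\bv)$ around $\br(\bh)$ so that the LCC identities $\sum_\bv\gamma_\bv(\bh)=1$ and $\sum_\bv\gamma_\bv(\bh)(\bv-\br(\bh))=0$ kill the constant-shift and gradient terms, leaving only the quadratic remainders. The paper packages the second computation inside an auxiliary proposition for $D_v\circ G_u$ and then specializes $D_v$ to the identity, but the underlying estimate is exactly yours.
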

	In Lemma \ref{lemma: Generator_Approximation_1},
	given the local bases and a Lipschitz smooth generator, the generator \wrt the linear combination of the local bases can be approximated by the linear combination of the generator \wrt local bases.
	In general, two close latent points often share the same local bases but with different weights (\ie local codings), we thus can simply change these weights to approximate the generator.
	In this way, the pieces of generated data are able to cover the entire manifold seamlessly (See Fig. \ref{fig:LCC_b}).
	
	\subsection{Objective Function of LCCGAN} \label{sebsec:objective}
	
	Based on the generator approximation, we propose a learning method by exploiting LCC coding to train GAN models. Specifically, we first learn the LCC coordinate system. Then, we propose the training objective {for} the LCCGAN models. 
	
	\vspace{5pt}
	\textbf{Learning LCC systems.}
	In Step 1 of Fig. \ref{Fig: AE_LCC_}, we show an illustration of how to construct bases to form LCC systems.
	We first learn an autoencoder to extract the embeddings (\ie black points) from real data and map them to a latent manifold. Then, based on the extracted embeddings, we seek to use LCC by learning a set of bases to represent the manifold. In this way, any point located on the manifold of embeddings can be represented by the coordinate system constructed using these bases~\cite{yu2009nonlinear}.
	
	To learn the bases (\ie gray points in Fig. \ref{Fig: AE_LCC_}), we optimize the objective function of LCC, \ie we minimize the localization measure to obtain a set of local bases.
	Specifically, given a set of the latent points $ \{ \bh_i \}_{i=1}^N $, by assuming $ \bh {\approx} \br(\bh) $ \cite{yu2009nonlinear}, we seek to address the following optimization problem:
	\begin{align}
		\min_{\bgamma, \mC}& \sum\nolimits_{\bh} 2L_{\bh} \| \bh - \br(\bh) \| {+} L_G \sum\nolimits_{\bv \in \mC} |\gamma_{\bv} (\bh)| {\cdot}\| \bv - \bh\|^{2}, \nonumber\\
		\st& \sum\nolimits_{\bv \in \mC} \gamma_{\bv} (\bh) = 1, \; \forall \,\bh, \label{problem: lcc coding}
	\end{align}
	where $ \bh $ denotes an embedding learned by an autoencoder from real data, $ \mC $ denotes the set of local bases, and $ \br(\bh) = \sum_{\bv \in \mC} \gamma_{\bv} (\bh) \bv $.
	In practice, we normalize the weights $ \bgamma $ to the sum of 1 during the training, and update $ \bgamma $ and $ \mC $ by alternately optimizing a LASSO problem and a least-square regression problem, respectively.
	After optimizing Problem (\ref{problem: lcc coding}), we can construct the local bases on the latent manifold.
	
	\begin{figure}[t]
		\centering
		{
			\includegraphics[width=1\linewidth]{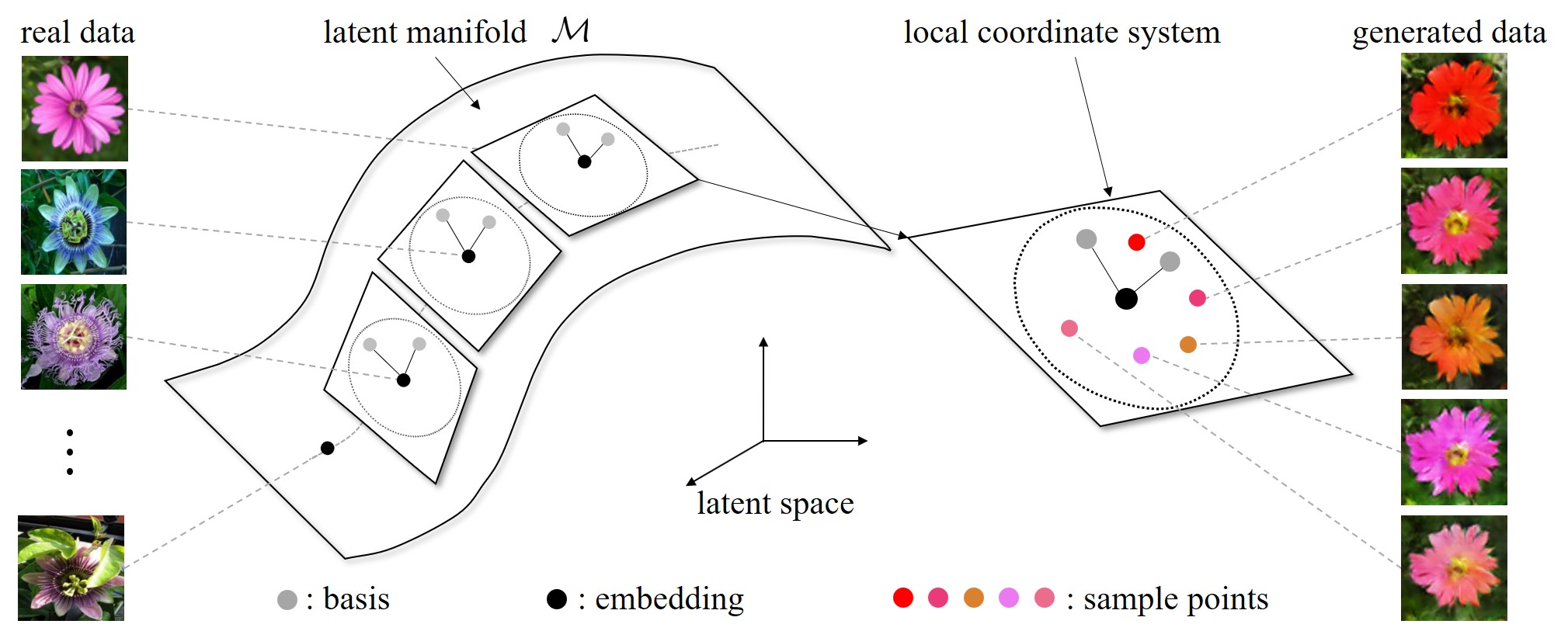}
			\caption{The geometric views on the LCC sampling method. By learning embeddings (\textit{i.e.}, black points) that lie on the latent manifold, we use LCC to learn a set of bases (\textit{i.e.}, gray points) to form a local coordinate system such that we can sample different latent points (\textit{i.e.}, colored points) by LCC sampling. As a result, our proposed method can generate new data that have different attributes. }
			\label{Fig: LCC_Sampling}
		}
	\end{figure}
	
	\vspace{5pt}
	\textbf{Training LCCGAN.}
	After solving Problem (\ref{problem: lcc coding}), every latent point $ \bh {\in} \mmR^{d_B} $ would be close to its physical approximation $ \br(\bh) $, \textit{i.e.}, $ \bh {\approx} \br(\bh) $, then the generator can be approximated by
	\begin{align}\label{eqn: generator}
		G_{u}(\bh) \approx G_{u}(\br(\bh)) \triangleq G_{w} (\bgamma(\bh)), \bh \in \mH,
	\end{align}
	where $ \br(\bh) = \bV \bgamma(\bh) $, $ \bV = \left[ \bv_1, \bv_2, \ldots, \bv_M \right] $ and $ \bgamma(\bh) = \left[ \gamma_1(\bh), \gamma_2(\bh), \ldots, \gamma_M(\bh) \right]^{\trsp} $ with $ M = |\mC| $. Here, $ \mH $ is the latent distribution and $ w {\in} \mW $ are the parameters of the generator \wrt $ u $ and fixed $ \bV $ learned from Problem (\ref{problem: lcc coding}).
	Note that the input of the generator $ G_{w} (\bgamma(\bh)) $ in this paper is local coordinate coding, which is different from other GANs. 
	
	According to Definition \ref{definition: F_distance}, we apply the neural network distance to measure the divergence between the generated distribution and the empirical distribution.
	Specifically, given the generator $ G_{w} (\bgamma(\bh)) $, we consider optimizing the following objective function for LCCGAN:
	\begin{align}\label{problem: LCCGANgan}
		\min_{G_{w} \in \mG} \; d_{\mF, \phi} \left(\widehat\mD_{G_{w} (\bgamma(\bh))}, \widehat\mD_{\real} \right), \bh \in \mH,
	\end{align}
	where $ \mG $ is the class of generators, $ \widehat\mD_{G_{w}} $ is the empirical distribution generated by $ G_{w} $, and $ \widehat\mD_{\real} $ is the real distribution.
	Specifically, Problem (\ref{problem: LCCGANgan}) can be rewritten as:
	\begin{align*}
		\mathop{\min}_{w {\in} \mW} \mathop{\max}\limits_{v {\in} \mV}
		\mathop \mmE_{\bx {\sim} \widehat{\mD}_{\real}} \left[ \phi (D_{v}(\bx)) \right]
		{+} \mathop{\mmE}_{\bh {\sim} \mH} \left[ \phi \left(1{-} {D}_v\left(G_{w}\left(\bgamma(\bh) \right) \right) \right) \right],
	\end{align*}
	where $ \phi(\cdot) $ is a monotone function. 
	Then, the objective function can be used in different GANs, such as DCGAN \cite{radford2015unsupervised} and WGAN-GP~\cite{gulrajani2017improved}.
	The detailed algorithm is shown in Algorithm \ref{alg:lccgan}.

	\subsection{LCC Sampling Method} \label{subsec:lcc_sampling}
	To solve Problem (\ref{problem: LCCGANgan}), one of the key issues is how to conduct sampling from the learned latent manifold.
	Although the latent manifold can be learned by an autoencoder, it is difficult to sample valid points on it to train GANs.
	To address this, we propose an LCC sampling method to capture the latent distribution on the learned latent manifold (See Fig.~\ref{Fig: LCC_Sampling}). 
	The proposed sampling method contains the following three steps.
	
	\vspace{5pt}
	\noindent\textbf{Step 1:}
	Given a local coordinate system, we construct an $d_{B} {\times} M$  matrix $\bV {=} [ \bv_1, \bv_2, \ldots, \bv_M] $ as the local bases. Here, each basis $\bv_i$ is a $d_{B}$-dimensional vector and $M$ is the number of bases.
	
	\vspace{3pt}
	\noindent\textbf{Step 2:}
	With the learned local bases $\bV$, we randomly sample a latent point (specifically, it can be a basis), and then find its $d$-nearest neighbors $ \mB = \{ \bv_j\}_{j=1}^d $.
	
	\begin{algorithm}[t]
		\caption{Training Method for LCCGAN.}
		\label{alg:lccgan}
		\begin{algorithmic}[1]
			\begin{small}
				\REQUIRE  Training data $\{ \bx_i \}_{i=1}^N $; a prior distribution $p(\bz)$, where $\bz \in \mmR^d  $; minibatch size $ n $; $q=2$ or $q=3$.\\
				\STATE Learn the latent manifold $\mM$ using an autoencoder \\
				\STATE Construct LCC bases $\{ \bv_i \}_{i=1}^M$ on $\mH$ by optimizing: \\
				\vspace{5pt}
				~~$ {\min_{\bgamma, \mC}}~ {\sum_{\bh}} 2L_{\bh} \| \bh - \br(\bh) \| + L_G {\sum_{\bv \in \mC}} |\gamma_{\bv} (\bh)| {\cdot} \| \bv - \bh \|^{q}$ \\
				\vspace{5pt}
				\FOR{number of training iterations}
				\STATE Do LCC Sampling to obtain a minibatch $\{ \gamma(\bh_i)\}_{i=1}^n$ \\ 
				\STATE Sample a minibatch $\{\bx_i\}_{i=1}^n$ from the data distribution \\
				\STATE Update the discriminator by ascending the gradient: \\
				\vspace{5pt}
				~~~~~~~$\nabla_v \frac{1}{n} \sum\nolimits_{i=1}^n \phi(D_v(\bx_i)) + \phi( (1-D_v(G_w(\gamma(\bh_i)))))$\\
				\vspace{5pt}
				\STATE Do LCC Sampling to obtain a minibatch $\{ \gamma(\bh_i)\}_{i=1}^n$ \\
				\STATE Update the generator by descending the gradient: \\
				\vspace{5pt}
				~~~~~~~~~~~~~~~~~~$\nabla_w \frac{1}{n} \sum\nolimits_{i=1}^n \phi (1-D_v(G_w(\gamma(\bh_i))))$\\
				\ENDFOR
			\end{small}
		\end{algorithmic}
	\end{algorithm}
	
	\vspace{3pt}
	\noindent
	\textbf{Step 3:}
	To conduct the local sampling method, we construct an $M$-dimensional vector $\bgamma(\bh) = [ \gamma_1(\bh), \gamma_2(\bh), \ldots, \gamma_M(\bh)]^{\trsp}$
	as the LCC coding. 
	The weight $\gamma_j(\bh)$ for the $j$-th element of $\bgamma(\bh)$ can be computed as follows:
	\begin{align}
		\gamma_{j} (\bh) =
		\left\{ \begin{array}{l}
			{z_j},\;\;{\bv_j} \in {\mB}\\
			\; 0\,,\;\;{\bv_j} \notin {\mB}
		\end{array} \right.,
		\label{eqn:gamma_local_base}
	\end{align}
	where $ z_j $ is the $ j $-th element of $ \bz {\in} \mmR^d $ from the prior distribution $ p(\bz) $.
	Here, we set $ p(\bz) $ to be the Gaussian distribution $ \mN(\0, \bI) $ and normalize the sum of $ \bgamma(\bh) $ to be 1 in the training, \ie $\sum_{j} \gamma_{j} (\bh) {=} 1$.
	In this paper, we use Gaussian distribution for two reasons. 
	First, Gaussian distribution is an available way for sampling, which has been widely used in many GANs~\cite{arjovsky2017wasserstein, goodfellow2014gans}.
	In Fig. \ref{Fig: LCC_Sampling}, given the latent manifold, we employ LCC to form local coordinate systems over the latent manifold, \ie built with a set of local bases (\ie gray points).
	In the local coordinate system, we use Gaussian distribution to sample a new point $\bV \bgamma(\bh)$ (\ie colored point) by specifying the weights for the local bases. 
	In this way, we can generate images by exploiting the local information of data.
	Second, by using Gaussian distribution for sampling, it is reasonable and fair to compare LCCGAN with other GANs. 
	The advantages of LCCGAN using the local coordinate system over other GANs can be found in Section \ref{sec:experiment}.
	
	Based on Definition \ref{definition: manifold}, the intrinsic dimensionality is determined by the number of bases in a local region. Thus, we turn the determination of the intrinsic dimension into an easier problem of selecting a sufficient number of local bases.

	\subsection{Effectiveness of LCC Sampling}
	
	We first discuss the relationship between LCC and the LCC sampling method. Then, we analyze the effect of LCC in GANs. 
	
	\textbf{Relationship between LCC and LCC sampling.}
	The LCC sampling method is closely related to LCC for two reasons.
	First, both of them rely on the local coordinate system.
	In Fig.~\ref{Fig: LCC_Sampling}, we learn a set of bases (\ie gray points) to form a local coordinate system by optimizing the objective function of LCC.  
	Second, both of them can effectively exploit the local information of real data.
	Based on the learned bases, we can use the proposed LCC sampling method to sample different points (\ie colored points) in a local area of the latent manifold. 
	
	\textbf{How does LCC improve GANs?}
	When introducing LCC into a GAN model, we can use the local coordinate system to exploit the local information of data, and thus improve the performance of GANs.
	In contrast, most GANs~\cite{goodfellow2014gans, arjovsky2017wasserstein} use a global coordinate system, which, however, would fail to capture the semantic information of real data.
	In this sense, they are possible to sample meaningless points. 
	To verify this, we show the advantage of the local coordinate system over the global coordinate system, as shown in Table \ref{tab:flowers}.

	\section{Generative Models with Improved LCC}
	\label{sec:lccgan_v2}
	When learning local coordinate systems, the linear combination of the generator \wrt the local bases may be far away from the manifold. As a result, the generator may sample a meaningful point such that the image quality is poor. To address this,
	we propose an enhanced GAN, called LCCGAN++, to improve the approximation of the generator.
	In the following, we first improve the generator approximation of LCCGAN, and then analyze the generalization performance.

	\subsection{Improved Generator Approximation} \label{sebsec:extend_G_appro}
	By minimizing the right-hand side of (\ref{eqn:G_appro}), the generator equipped with LCC~\cite{cao2018adversarial} has a small approximation error. 
	However, the local linear approximation may not necessarily be optimal when the generator is highly nonlinear.
	It means that many local bases are required to achieve better approximation.
	As suggested by \cite{yu2010improved}, the higher-order error term would have a better generator approximation.
	Thus, we can improve LCC by introducing a higher-order term.
	Then, we have the corresponding generator approximation in the following lemma.
	
	\begin{lemma} ~\textbf{\emph{(Improved generator approximation)}} \label{lemma:extend_G Appro}
		Let $ \br(\bh) = \sum_{\bv} \gamma_{\bv} (\bh) \bv $.
		Given an arbitrary coordinate coding $ (\bgamma, \mC) $ and an $ (L_{\bh}, L_{\nu}) $-Lipschitz smooth generator $ G_{u} $, for all $ \bh$:
		\begin{align}\label{eqn:ext_G_appro}
			&\left\| G_{u}\left(\br(\bh)\right) {-} \sum\limits_{\bv \in \mC} \gamma_{\bv}(\bh) \left( G_{u}(\bv) {+} \frac{1}{2} \nabla G_{u}(\bv)^{\trsp} (\bh {-} \bv) \right)  \right\|  \nonumber\\ 
			\leq& 2 L_{\bh} \| \bh {-} \br(\bh) \| {+} L_{\nu} \sum_{\bv \in \mC} |\gamma_{\bv} (\bh)| {\cdot} \| \bv {-} \br(\bh) \|^{3}.
		\end{align}
	\end{lemma}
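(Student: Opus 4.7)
The plan is to mirror the proof of Lemma \ref{lemma: Generator_Approximation_1} but replace the second-order Lipschitz condition (constant $L_G$ in condition (2) of Definition \ref{def:Lip_smooth}) with the symmetric third-order condition (constant $L_\nu$ in condition (3)), which is precisely what encodes the accuracy of a trapezoidal-type expansion. The extra $\tfrac{1}{2}\nabla G_u(\bv)^\trsp(\bh-\bv)$ summand in the statement is exactly the correction needed so that the resulting residual is a third-order quantity instead of a second-order one.

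First, I would split the target quantity by inserting $G_u(\bh)$ as an intermediate point and applying the triangle inequality, which gives the two pieces
\begin{align*}
I_1 &:= \bigl\|G_u(\br(\bh)) - G_u(\bh)\bigr\|, \\
I_2 &:= \Bigl\|G_u(\bh) - \textstyle\sum_{\bv \in \mC}\gamma_{\bv}(\bh)\bigl(G_u(\bv) + \tfrac{1}{2}\nabla G_u(\bv)^{\trsp}(\bh-\bv)\bigr)\Bigr\|.
\end{align*}
For $I_1$, I would integrate the directional derivative along the segment from $\br(\bh)$ to $\bh$ and invoke condition (1) of Definition \ref{def:Lip_smooth}, yielding $I_1 \leq L_{\bh}\|\bh - \br(\bh)\|$.

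For $I_2$, I would use $\sum_{\bv}\gamma_{\bv}(\bh) = 1$ to move $G_u(\bh)$ inside the sum, then symmetrize the one-sided gradient term by writing $\tfrac{1}{2}\nabla G_u(\bv)^{\trsp}(\bh-\bv) = \tfrac{1}{2}(\nabla G_u(\bh) + \nabla G_u(\bv))^{\trsp}(\bh-\bv) - \tfrac{1}{2}\nabla G_u(\bh)^{\trsp}(\bh-\bv)$. This splits each summand into two pieces: a fully symmetric third-order residual $R_{\bv} := G_u(\bh) - G_u(\bv) - \tfrac{1}{2}(\nabla G_u(\bh)+\nabla G_u(\bv))^{\trsp}(\bh-\bv)$, to which condition (3) applies directly giving $\|R_{\bv}\| \leq L_\nu \|\bh-\bv\|^3$; and a linear symmetry-breaking piece that can be pulled out of the sum, $\tfrac{1}{2}\nabla G_u(\bh)^{\trsp}\sum_{\bv}\gamma_{\bv}(\bh)(\bh-\bv) = \tfrac{1}{2}\nabla G_u(\bh)^{\trsp}(\bh-\br(\bh))$, which by condition (1) is bounded by $\tfrac{1}{2}L_{\bh}\|\bh - \br(\bh)\|$.

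Combining the three estimates produces a bound of the form $\tfrac{3}{2}L_{\bh}\|\bh-\br(\bh)\| + L_\nu\sum_{\bv}|\gamma_{\bv}(\bh)|\|\bh - \bv\|^3$, which I would then convert to the stated form using the triangle inequality $\|\bh-\bv\| \leq \|\bh-\br(\bh)\| + \|\br(\bh)-\bv\|$. The extra cross terms are absorbed into the coefficient of $L_{\bh}\|\bh-\br(\bh)\|$ by the LCC locality assumption $\bh \approx \br(\bh)$ already used to motivate Problem (\ref{problem: lcc coding}), yielding the constant $2$ in front of $L_{\bh}\|\bh-\br(\bh)\|$ and the clean cubic weights $\|\bv - \br(\bh)\|^3$. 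The main obstacle will be this final rewriting: the symmetrization naturally produces $\|\bh-\bv\|^3$, and I expect that keeping track of the constants without invoking a generic diameter bound is the most delicate step, which is precisely why the statement uses the slightly loose prefactor $2$ rather than $\tfrac{3}{2}$.
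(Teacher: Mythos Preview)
Your approach is essentially identical to the paper's: split via $G_u(\bh)$, bound $I_1$ by the first Lipschitz condition, then in $I_2$ symmetrize the one-sided gradient by adding and subtracting $\tfrac{1}{2}\nabla G_u(\bh)^{\trsp}(\bh-\bv)$, pull the resulting $\tfrac{1}{2}\nabla G_u(\bh)^{\trsp}(\bh-\br(\bh))$ out of the sum (bounded again by condition~(1)), and apply condition~(3) to each symmetric residual. The paper also obtains $\tfrac{3}{2}L_{\bh}\|\bh-\br(\bh)\|$ and simply relaxes it to $2L_{\bh}\|\bh-\br(\bh)\|$.

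One point worth flagging: the ``delicate'' conversion you anticipate at the end---going from $\|\bh-\bv\|^3$ to $\|\bv-\br(\bh)\|^3$---is not actually carried out in the paper's proof either. The paper's final displayed line reads $2L_{\bh}\|\bh-\bh'\| + L_{\nu}\sum_{\bv}\gamma_{\bv}(\bh)\|\bh-\bv\|^3$, with $\bh$ rather than $\br(\bh)$ in the cubic term, so the proof as written does not literally match the statement. Your instinct that this step cannot be done cleanly without invoking $\bh\approx\br(\bh)$ is correct; the paper tacitly treats the two as interchangeable (consistent with how the same substitution is used when deriving Problem~(\ref{problem: lcc coding})). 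So you should not spend effort trying to make that conversion rigorous with exact constants---the paper does not, and the downstream use of the lemma (Definition~\ref{def:local_measure_v2} and Lemma~\ref{lemma: manifold_coding_v2}) is equally compatible with either form.
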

	
	In Lemma \ref{lemma:extend_G Appro}, the generator \wrt the linear combination of the local bases can be approximated by introducing gradient directions.
	Compared the right-hand side of (\ref{eqn:G_appro}) with (\ref{eqn:ext_G_appro}), the first term is similar and can be small when $ \bh $ can be well approximated by a linear combination of local bases, which happens when the manifold is relatively flat. 
	For the second term, the improved LCC has a higher-order term which enforces the learned bases to be close to the linear combination of the local bases.

	\subsection{Differences between LCCGAN and LCCGAN++}
	LCCGAN++ is different from LCCGAN in the following aspects. First, when the number of the local bases is insufficient, the linear combination of the generator \wrt the local bases would be far away from the manifold.
	As a result, we have the poor approximation of the generator. 
	Besides, the generated images of LCCGAN may have poor quality.
	Second, the generator \wrt the local bases can be transformed into the locally flat region approximately along the gradient of the generator.
	In this way, the linear combination of the generator \wrt the local bases would be close to the manifold.
	Therefore, with the linear combination of bases as input, we have a good generator approximation to generate realistic images.
	
	Compared with LCCGAN, our proposed LCCGAN++ mainly introduces a higher-order term to improve the approximation of the generator.
	Relying on this term, LCCGAN++ has more stable training behavior and achieves better generalization performance than LCCGAN.
	
	\begin{figure*}[h]
		\centering
		\subfigure[Generated samples with $d{=}3$. 
		The yellow and red boxes denote the similar generated digits with low diversity.
		]{
			\label{fig:mnist_low}
			\includegraphics[width = 2\columnwidth]{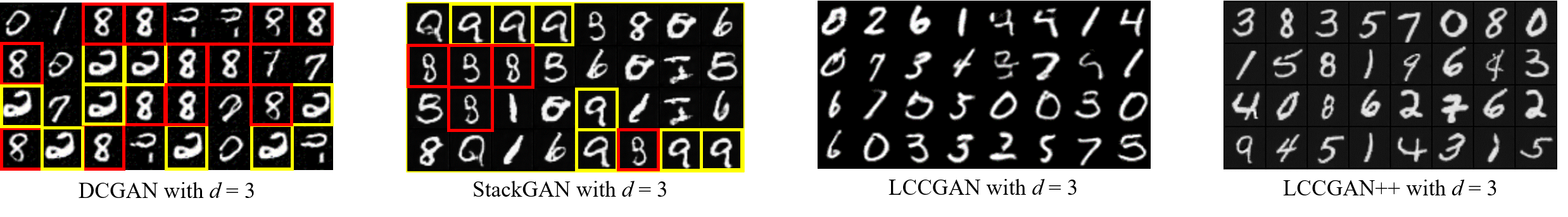}\label{fig:ratio}
		}
		\subfigure[Comparisons of different GANs with the input noise of $d{=}5$. Besides, DCGAN with $d{=}100$ is considered as the baseline.]{
			\label{fig:mnist_high}
			\includegraphics[width = 2\columnwidth]{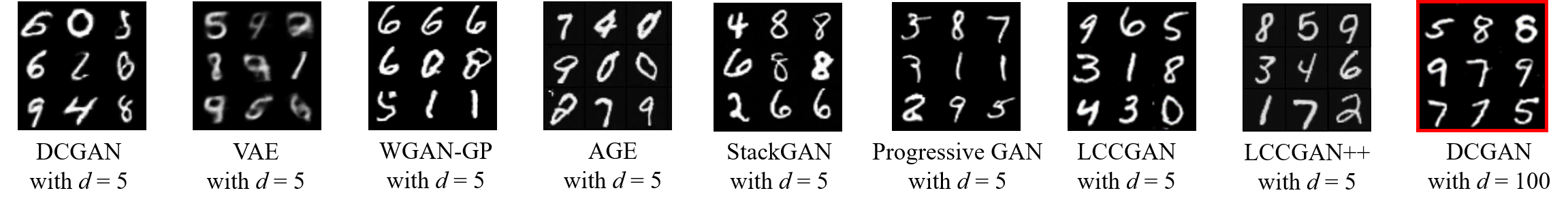}\label{fig:ratio_testing_error}
		}
		\caption{Performance comparisons of various GANs with different dimensions of the latent distribution on the MNIST dataset.}
		\label{fig:mnist}
	\end{figure*}

	\subsection{Theoretical Analysis}\label{section: LCCGAN Theoretical Analysis}
	We first provide some necessary notations. 
	Let $\{ \bx_i \}_{i=1}^N $ be a set of observed training samples drawn from the real distribution $ \mD_{\real} $, and let $ \widehat{\mD}_{\real} $ denote the empirical distribution over $\{ \bx_i \}_{i=1}^N $.  
	Let $ \mD_{G_u} $ be the generated distribution, and $ \widehat{\mD}_{G_{w}} $ be an empirical generated distribution.
	Motivated by \cite{arora2017gans, zhang2018on}, we define the generalization of GANs as follows:
	\begin{deftn} \textbf{\emph{(Generalization)}} \label{definition: generalization}
		The neural network distance $ d_{\mF, \phi}(\cdot, \cdot) $ between distributions generalizes with $ N $ training samples and error $ \epsilon $, if for a learned distribution $ \mD_{G_{u}} $, the following inequation holds with high probability,
		\begin{align}
			\left|d_{\mF, \phi} \left(\widehat{\mD}_{G_{w}}, {\mD}_{\emph{\real}} \right) - \inf\nolimits_{\mG } d_{\mF, \phi} \left({\mD}_{G_u},  {\mD}_{\emph{\real}} \right) \right| \leq \epsilon.
		\end{align}
	\end{deftn}
	
	From Definition \ref{definition: generalization}, the population distance $ d_{\mF, \phi} ({\mD}_{G_u},  {\mD}_{\real} ) $ shall be close to the distance $ d_{\mF, \phi} (\widehat{\mD}_{G_{w}}, {\mD}_{\real} ) $. In theory, we hope to obtain a small $ d_{\mF, \phi} ({\mD}_{G_u},  {\mD}_{\real} ) $ to ensure good generalization ability.  In practice, we can minimize the empirical loss $ d_{\mF, \phi} (\widehat{\mD}_{G_{{w}}}, \widehat{\mD}_{\real} ) $ to approximate  $ d_{\mF, \phi} (\widehat{\mD}_{G_{{w}}}, {\mD}_{\real} ) $.

	For LCCGAN~\cite{cao2018adversarial}, we have developed a generalization bound on $ \widehat{\mD}_{\real} $. In the following, we further analyze the generalization of LCCGAN++  relying on the improved generator approximation.

	\begin{thm} \label{theorem: Further Generalization Bound}
		Suppose that $ \phi(\cdot) $ is Lipschitz smooth, and bounded in $ [-\Delta, \Delta] $. Given an sample set $ \mH $ in the latent space and an empirical distribution $ \widehat{\mD}_{\emph{\real}} $ with $ N $ samples drawn from $ \mD_{\emph{\real}} $, the following inequation holds with probability at least $ 1 - \delta $,
		\begin{align}
			& \left| \mmE_{\mH} \left[d_{\mF, \phi} \left(\widehat{\mD}_{G_{\widehat{w}}}, {\mD}_{\emph{\real}} \right) \right]
			{-} \inf_{ \mG } \mmE_{\mH} \left[ d_{\mF, \phi} \left({\mD}_{G_u},  {\mD}_{\emph{\real}} \right) \right] \right| \nonumber\\
			\leq& 2 {R}_{\mX}(\mF) + 2 \Delta \sqrt{\frac{2}{N} \log\left(\frac{1}{\delta}\right)} + 2\epsilon(d_{\mM}),
		\end{align}
		where $ {R}_{\mX}(\mF) $ is the Rademacher complexity of $\mF$, the error term $ \epsilon(d_{\mM}) {=} L_{\phi} Q_{L_{\bh}, L_{\nu}} (\bgamma, \mC) + 2\Delta $, and $ Q_{L_{\bh}, L_{\nu}} (\bgamma, \mC) $ has an upper bound \wrt $ d_{\mM} $ which is given in Supplementary materials.
	\end{thm}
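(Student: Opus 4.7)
The plan is to combine a standard Rademacher-style generalization argument for the neural-network distance (in the spirit of Arora \etal \cite{arora2017gans}) with the improved generator approximation of Lemma~\ref{lemma:extend_G Appro} and the manifold property of Definition~\ref{definition: manifold}. Write $\widehat{w}$ for the minimizer of the empirical objective $d_{\mF,\phi}(\widehat{\mD}_{G_{w}(\bgamma(\cdot))}, \widehat{\mD}_{\real})$ and $u^\ast$ for a minimizer of $\inf_{\mG} \mmE_\mH[d_{\mF,\phi}(\mD_{G_u}, \mD_{\real})]$.

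\textbf{Step 1: Triangle decomposition.} As an integral probability metric over a uniformly bounded function class, $d_{\mF,\phi}$ satisfies a triangle inequality. Applying it together with the optimality of $\widehat{w}$ on the empirical objective, in both directions, yields
\begin{align*}
\left| d_{\mF,\phi}(\widehat{\mD}_{G_{\widehat{w}}}, \mD_{\real}) - \inf\nolimits_{\mG} d_{\mF,\phi}(\mD_{G_u}, \mD_{\real}) \right| \leq 2\, d_{\mF,\phi}(\widehat{\mD}_{\real}, \mD_{\real}) + 2\,\epsilon_{\mathrm{LCC}},
\end{align*}
where $\epsilon_{\mathrm{LCC}}$ denotes the worst-case gap between $G_u$ and its LCC surrogate $G_w(\bgamma(\bh))$ as seen through $\phi\circ D_v$.

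\textbf{Step 2: Rademacher concentration.} Since every function $\phi\circ D_v$ is bounded in $[-\Delta,\Delta]$, the bounded-differences (McDiarmid) inequality together with the standard symmetrization argument gives, with probability at least $1-\delta$,
\begin{align*}
d_{\mF,\phi}(\widehat{\mD}_{\real}, \mD_{\real}) \leq R_{\mX}(\mF) + \Delta\sqrt{\tfrac{2}{N}\log(1/\delta)}.
\end{align*}
Plugging this back into the decomposition of Step~1 accounts for the first two terms on the right-hand side of the theorem.

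\textbf{Step 3: LCC approximation and the $\epsilon(d_\mM)$ term.} Since $\phi$ is $L_\phi$-Lipschitz, for any $D_v \in \mF$ and any $\bh$,
\begin{align*}
|\phi(D_v(G_u(\bh))) - \phi(D_v(G_w(\bgamma(\bh))))| \leq L_\phi \,\|G_u(\bh) - G_w(\bgamma(\bh))\|.
\end{align*}
Invoking Lemma~\ref{lemma:extend_G Appro} on the right-hand side and taking expectation over $\bh \sim \mH$ produces $L_\phi Q_{L_\bh, L_\nu}(\bgamma,\mC)$; together with the additive $2\Delta$ that absorbs the $2\phi(1/2)$ constant dropped from the definition of $d_{\mF,\phi}$, this accounts for the $2\epsilon(d_\mM)$ term.

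\textbf{Main obstacle.} The ingredients used in Steps~1 and~2 (triangle inequality, McDiarmid, symmetrization) follow standard templates. The real work lies in Step~3 and in the accompanying supplementary bound on $Q_{L_\bh, L_\nu}(\bgamma,\mC)$: one must convert the pointwise cubic-residual bound of Lemma~\ref{lemma:extend_G Appro} into an explicit function of the intrinsic dimension $d_\mM$ via Definition~\ref{definition: manifold}, and verify that the new third-order term genuinely yields a tighter $d_\mM$-dependence than the quadratic term used for LCCGAN. This requires careful bookkeeping of how the density of local bases needed to cover a $d_\mM$-dimensional region interacts with the cubic residual $\|\bv - \br(\bh)\|^3$.
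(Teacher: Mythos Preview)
Your overall decomposition (triangle inequality to isolate $d_{\mF,\phi}(\widehat\mD_{\real},\mD_{\real})$, McDiarmid plus symmetrization for Step~2, and Lemma~\ref{lemma:extend_G Appro} for Step~3) is the same skeleton the paper uses, and the ``main obstacle'' you identify---turning the cubic residual into an explicit $d_\mM$-bound---is indeed handled in the supplement by a separate manifold-coding lemma. So the plan is on the right track.

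There is, however, a genuine gap in how you account for the $2\Delta$ inside $\epsilon(d_\mM)$. You write that this constant ``absorbs the $2\phi(1/2)$ constant dropped from the definition of $d_{\mF,\phi}$.'' That is not its origin. In the paper's argument the $2\Delta$ comes from a \emph{leave-one-out stability step on the latent side}: $\widehat w$ is trained on a particular latent sample set $\mH$, and one must control the discrepancy between the empirical generated law $\widehat\mD_{G_{u^\ast}}$ (over the finite $\mH$) and the population generated law $\mD_{G_{u^\ast}}$ that appears inside $\inf_{\mG}$. The paper bounds this crudely by $2\Delta$ using only the range of $\phi$. Your Step~1 chain silently passes from one to the other (``optimality of $\widehat w$ on the empirical objective'' gets you to $\widehat\mD_{G_{u^\ast}}$, not to $\mD_{G_{u^\ast}}$), so as written the decomposition does not close, and your proposed justification for the missing $2\Delta$ is incorrect. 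You need either the paper's leave-one-out device or a second concentration argument over the latent draws.

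A smaller point in Step~3: the displayed inequality
\[
|\phi(D_v(G_u(\bh)))-\phi(D_v(G_w(\bgamma(\bh))))|\le L_\phi\,\|G_u(\bh)-G_w(\bgamma(\bh))\|
\]
requires $D_v$ to be Lipschitz in its input, not just $\phi$. The paper makes this explicit (its Proposition in the supplement assumes an $L_{\bx}$-Lipschitz discriminator and folds the constant into the localization measure); you should state that hypothesis or route the bound through that proposition directly.
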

	
	The error term $ \epsilon(d_{\mM}) $ indicates that a low dimensional input is sufficient to achieve good generalization.
	Moreover, the experiments justify that our method is able to generate perceptually convincing images with low-dimensional inputs.
	
	Note that Theorem \ref{theorem: Further Generalization Bound} is slightly different from the results of LCCGAN \cite{cao2018adversarial} because $ Q_{L_{\bh}, L_{\nu}} (\bgamma, \mC) $ is related to the high-order term.
	Then, we consider a specific discriminator set to analyze and understand the generalization performance of LCCGAN++.
	
	\begin{coll}\label{coll:generalization_relu_D}
		Let 
		$ \mX {=} \{ \bx {\in} \mmR^d{:\;} \| \bx \| {\leq} 1 \} $.
		Assume that the discriminator set $ \mF $ is the set of neural networks with a rectified linear unit, \ie
		$\mF {=} \left\{ \max \{ \bw^{\trsp}[\bx; 1], 0\}: \bw {\in} \mmR^{d+1}, \| \bw \| {=} 1 \right\}$,
		then with probability at least $ 1-\delta $,
		\begin{align}
			& \left| \mmE_{\mH} \left[d_{\mF, \phi} \left(\widehat{\mD}_{G_{\widehat{w}}}, {\mD}_{\emph{\real}} \right) \right]			{-} \inf_{ \mG } \mmE_{\mH} \left[ d_{\mF, \phi} \left({\mD}_{G_{u}},  {\mD}_{\emph{\real}} \right) \right] \right| \nonumber\\
			\leq& 2 \Delta \sqrt{\frac{2}{N} \log \left(\frac{1}{\delta}\right)} + \frac{4\sqrt{2}}{\sqrt{N}} + 2\epsilon(d_{\mM}).
		\end{align}
	\end{coll}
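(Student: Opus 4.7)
The plan is to specialize Theorem~\ref{theorem: Further Generalization Bound} to the ReLU discriminator class by replacing the generic term $2R_\mX(\mF)$ with the explicit bound $\tfrac{4\sqrt{2}}{\sqrt{N}}$; the remaining two terms are already in the desired form and do not need modification. So the entire task reduces to showing
\[
R_\mX(\mF)\;\leq\;\frac{2\sqrt{2}}{\sqrt{N}}
\quad\text{where}\quad
\mF=\{\bx\mapsto\max\{\bw^\trsp[\bx;1],0\}:\|\bw\|=1\}.
\]

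First, I would peel off the ReLU nonlinearity via Talagrand's contraction inequality. Since $\sigma(t)=\max\{t,0\}$ is $1$-Lipschitz and satisfies $\sigma(0)=0$, for i.i.d.\ Rademacher variables $\{\varepsilon_i\}$ and any sample $\{\bx_i\}_{i=1}^N \subset \mX$,
\[
\mmE\!\left[\sup_{\|\bw\|=1}\frac{1}{N}\sum_{i=1}^N\varepsilon_i\,\sigma\bigl(\bw^\trsp[\bx_i;1]\bigr)\right]
\;\leq\;2\,\mmE\!\left[\sup_{\|\bw\|=1}\frac{1}{N}\sum_{i=1}^N\varepsilon_i\,\bw^\trsp[\bx_i;1]\right].
\]
This reduces the problem to the Rademacher complexity of the linear class $\mF_{\mathrm{lin}}=\{\bx\mapsto\bw^\trsp[\bx;1]:\|\bw\|=1\}$ over the augmented inputs $[\bx;1]$.

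Next, I would invoke the standard bound for linear classes: if $\|\bw\|\leq B$ and the inputs satisfy $\|\bz\|\leq R$, then the Rademacher complexity is at most $BR/\sqrt{N}$, proved by a Cauchy--Schwarz step followed by Jensen's inequality on $\mmE\|\sum_i\varepsilon_i\bz_i\|$. Here $B=1$ and, since $\bx\in\mX$ satisfies $\|\bx\|\leq 1$, the augmented vector obeys $\|[\bx;1]\|=\sqrt{\|\bx\|^2+1}\leq\sqrt{2}$. Thus the complexity of $\mF_{\mathrm{lin}}$ is at most $\sqrt{2}/\sqrt{N}$, and combining with the factor of $2$ from contraction gives $R_\mX(\mF)\leq 2\sqrt{2}/\sqrt{N}$. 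Plugging this into Theorem~\ref{theorem: Further Generalization Bound} produces the claimed inequality, since $2R_\mX(\mF)\leq 4\sqrt{2}/\sqrt{N}$ while the concentration term $2\Delta\sqrt{(2/N)\log(1/\delta)}$ and the approximation term $2\epsilon(d_\mM)$ are carried over unchanged.

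The main obstacle I expect is essentially bookkeeping rather than conceptual: one needs the right normalization of the contraction inequality (the factor-of-two version that applies when $\sigma(0)=0$, which is true for ReLU) and care in handling the bias coordinate so that the $\sqrt{2}$ radius appears rather than $1$. If one preferred to avoid Talagrand entirely, an alternative route is a direct covering/Dudley integral argument on ReLU classes, but that would be longer and would give the same constant, so the contraction route is the cleanest.
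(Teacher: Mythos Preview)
Your proposal is correct and follows essentially the same route as the paper: peel off the ReLU via Talagrand's contraction lemma, bound the Rademacher complexity of the resulting linear class on the augmented vectors $[\bx;1]$ (yielding the $\sqrt{2}$ radius and hence $R_\mX(\mF)\le 2\sqrt{2}/\sqrt{N}$), and substitute into Theorem~\ref{theorem: Further Generalization Bound}. The only cosmetic difference is where the factor of $2$ is absorbed---the paper builds it into its normalization of $R_\mX(\mF)$ and applies contraction without an extra factor, whereas you keep the $1/N$ normalization and pick up the $2$ from the contraction step---but the arithmetic and the ingredients are identical.
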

	
	In Corollary \ref{coll:generalization_relu_D}, using a one-layered ReLU network, the generalization bound of the proposed method is related to the error term \wrt the dimension of the latent distribution. 
	In other words, with a low dimensional input and sufficient training data, LCCGAN++ is able to obtain better generator approximation, and thus achieves better generalization performance in practice.

	\section{Experiments} \label{sec:experiment}
	We compare our method with several baseline methods, including DCGAN~\cite{radford2015unsupervised}, VAE~\cite{kingma2013auto}, 	WGAN-GP~\cite{gulrajani2017improved}, AGE~\cite{ulyanov2017adversarial}, StackGAN~\cite{zhang2017stackgan}, Progressive GAN~\cite{karras2017progressive} and LCCGAN~\cite{cao2018adversarial}. 
	We conduct experiments on several benchmark datasets,
	including MNIST \cite{lecun1998gradient}, Oxford-102 \cite{nilsback2008automated}, LSUN \cite{song2015construction}, CelebA \cite{liu2015deep} and ImageNet \cite{deng2009imagenet}.
	We have made the code for both LCCGAN\footnote{\href{https://github.com/guoyongcs/LCCGAN}{https://github.com/guoyongcs/LCCGAN}.} and LCCGAN++\footnote{\href{https://github.com/guoyongcs/LCCGAN-v2}{https://github.com/guoyongcs/LCCGAN-v2}.} available on the internet.
	
	For the quantitative evaluation, we use some widely used metrics, \ie Inception Score (IS)~\cite{salimans2016improved} and Fr\'echet Inception Distance (FID)~\cite{heusel2017gans} and intra-FID \cite{miyato2018cgans}, to evaluate the generated samples.
	Specifically, IS measures both the single image quality and the diversity over a large number of samples (\textit{i.e.}, 50k), and a larger IS value corresponds to the better performance of the method. 
	FID and intra-FID measure the similarity between real and generated images, and a smaller value indicates the better performance. Note that these metrics are highly consistent with human evaluations.
	
	\begin{table*}[htp]
		\centering
		{
			\caption{
				Visual comparisons of GANs with different dimensions of the latent distribution on Oxford-102.
			}
			\label{fig:flower-result}
			\includegraphics[width=0.85\linewidth]{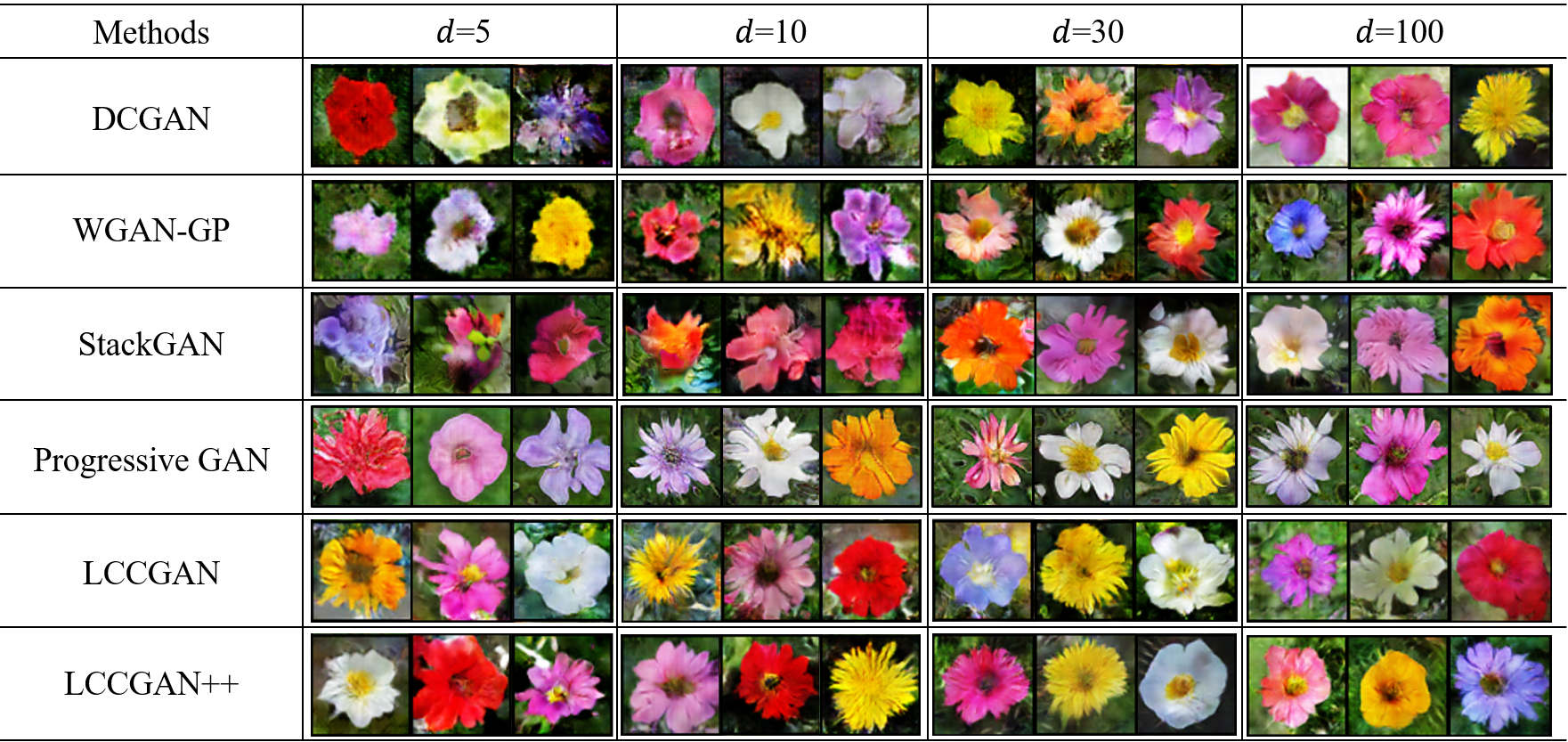}
		}
	\end{table*}
	
	\begin{table*}[htp]
		\normalsize
		\centering
		\caption{Comparisons of different GANs in terms of IS and FID on Oxford-102.}
		\resizebox{0.85\textwidth}{!}{
			\renewcommand{\arraystretch}{1.15}
			\begin{tabular}{c|c|c|c|c|c|c|c|c}	
				\hline 
				\multicolumn{1}{c|}{\multirow{2}[0]{*}{Methods}} & \multicolumn{2}{c|}{$ d=5 $} & 
				\multicolumn{2}{c|}{$ d=10 $} & 
				\multicolumn{2}{c|}{$ d=30 $} & 
				\multicolumn{2}{c}{$ d=100 $}\\
				\cline{2-9}
				& \multicolumn{1}{c|}{IS} & 
				\multicolumn{1}{c|}{FID}
				& \multicolumn{1}{c|}{IS} & 
				\multicolumn{1}{c|}{FID}
				& \multicolumn{1}{c|}{IS} & 
				\multicolumn{1}{c|}{FID}
				& \multicolumn{1}{c|}{IS} & 
				\multicolumn{1}{c}{FID} \\
				\hline 
				DCGAN~\cite{radford2015unsupervised}
				& 2.355 $\pm$ 0.019& 187.5
				& 3.262 $\pm$ 0.022& 204.7
				& 3.050 $\pm$ 0.015& 186.2 
				& 2.683 $\pm$ 0.022& 182.2 \\
				VAE~\cite{kingma2013auto}
				& 2.451 $\pm$ 0.018& 245.6
				& 2.358 $\pm$ 0.022& 190.6
				& 2.234 $\pm$ 0.016& 244.0 
				& 2.856 $\pm$ 0.024& 214.8 \\
				WGAN-GP~\cite{gulrajani2017improved}
				& 2.719 $\pm$ 0.031& 185.2
				& 2.891 $\pm$ 0.025& 179.8 
				& 3.081 $\pm$ 0.018& 136.7
				& 3.458 $\pm$ 0.028& 160.4 \\
				AGE~\cite{ulyanov2017adversarial}
				& 2.865 $\pm$ 0.024& 234.1
				& 3.062 $\pm$ 0.021& 186.7
				& 2.630 $\pm$ 0.023& 211.8
				& 2.488 $\pm$ 0.014& 235.9 \\
				StackGAN~\cite{zhang2017stackgan} 
				& 2.664 $\pm$ 0.013& 164.2
				& 2.702 $\pm$ 0.015& 167.7 
				& 3.109 $\pm$ 0.018& 197.0  
				& 2.741 $\pm$ 0.022& 178.8 \\
				Progressive GAN~\cite{karras2017progressive}
				& 2.844 $\pm$ 0.031& 128.6 
				& 3.295 $\pm$ 0.028& 128.6
				& 3.196 $\pm$ 0.028& 106.8 
				& 3.532 $\pm$ 0.028& 114.5 \\
				\hline 
				LCCGAN~\cite{cao2018adversarial}
				& 3.079 $\pm$ 0.026& 71.2
				& 3.077 $\pm$ 0.033& 82.7
				& 3.003 $\pm$ 0.030& 61.9 
				& 3.147 $\pm$ 0.038& 66.7 \\
				LCCGAN++
				& \bf3.267 $\pm$ 0.023& \bf71.0 
				& \bf3.394 $\pm$ 0.019& \bf71.1 
				& \bf3.370 $\pm$ 0.031& \bf57.7 
				& \bf3.590 $\pm$ 0.020& \bf60.7 \\
				\hline 
			\end{tabular}
		}
		\label{tab:flowers}
	\end{table*}

	\subsection{Comparisons on MNIST}
	\label{subsubsection:mnist}
	In this experiment, we compare different GANs on MNIST. 
	From Fig.~\ref{fig:mnist_low},
	when $d{=}3$, DCGAN and StackGAN produce only few kinds of digits with almost the same shapes. 
	In contrast, LCCGAN often produces digits with different styles and orientations.
	Furthermore, LCCGAN++ further produces images with better visual fidelity and higher diversity.
	Equipped with LCC, the proposed method effectively preserves the local information of data and thus helps the training of GANs.
	
	From Fig.~\ref{fig:mnist_high}, when we increase the dimension of input to $d{=}5$, the considered baseline methods often produce the digits with distorted structures.
	In contrast, with such a low dimensional input, LCCGAN is able to produce the images with meaningful content. Furthermore, LCCGAN++ significantly outperforms the considered baseline methods and produces sharper images.
	More critically, with the help of LCC coding, LCCGAN and LCCGAN++ with $d{=}5$ are able to achieve comparable or even better performance than their GAN counterparts with $d{=}100$ (See the red box in Fig.~\ref{fig:mnist_high}).
	These results show the effectiveness of the proposed method in training generative models by exploiting the local information of the latent manifold.
	
	\subsection{Comparisons on Oxford-102}
	\label{subsubsection:flowers}
	We further evaluate our method on Oxford-102, and investigate the effect of different input dimensions.
	The qualitative and quantitative results are shown in Table~\ref{fig:flower-result} and Table~\ref{tab:flowers}, respectively.
	
	\textbf{Qualitative results.}
	From Table~\ref{fig:flower-result}, we have the following observations. 
	First, the performance of the baselines highly depends on the input dimension. 
	For example, given a low dimension with $d{=}5$ or $d{=}10$, DCGAN often generates images with a blurred structure and distorted regions. 
	In contrast, 
	our method is able to produce realistic images.
	Second, we further investigate the effect of the input dimension on the quality of the generated images. 
	When $d{=}100$, LCCGAN++ consistently outperforms LCCGAN and the considered baselines.
	
	\textbf{Quantitative results.}
	From Table \ref{tab:flowers}, when $d{=}5$, Progressive GAN obtains slightly better IS and FID than other methods. 
	In contrast, LCCGAN and LCCGAN++ significantly outperform the other methods with various $d$ in terms of both IS and FID.  
	More critically, LCCGAN++ with $d{=}5$ achieves even better performance than all baselines with $d{=}30$ and several methods with $d{=}100$, \textit{e.g.,} DCGAN.
	It means that our method only requires a low-dimensional input to achieve good performance.
	These results show the effectiveness of our method. 

	\subsection{Comparisons on CelebA} \label{subsubsection:celeba}
	We also conduct experiments on the CelebA dataset \cite{liu2015deep}.  
	Due to the difficulty of producing face images, we use a larger input dimension (\eg $d{=}30$) to train the generative models. 
	
	\textbf{Qualitative results.} 
	In Table~\ref{fig:celeba-result}, by introducing LCC sampling into the training, our method with a low input dimension $d{=}30$ produce promising face images with better quality and larger diversity than DCGAN and Progressive GAN with $d{=}100$.  
	Moreover, given the same input dimension, our proposed LCCGAN++ shows better performance than LCCGAN and other baseline methods. 
	More qualitative results are put in Supplementary materials.
	
	\textbf{Quantitative results.}
	In Table~\ref{tab:celeba}, our LCCGAN yields comparable results with state-of-the-art GANs. With the improved LCC, LCCGAN++ further improves the performance and outperforms the other methods with various $d$. 
	These results imply that our method is able to generate face images with high quality and large diversity even when the input dimension is low.

	\subsection{Comparisons on LSUN} \label{subsubsection:lsun}
	We conduct experiments on LSUN \cite{song2015construction} to evaluate the performance of our proposed method.

	\textbf{Qualitative results.}
	In Table~\ref{fig:lsun-result}, given a low dimension of the input (\ie $d{=}10$), LCCGAN and LCCGAN++ are able to produce images with sharper structures and richer details, and thus consistently outperform the considered baselines. 
	In contrast, WGAN-GP and Progressive GAN fail to produce meaningful bedroom images.
	More importantly, the quality of generated images by LCCGAN and LCCGAN++ with $d{=}10$ are even better than that of WGAN-GP and Progressive GAN with $d{=}100$. 
	
	\textbf{Quantitative results.}
	In Table~\ref{tab:lsun}, the performance of our  method is generally better than the considered baseline methods in terms of the lowest FID score and comparable IS value.
	It implies that our method is able to generate images with high quality and large diversity.
	Although Progressive GAN achieves a good IS with $d{=}10$ on LSUN-bedroom, LCCGAN++ achieves the lower FID score and outperforms Progressive GAN. 
	
	\begin{table*}[htp]
		\centering
		{
			\caption{		
				Visual comparisons of different GANs with different input dimensions on the LSUN-bedroom and LSUN-classroom datasets.
			}
			\label{fig:lsun-result}
			\includegraphics[width=0.9\linewidth]{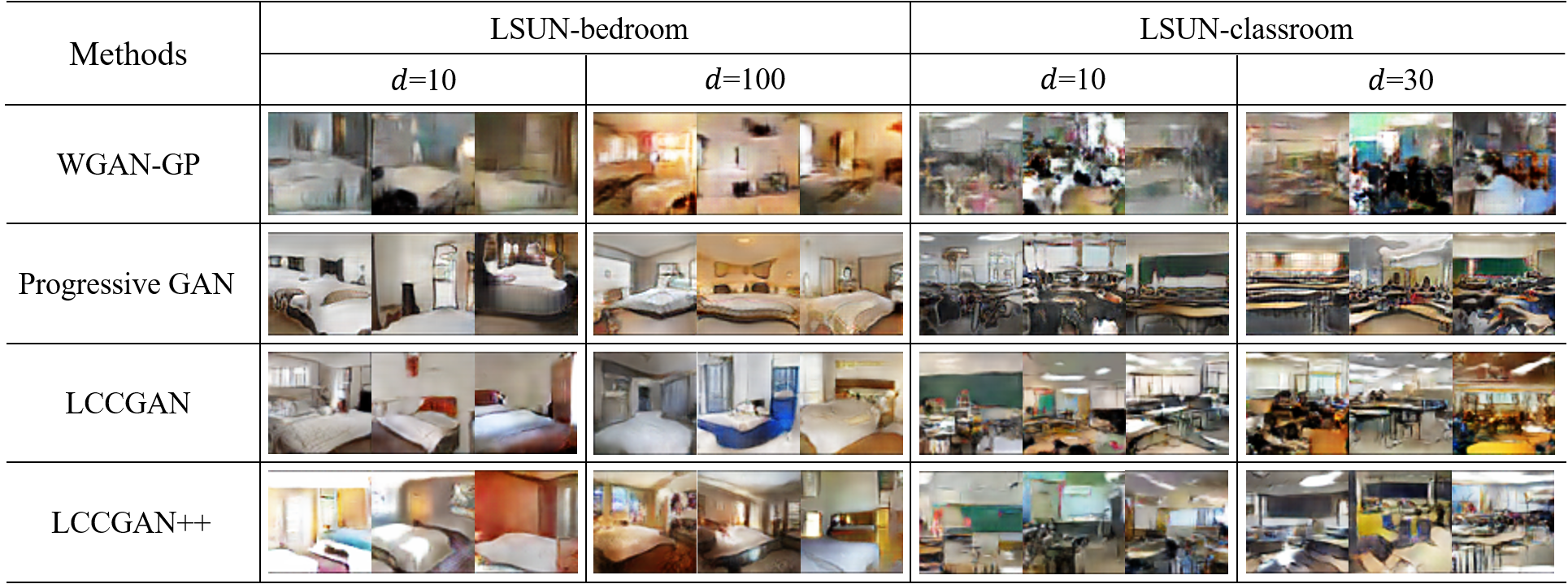}
		}
	\end{table*}
	
	\begin{table*}[htp]
		\normalsize
		\centering
		\caption{Comparisons with different GANs with different dimensions of the latent distribution in terms of IS and FID on LSUN.}
		\resizebox{0.9\textwidth}{!}{
			\renewcommand{\arraystretch}{1.25}
			
			\begin{tabular}{c|c|c|c|c|c|c|c|c|c|c|c|c|c|c|c|c}	
				\hline 
				\multicolumn{1}{c|}{\multirow{3}[0]{*}{Methods}}&
				\multicolumn{8}{c|}{LSUN-bedroom} &
				\multicolumn{8}{c}{LSUN-classroom} \\
				\cline{2-17} 
				& \multicolumn{2}{c|}{$ d=5 $} 
				& \multicolumn{2}{c|}{$ d=10 $} 
				& \multicolumn{2}{c|}{$ d=30 $} 
				& \multicolumn{2}{c|}{$ d=100 $}
				& \multicolumn{2}{c|}{$ d=5 $} 
				& \multicolumn{2}{c|}{$ d=10 $} 
				& \multicolumn{2}{c|}{$ d=30 $} 
				& \multicolumn{2}{c}{$ d=100 $}\\ 
				\cline{2-17}
				& \multicolumn{1}{c|}{IS} 
				& \multicolumn{1}{c|}{FID}
				& \multicolumn{1}{c|}{IS} 
				& \multicolumn{1}{c|}{FID}
				& \multicolumn{1}{c|}{IS} 
				& \multicolumn{1}{c|}{FID}
				& \multicolumn{1}{c|}{IS} 
				& \multicolumn{1}{c|}{FID}
				& \multicolumn{1}{c|}{IS} 
				& \multicolumn{1}{c|}{FID}
				& \multicolumn{1}{c|}{IS} 
				& \multicolumn{1}{c|}{FID}
				& \multicolumn{1}{c|}{IS} 
				& \multicolumn{1}{c|}{FID}
				& \multicolumn{1}{c|}{IS} 
				& \multicolumn{1}{c}{FID} \\
				\hline 
				DCGAN ~\cite{radford2015unsupervised}
				& 1.969 & 253.7
				& 2.531 & 193.9
				& 2.409 & 204.6  
				& 2.165 & 239.7 
				
				& 2.230 & 272.2
				& 2.204 & 258.8
				& 2.401 & 233.1
				& 2.347 & 271.9 
				\\
				VAE ~\cite{kingma2013auto}
				& 2.785 & 198.7  
				& 2.967 & 183.3
				& 3.218 & 166.3 
				& 3.265 & 178.9 
				
				& 2.195 & 232.7
				& 2.491 & 164.0
				& 2.646 & 182.4
				& 2.740 & 175.4
				\\
				WGAN-GP ~\cite{gulrajani2017improved}
				& 2.875 & 172.4
				& 2.834 & 176.3 
				& 2.950 & 154.2
				& 2.965 & 172.6 
				
				& 2.595 & 195.7
				& 2.733 & 197.6
				& 2.799 & 169.7
				& 2.701 & 173.3
				\\
				AGE ~\cite{ulyanov2017adversarial}
				& 2.031 & 312.1
				& 2.345 & 193.8
				& 2.186 & 219.3
				& 2.602 & 171.6 
				
				& 2.002 & 311.0
				& 2.142 & 267.3
				& 2.278 & 262.7
				& 1.956 & 321.5
				\\
				StackGAN ~\cite{zhang2017stackgan} 
				& 2.722 & 237.3
				& 2.637 & 197.3 
				& 2.675 & 164.5  
				& 2.612 & 238.0 
				
				& 2.292 & 209.7
				& 1.961 & 239.0
				& 2.340 & 256.2
				& 1.855 & 257.0
				\\
				Progressive GAN ~\cite{karras2017progressive}
				& 3.405 & 161.4 
				& \bf3.763 & 156.7
				& \bf3.951 & 149.3
				& 3.837 & 154.3 
				
				& 2.673 & 189.2
				& 3.073 & 174.9
				& \bf3.367 & 170.9
				& 3.176 & 177.8
				\\
				\hline 
				LCCGAN ~\cite{cao2018adversarial}
				& 3.254 & 104.1
				& 3.213 & 110.3
				& 3.084 & 139.1 
				& 3.350 & 115.0 
				
				& 2.786 & 105.3
				& \bf3.094 & 103.0
				& 2.974 & 103.4
				& 2.532 & 132.2
				\\
				LCCGAN++
				& \bf3.406 & \bf98.0 
				& 3.683 & \bf109.8 
				& 3.546 & \bf88.1 
				& \bf4.109 & \bf110.7 
				
				& \bf2.866 & \bf95.2
				& 3.005 & \bf96.6
				& 3.201 & \bf102.9
				& \bf3.273 & \bf98.9
				\\
				\hline 
				
			\end{tabular}
		}
		\label{tab:lsun}
	\end{table*}

	\begin{table}[h]
		\centering
		{
			\caption{		
				Comparisons of GANs with different dimensions on CelebA.
			}
			\label{fig:celeba-result}
			\includegraphics[width=0.8\linewidth]{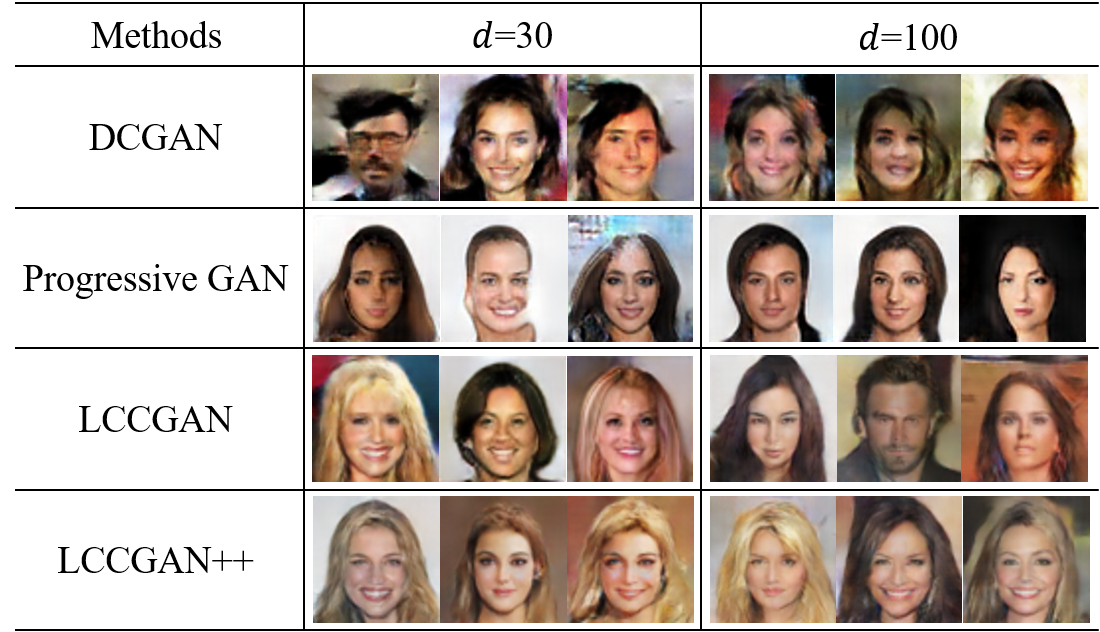}
		}
	\end{table}
	
	\begin{table}[h]
		\normalsize
		\centering
		\caption{Comparisons of GANs in terms of IS and FID on CelebA.}
		\resizebox{0.5\textwidth}{!}{
			\renewcommand{\arraystretch}{1.15}
			
			\begin{tabular}{c|c|c|c|c}	
				\hline 
				\multicolumn{1}{c|}{\multirow{2}[0]{*}{Methods}} & 
				\multicolumn{2}{c|}{$ d=30 $} & 
				\multicolumn{2}{c}{$ d=100 $}\\
				\cline{2-5}
				& \multicolumn{1}{c|}{IS} & 
				\multicolumn{1}{c|}{FID}
				& \multicolumn{1}{c|}{IS} & 
				\multicolumn{1}{c}{FID} \\
				\hline 
				DCGAN ~\cite{radford2015unsupervised}
				& 2.299 $\pm$ 0.014& 67.2 
				& 2.214 $\pm$ 0.022& 78.5 \\
				VAE ~\cite{kingma2013auto}
				& 2.395 $\pm$ 0.017& 52.0 
				& 2.308 $\pm$ 0.019& 54.4 \\
				WGAN-GP ~\cite{gulrajani2017improved}
				& 2.344 $\pm$ 0.025& 92.0
				& 2.388 $\pm$ 0.023& 88.9 \\
				AGE ~\cite{ulyanov2017adversarial}
				& 2.517 $\pm$ 0.025& 82.2
				& 2.612 $\pm$ 0.026& 63.0 \\
				StackGAN ~\cite{zhang2017stackgan} 
				& 2.036 $\pm$ 0.016& 131.0  
				& 2.419 $\pm$ 0.014& 133.8 \\
				Progressive GAN ~\cite{karras2017progressive}
				& 2.527 $\pm$ 0.020& 52.8
				& 2.530 $\pm$ 0.017& 55.2 \\
				\hline 
				LCCGAN ~\cite{cao2018adversarial}
				& 2.420 $\pm$ 0.027& 54.4 
				& 2.526 $\pm$ 0.025& 31.9 \\
				LCCGAN++ 
				& \bf2.582 $\pm$ 0.018& \bf29.2 
				& \bf2.625 $\pm$ 0.017& \bf25.9 \\
				\hline 
			\end{tabular}
		}
		\label{tab:celeba}
	\end{table}	
	
	\begin{table*}[ht]
		\centering
		\caption{Effect of the LCC training method on improving the performance of different GANs on Oxford-102.}
		\resizebox{1\textwidth}{!}{
			\begin{tabular}{c|cc|cc|cc|cc|cc}
				\hline
				\multirow{2}{*}{Method} & \multicolumn{2}{c|}{DCGAN} & \multicolumn{2}{c|}{WGAN-GP} & \multicolumn{2}{c|}{StackGAN-v1} & \multicolumn{2}{c|}{StackGAN-v2} & \multicolumn{2}{c}{Progressive GAN} \\
				\cline{2-11}
				& IS & FID & IS & FID & IS & FID & IS & FID & IS & FID\\
				\hline
				Baseline 
				& 2.683 $\pm$ 0.022 & 182.2
				& 3.458 $\pm$ 0.028 & 160.4
				& 2.741 $\pm$ 0.022 & 178.8
				& 3.087 $\pm$ 0.027 & 27.0      
				& 3.532 $\pm$ 0.028 & 114.5
				\\
				with LCC ($q{=}2$) 
				& 3.003 $\pm$ 0.030 & 61.9    
				& 3.496 $\pm$ 0.032 & 155.5
				& 2.895 $\pm$ 0.017 & 177.6    
				& 3.088 $\pm$ 0.031 & 23.7       
				& 3.571 $\pm$ 0.024 & 111.2
				\\
				with LCC ($q{=}3$)
				& \bf3.370 $\pm$ 0.031 & \bf57.7
				& \bf3.546 $\pm$ 0.032 & \bf145.9
				& \bf3.005 $\pm$ 0.014 & \bf168.2       
				& \bf3.216 $\pm$ 0.030 & \bf22.2   
				& \bf3.710 $\pm$ 0.036 & \bf109.6\\
				\hline
			\end{tabular}%
		}	
		\label{tab:comp_stackgan_v2}
	\end{table*}
	
	\begin{table}[t]
		\caption{Visual comparisons of different GANs on ImageNet, including Promontory and Volcano. Here, we use LCCGAN++ as our method.}
		\label{tab:imagenet}
		\centering
		\includegraphics[width=0.95\linewidth]{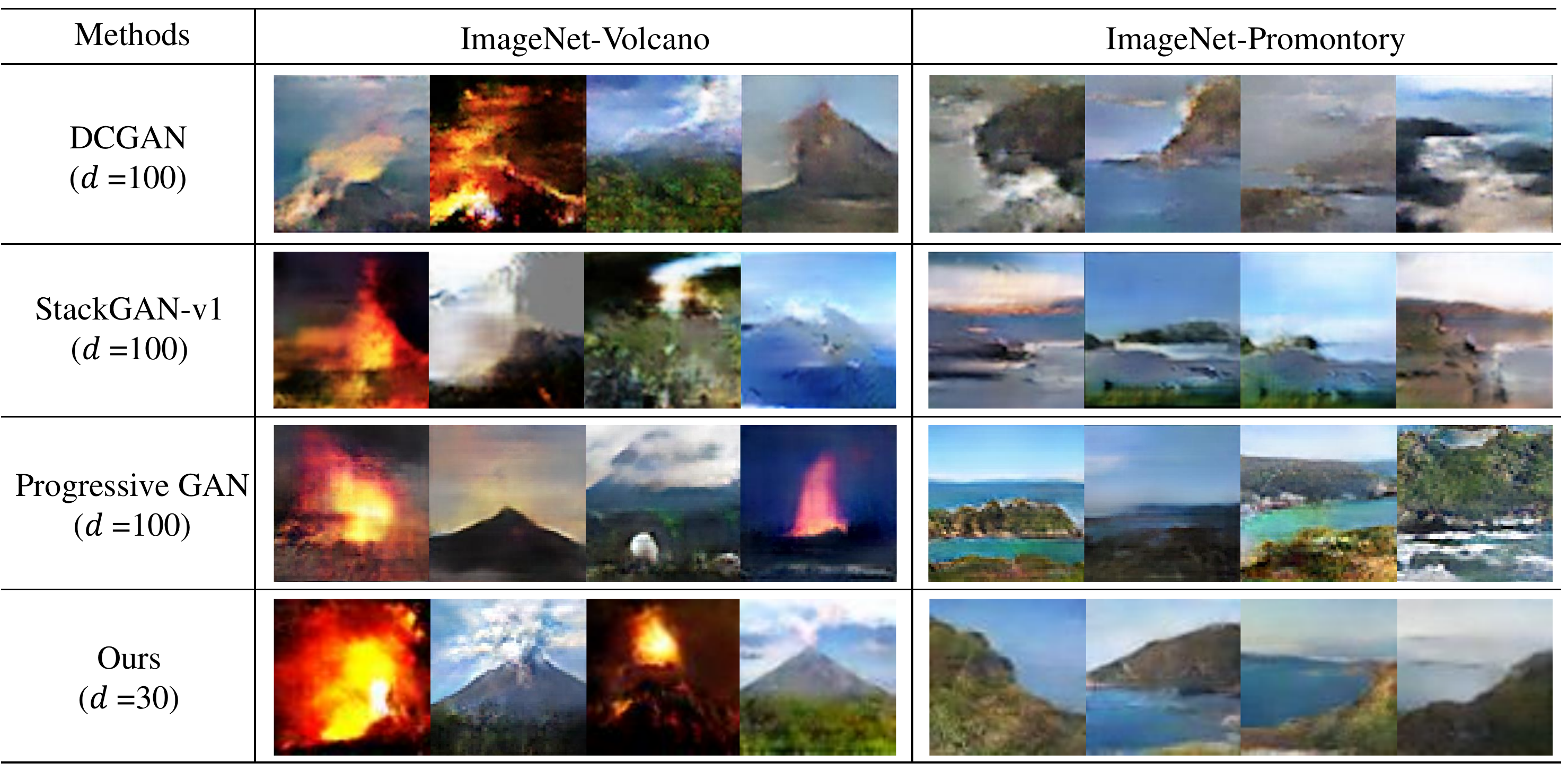} 
	\end{table}

	\subsection{Comparisons on ImageNet}
	In this experiment, we further evaluate the performance of the proposed LCCGAN++ on the ImageNet dataset.
	Specifically, since we focus on unconditional GAN models in this paper, training 1000 models on the ImageNet dataset (1000 categories in total) is infeasible and impractical. Following the previous studies~\cite{guo2019auto, zhang2018stackgan++}, we conduct experiments on two categories of the ImageNet dataset, \ie Promontory and Volcano.
	
	From Table \ref{tab:imagenet}, with a low-dimensional input $d{=}30$, our proposed LCCGAN++ is able to produce promising images for both Promontory and Volcano.
	More importantly, the proposed LCCGAN++ with $d{=}30$ has better quality than the considered baseline methods with a high dimension of $d{=}100$ on these two categories.
	Therefore, these results demonstrate the effectiveness of our proposed method with a low dimension of the input.
	Moreover, our method has good generalization performance even when the input dimension is low.

	\subsection{Effectiveness of the LCCGAN Framework}
	In this experiment, we verify the effectiveness of the LCCGAN framework by introducing LCC into different GANs, including DCGAN, WGAN-GP, StackGAN-v1, StackGAN-v2 and Progressive GAN.
	Since we build our LCCGAN based on the DCGAN model (with 3.6M parameters), it seems unfair to directly compare the LCC based DCGAN with larger GAN models, like StackGAN-v2 (with 16.5M parameters) and Progressive GAN (with 60.7M parameters). 
	From Table~\ref{tab:comp_stackgan_v2}, the resultant models with LCC 
	consistently outperform the baseline models given different dimensions of the input, which demonstrates the effectiveness of our method.

	\section{Additional Experiments} \label{exp:discussions}
	
	\subsection{Demonstration of LCC Sampling}\label{subsection:sensitive_lcc_sampling}
	In this experiment,
	we investigate the effectiveness of the LCC sampling. 
	Specifically, we first randomly select one latent point in the coordinate system and find the nearest $d$ bases.
	Then, we generate 10 latent points using random weights based on the selected $d$ bases to produce images.
	From  Table~\ref{fig:LCCGANsampling}, the proposed method is able to produce images with different orientations or styles.
	With the help of LCC sampling, our model generalizes well to unseen data rather than simply memorizing the training samples.
	These results demonstrate the effectiveness of the proposed sampling method in exploiting the local information of data.

	\subsection{Latent Manifold Interpolations}
	
	To further verify the generalization performance of our method, we conduct latent manifold interpolations on the Oxford-102 dataset. 
	Specifically, we first apply our LCC sampling method to generate two images in the same local coordinate system, and we have two corresponding LCC codings. 
	Then, we linearly interpolate a set of codings between these two LCC codings of two given images.
	From Table \ref{fig:interpolation}, our proposed method is able to interpolate realistic and smooth generated images.
	These results imply that our method is able to explore the smooth properties of the generator in the local coordinate system.

	\subsection{Comparisons of High-resolution Image Generation}
	We compare the performance of different GAN models equipped with and without LCC sampling when producing high-resolution images.
	In this experiment, we apply the LCC learning method to several GAN models, such as DCGAN, StackGAN-v2, and Progressive GAN. 
	From Table~\ref{fig:high_resolution}, with a low input dimension $d{=}30$, the models with the LCC are able to generate more photo-realistic high-resolution images than the baseline models with $d{=}100$ under the resolutions of $ 128{\times}128 $ and $ 256{\times}256 $. 
	It implies that our proposed method is able to generate high-resolution images even when the input dimension is low.

	\begin{table}[t]
		\caption{Generated images from LCC sampling on MNIST, Oxford-102 and CelebA. The second column shows the images generated from the synthesized latent points. In the last column, we use the Pearson distance to find the closest image in the training data. 
		}
		\label{fig:LCCGANsampling}
		\centering
		\includegraphics[width=0.95\linewidth]{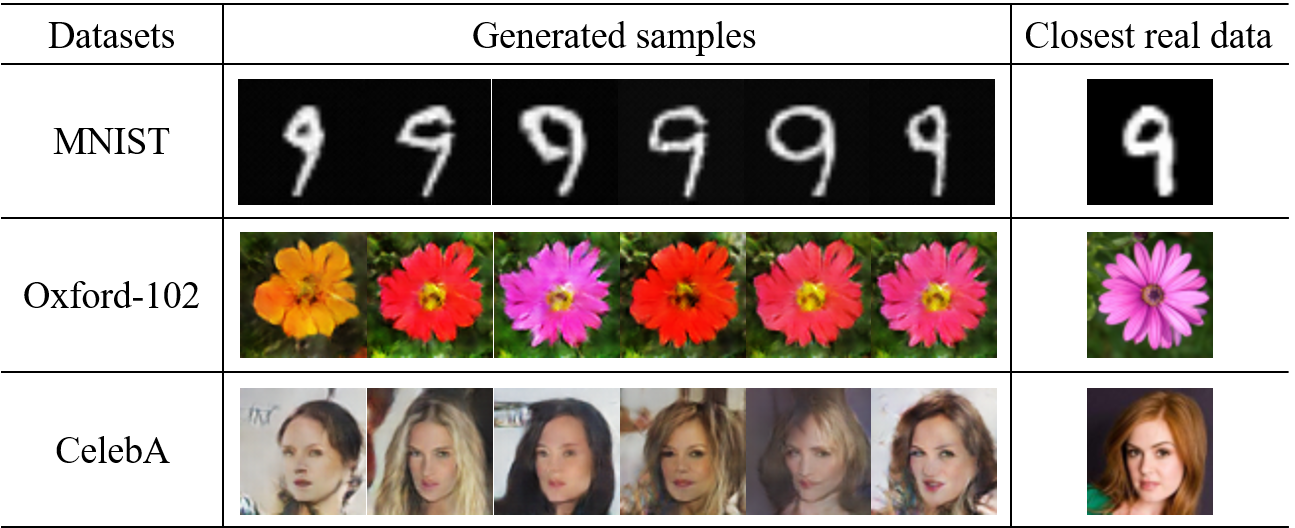} 
	\end{table}
	
	\begin{table}[t]
		\caption{Interpolations between two generated images on Oxford-102. 
			The first and the last column show the generated images, and the middle column is the interpolated images between two corresponding images.}\label{fig:interpolation}
		\centering
		\includegraphics[width=1\linewidth]{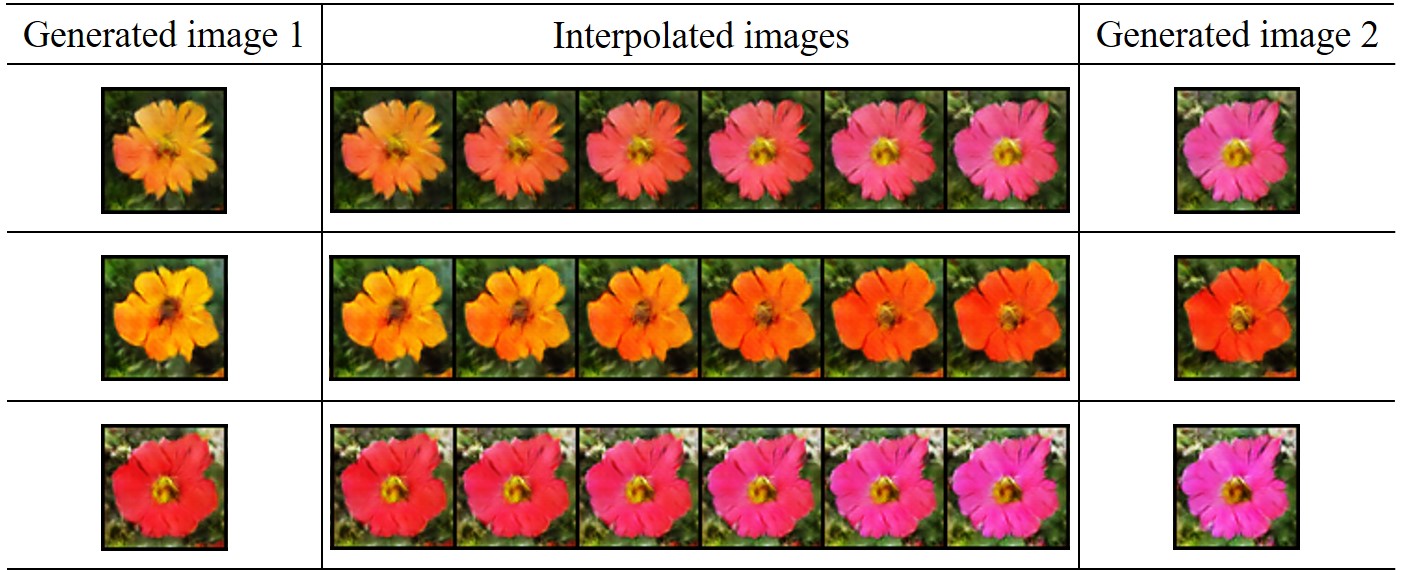} 
	\end{table}
	
	\begin{table*}[h]
		\centering
		\caption{
			Comparisons of different GAN models equipped with and without the LCC sampling method. Here, we train our method using LCC $(q{=}3)$. 
		}
		\label{fig:high_resolution}
		\includegraphics[width=0.9\linewidth]{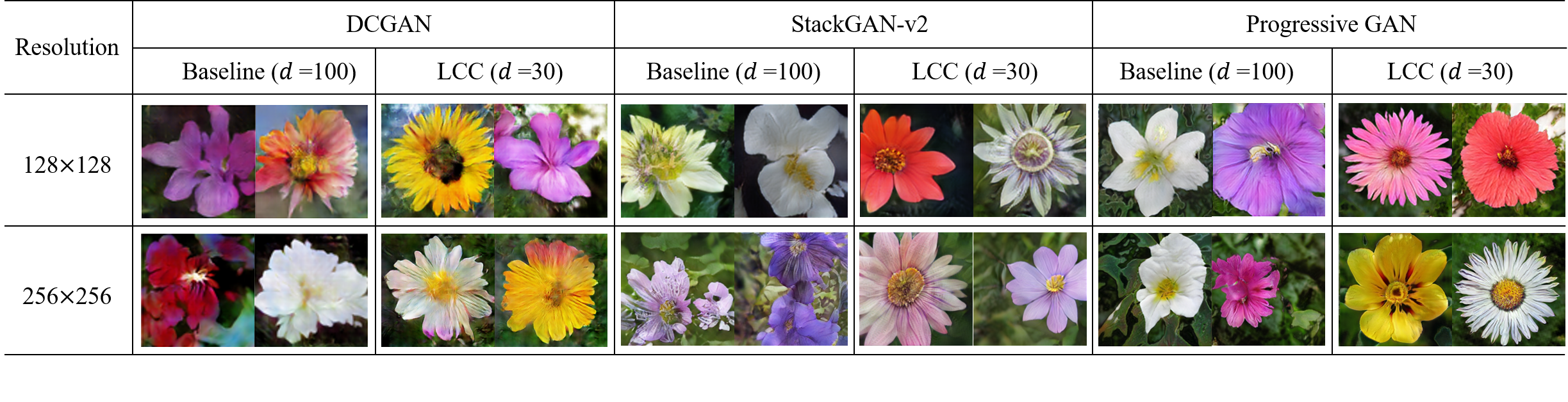} 
	\end{table*}
	
	\begin{table}[t]
		\caption{
			Effect of $ d_B $ and $ M $ on the performance of LCCGAN++ on Oxford-102.
		}\label{fig:ablation_dB_M}
		\centering
		\includegraphics[width=1\linewidth]{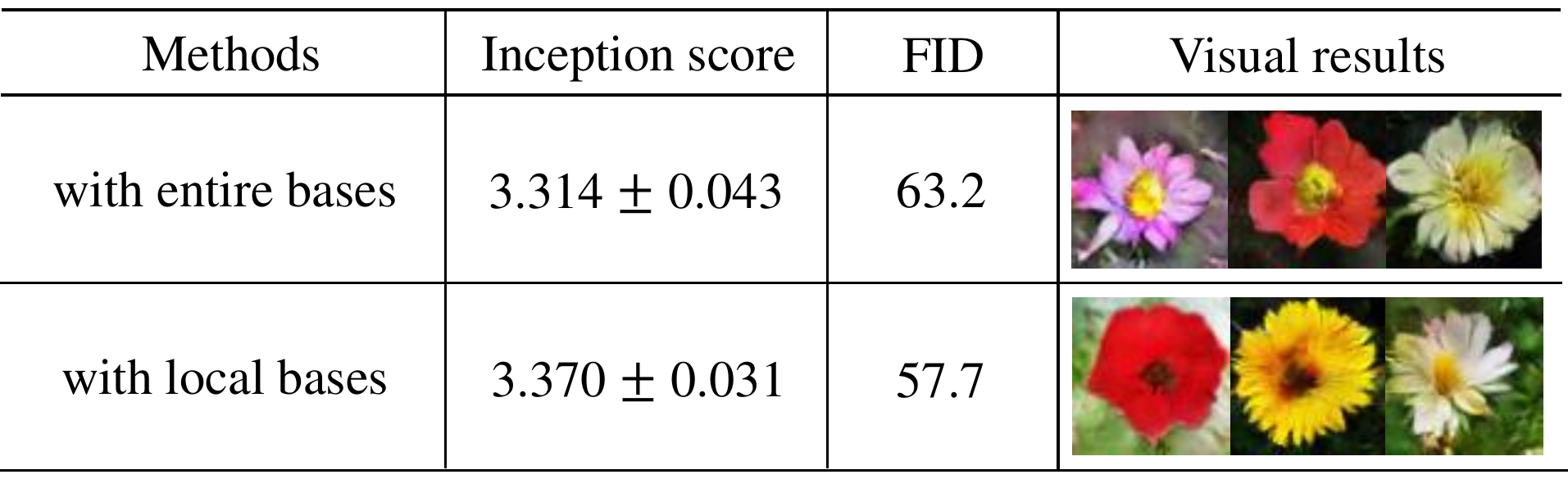} 
	\end{table}

	\subsection{Comparisons between Local and Entire Bases}
	In this experiment, we compare the LCCGAN model with local bases and the model with entire bases.
	From Table \ref{fig:ablation_dB_M}, LCCGAN with local bases has the largest IS and the lowest FID, and thus generates the most realistic images (as shown in the last column).
	It means that LCCGAN using local bases is able to exploit local information to improve the quality of generated images.
	In contrast, using the entire bases would sample meaningless points to generate images with poor quality.
	These results demonstrate the effectiveness of our method using the local bases.

	\begin{table}[t]
		\centering
		\caption{Comparisons with different GANs in terms of intra-FID on LSUN. We set $d{=}30$ for all the experiments.}
		\label{table:intraFID}
		\resizebox{0.5\textwidth}{!}{
			\begin{tabular}{cccc}
				\hline
				Methods         & LSUN-classroom (FID) & LSUN-bedroom (FID) & Intra-FID \\ \hline
				DCGAN           & 182.38 & 212.82 & 197.60 \\
				StackGAN-v2     & 162.05 & 134.22 & 148.14 \\
				Progressive GAN & 178.17 & 165.01 & 171.59 \\ \hline
				LCCGAN       & 107.19 & 94.89  & 101.04 \\
				LCCGAN++       & \bf97.87 & \bf90.58 & \bf94.23        \\ \hline
			\end{tabular}
		}
	\end{table}
	
	\begin{table}[t]
		\centering
		\caption{Discussion on $L_{\bh}$ and $L_{\nu}$ on Oxford-102 with $d{=}30$. 
		}
		\resizebox{0.5\textwidth}{!}{
			\begin{tabular}{c|cccccc}
				\hline
				Settings of $L_{\nu}$     & 0.0001 & 0.001 & 0.01  & 0.1   & 1     & 10 \\
				\hline
				IS   & \textbf{3.370} & 2.817 & 2.247 & 2.949 & 2.296 & 2.035 \\
				FID   & \textbf{57.7} & 205.9 & 255.1 & 280.8 & 262.9 & 269.8 \\
				\hline
				Settings of $L_{\bh}$ & 0.0001 & 0.001 & 0.01  & 0.1   & 1     & 10 \\
				\hline
				IS   & 1.890 & 2.228 & 1.984   & 3.159  & \textbf{3.370} & 3.239 \\
				FID   & 247.9 & 286.8 & 240   & 77.1  & \textbf{57.7} & 58.0 \\
				\hline
			\end{tabular}
		}
		\label{tab:hyperparameter}
	\end{table}

	\begin{table}[t!]
		\caption{Effect of end-to-end training for LCCGAN++ on Oxford-102.}\label{tab:lccgan_end2end}
		\centering
		\includegraphics[width=0.97\linewidth]{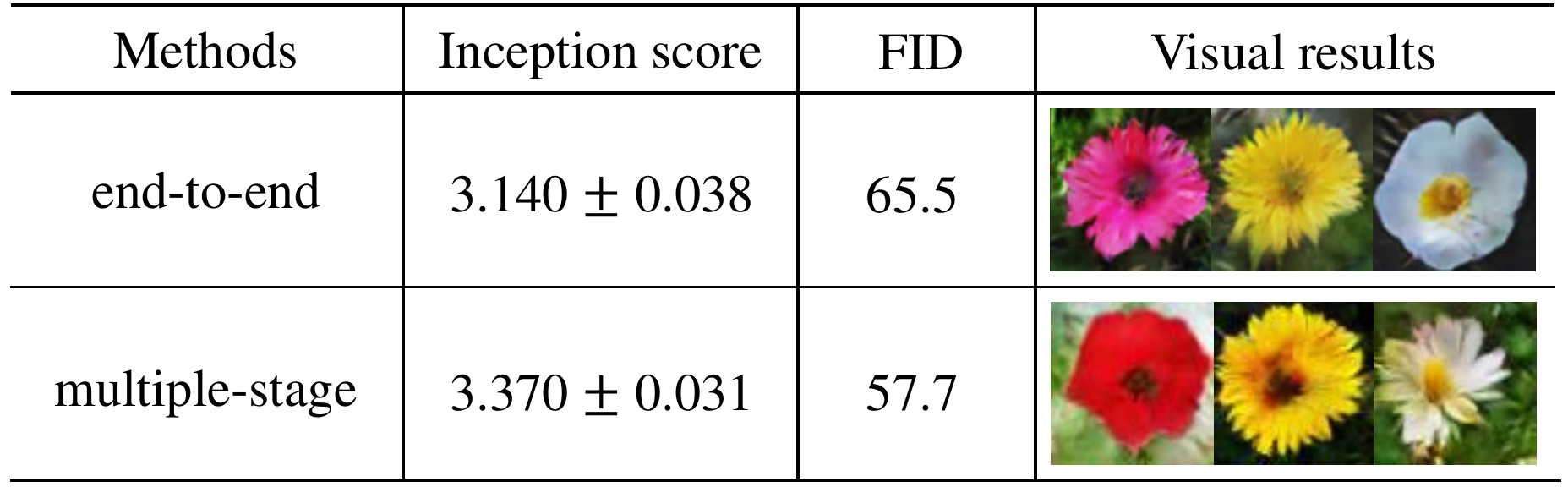} 
	\end{table}

	\begin{table}[t!]
		\normalsize
		\centering
		\caption{{Effect} of $\#$class on LCCGAN++ in terms of IS and FID.}
		\label{table:ratio1}
		\resizebox{0.5\textwidth}{!}{
			\begin{tabular}{c|cc|cc|cc|cc}	
				\hline 
				\multirow{2}{*}{Input dimension} 
				& \multicolumn{2}{c|}{$\#$class${=}$30} 
				& \multicolumn{2}{c|}{$\#$class${=}$50} 
				& \multicolumn{2}{c|}{$\#$class${=}$70} 
				& \multicolumn{2}{c}{$\#$class${=}$102}\\
				\cline{2-9} 
				& IS & FID & IS & FID & IS & FID & IS & FID \\
				\hline 
				$ d=3 $ 
				& 3.087 & 104.8 
				& 2.902 & 119.9 
				& 2.742 & 125.8 
				& 2.697 & 130.1	\\
				$ d=5 $ 
				& 3.116 & 111.3  
				& 2.881 & 125.3 
				& 2.655 & 140.1 
				& 2.500 & 167.6	\\
				\hline 
			\end{tabular}
		}
	\end{table}

	\subsection{Comparisons in terms of Intra-FID}
	In this experiment, we train a single GAN model over different classes and evaluate the method using the intra-FID~\cite{miyato2018cgans}.
	Such a metric first computes an FID score separately for each condition/class and then reports the average score over all conditions.
	However, this paper focuses on unconditional GANs and they have no conditions/labels associated with the generated images.
	As a result, we cannot directly {compute} the intra-FID.
	To address this, we first train a classification model to classify the generated images into different classes, and then obtain the intra-FID score by computing an FID score for each class.
	
	We train the GAN models on two LSUN classes (\ie LSUN-classroom and LSUN-bedroom) and the classification model becomes a binary model (with the average accuracy of 95.1\%).
	We report both FID score for each class and the intra-FID scores of different methods in Table \ref{table:intraFID}.
	From these results, our LCCGAN yields the smallest intra-FID among all the considered methods.
	It means that LCCGAN and LCCGAN++ are able to generate diverse samples by capturing the local information of data for each class.
	Moreover, LCCGAN++ achieves better performance than LCCGAN with the same input dimension because LCCGAN++ has better approximation of generative models.
	
	\subsection{Ablation Studies}
	
	\subsubsection{Effect of Hyper-parameters $L_\nu$ and $L_\bh$} \label{subsection:ablation_study}
	In this experiment, we investigate the impact of the hyper-parameters $L_\nu$ and $L_\bh$ on the performance of the proposed method.
	To this end, we compare the performance with different hyper-parameters on Oxford-102 with $d{=}30$.
	From Table~\ref{tab:hyperparameter}, the performance deteriorates with the increase of $L_\nu$.
	In terms of $L_\bh$, we obtain the best performance with $L_h{=}1$.
	Thus, we set $L_\nu{=}0.0001$ and $L_\bh{=}1$ in practice.

	\begin{table*}[t]
		\normalsize
		\centering
		\caption{Ablation study on $ d_B $ and $M$ in terms of IS and FID  on Oxford-102. We set $d{=}30$ for all the experiments.}
		\resizebox{1\textwidth}{!}{
			\renewcommand{\arraystretch}{1.15}
			\begin{tabular}{c||cc|cc|cc|cc||cc|cc|cc|cc}	
				\hline 
				\multicolumn{1}{c||}{\multirow{3}[0]{*}{Methods}} & \multicolumn{8}{c||}{Setting $M=128$} & \multicolumn{8}{c}{Setting $d_B=100$} \\
				\cline{2-17}
				& \multicolumn{2}{c|}{$ d_B=50 $}  
				& \multicolumn{2}{c|}{$ d_B=100 $} 
				& \multicolumn{2}{c|}{$ d_B=200 $} 
				& \multicolumn{2}{c||}{$ d_B=400 $} 
				& \multicolumn{2}{c|}{$ M=64 $}  
				& \multicolumn{2}{c|}{$ M=128 $} 
				& \multicolumn{2}{c|}{$ M=256 $} 
				& \multicolumn{2}{c}{$ M=512 $}\\
				\cline{2-17}
				& \multicolumn{1}{c}{IS}  
				& \multicolumn{1}{c|}{FID}
				& \multicolumn{1}{c}{IS} 
				& \multicolumn{1}{c|}{FID}
				& \multicolumn{1}{c}{IS} 
				& \multicolumn{1}{c|}{FID}
				& \multicolumn{1}{c}{IS} 
				& \multicolumn{1}{c||}{FID}   
				& \multicolumn{1}{c}{IS} 
				& \multicolumn{1}{c|}{FID}
				& \multicolumn{1}{c}{IS} 
				& \multicolumn{1}{c|}{FID}
				& \multicolumn{1}{c}{IS} 
				& \multicolumn{1}{c|}{FID}
				& \multicolumn{1}{c}{IS} 
				& \multicolumn{1}{c}{FID}  \\
				\hline 
				LCCGAN & 2.895 & 131.5 & 3.003 & 61.9 & 3.104 & 66.4 & 3.246 & 61.6    & 2.937 & 99.3 & 3.003 & 61.9 & 3.148 & 94.0 & 3.152 & 93.6 \\
				LCCGAN++ & 2.673 & 124.8 & \textbf{3.370} & \textbf{57.7} & 3.362 & 62.8 & 3.276 & 62.0    & 3.131 & 66.3 & \textbf{3.370} & \textbf{57.7} & 3.068 & 76.8 & 3.211 & 63.3 \\ 
				\hline 
			\end{tabular}
		}
		\label{tab:ablation_study}
	\end{table*}
	
	\begin{table*}[t]
		\caption{
			Visual comparisons of the images produced by the models trained with different $d_B$ and $M$ on Oxford-102. 
		}
		\label{tab:ablation_dB_M}
		\centering
		\includegraphics[width=1\linewidth]{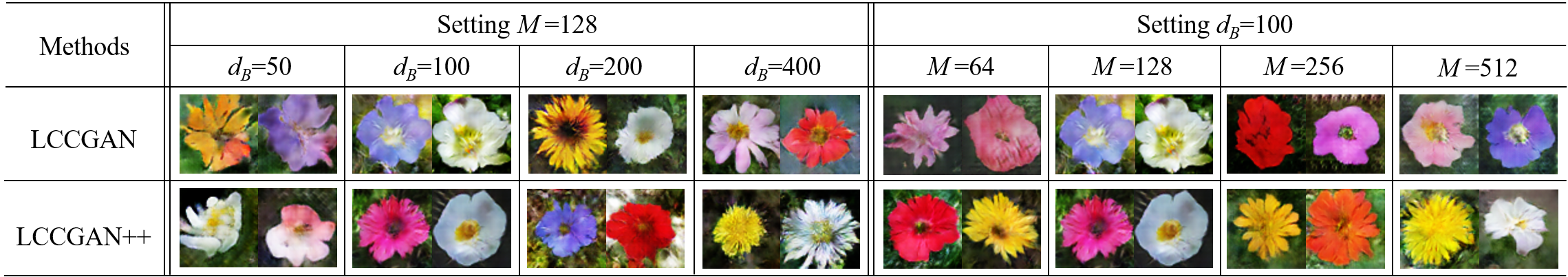} 
	\end{table*}
	
	\subsubsection{Effect of End-to-end Training}
	In this experiment, we compare the end-to-end training method with our multiple-stage training method. 
	In the end-to-end training scheme, we optimize a joint objective function by combining the loss of autoencoder, the objective of LCC, and the objective of a GAN model.
	From Table \ref{tab:lccgan_end2end}, the model with the multiple-stage strategy significantly outperforms the model with end-to-end manner.
	In contrast, the end-to-end training method may obtain inaccurate bases since it has to compensate for the objectives of autoencoder and GAN.
	With such inaccurate bases, the performance of LCCGAN would deteriorate.

	\subsubsection{Effect of the Ratio of $\#$class to $ d $}
	In this experiment, we investigate the ratio of the number of classes ($\#$class) to the number of local bases $d$. 
	Specifically, we fix $d$ to study the impact of the number of classes by varying $\#$class on Oxford-102 (containing 102 classes).  
	Note that with the increase of $\#$class, the number of training samples will increase accordingly. However, it would affect the performance of GANs.
	To remove the influence of the number of training samples, we sample images from different classes and keep the total number of training samples fixed. 
	
	We set $N$ to be the smallest number of training samples in the case of $\#$class${=}30$, \ie $N{=}1739$.
	From Table~\ref{table:ratio1}, when we increase the number of classes from $30$ to $102$, the data become more complicated and thus need more local bases to represent the manifold of data. 
	As a result, given a fixed number of local bases $d$, the images generated by the LCCGAN++ models tend to yield worse performance with the increase of $\#$class.

	\subsubsection{Effect of $ d_B $ and $M$}
	In this experiment, we conduct ablation studies to investigate the effect of the dimension of latent space ($ d_B $) and the number of bases ($ M $).
	From Table \ref{tab:ablation_study}, when setting $d_B{=}100$ and $ M{=}128$, both LCCGAN and LCCGAN++ yield significantly better performance than the settings with a low dimension $d_B{=}50$ or a small number $M{=}64$. If we further increase $d_B$ and $M$, it would introduce additional computational cost but does not yield significant performance improvement.
	Furthermore, we also provide visual comparisons of the images produced by the models trained with different $ d_B $ and $ M $ in Table~\ref{tab:ablation_dB_M}.
	In practice, we set the dimension of latent space and the number of bases as $ d_B{=}100 $ and $ M{=}128 $, respectively.

	\section{Conclusion} \label{sec:conclusion}
	We have proposed a novel generative model by using local coordinate coding (LCC) to improve the performance of GAN models. Unlike existing methods, we develop an LCC-based sampling method to exploit the local information on the latent manifold of real data. 
	Moreover, we also propose an advanced LCCGAN++ by introducing a higher-order term in the generator approximation.
	In this way, we are able
	to conduct analysis on the generalization performance of GANs
	and theoretically prove that a low-dimensional input is able to achieve good performance. 
	Qualitative and quantitative experiments on several benchmark datasets demonstrate the effectiveness of the proposed method over several baseline methods.

	\section*{Acknowledgments}
	This work was partially supported by the Key-Area Research and Development Program of Guangdong Province (2018B010107001), National Natural Science Foundation of China (NSFC) 61836003 (key project), Guangdong Project 2017ZT07X183, 
	Fundamental Research Funds for the Central Universities D2191240.


	\ifCLASSOPTIONcaptionsoff
	\newpage
	\fi

	
	
	\bibliographystyle{IEEEtran}
	\bibliography{main}

	\newpage
	
	\onecolumn
	\begin{center}
		~\\
		\Huge{Supplementary Materials} \\ 
		\tiny{~}\\ 
		\LARGE{Improving Generative Adversarial Networks with Local Coordinate Coding}\\
		\tiny{~}\\
		\large{
		Jiezhang Cao$^*$,
		Yong Guo$^*$,
		Qingyao Wu,
		Chunhua Shen,
		Junzhou Huang,
		Mingkui Tan$^\dagger$
		}
	\end{center}
	
	\setcounter{section}{0}
	\renewcommand\thesection{\Alph{section}}
	~\\
	In the supplementary materials, we provide detailed proofs for all lemmas, theorems and corollary. 
	Besides, we give more experiment settings and results. 
	We organize our supplementary materials as follows.
	In Sections \ref{sec:proofs_lamma1} and \ref{sec:proofs_lamma2}, we give the proofs of the generator approximation and its improved version, respectively. 
	In Sections \ref{sec:thm3}, \ref{sec:thm1} and \ref{sec:thm2}, we provide the generalization analysis for our method.
	In Section \ref{sec:detail}, we provide more experimental details. 
	In Section \ref{sec:more_res}, we provide more results of our proposed method. 
	
	\section{Proofs of Lemma \ref{lemma1}} \label{sec:proofs_lamma1}
	Based on \cite{yu2009nonlinear, cao2018adversarial}, we first use the definition of Lipschitz smoothness as follows. 
	\begin{deftn} \cite{yu2009nonlinear}
		A function $ f_{\theta}(\bx) $ in $ \mmR^d $ is $ (L_{\bx}, L_{f}) $-Lipschitz smooth if $ \| f(\bx') - f(\bx) \|_2 \leq L_{\bx} \| \bx - \bx' \|_2 $ and $ \| f(\bx') - f(\bx) - \nabla f(\bx)^\trsp (\bx' - \bx) \|_2 \leq L_{f} \| \bx - \bx' \|_2^2 $, where $ L_{\bx}, L_{f} > 0 $.
	\end{deftn}
	Using this definition, we then provide the following proposition to complete the proofs of the generator approximation.
	\begin{prop} \label{lemma: DG Approximation}
		Let $ (\bgamma, \mC) $ be an arbitrary coordinate coding on $ \mmR^{d_B} $.
		Given an $ (L_{\bh}, L_{G}) $-Lipschitz smooth generator $ G_u(\bh) $ and an $ L_{\bx} $-Lipschitz discriminator $ D_{v} $, for all $ \bh \in \mmR^{d_B} $:
		\begin{align}
		\left| {D}_v(G_u(\bh)) - {D}_v \left(\sum_{\bv} \gamma_{\bv} (\bh) G_u(\bv) \right) \right| 
		\leq L_{\bx} L_{\bh} \| \bh - \br(\bh) \| + L_{\bx} L_G \sum_{\bv \in \mC} |\gamma_{\bv} (\bh)| \| \bv - \br(\bh) \|^{2}.
		\end{align}
	\end{prop}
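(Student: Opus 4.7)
The plan is to reduce the discriminator difference to a generator-output difference via Lipschitz continuity of $D_v$, and then control $\|G_u(\bh) - \sum_\bv \gamma_\bv(\bh) G_u(\bv)\|$ by a second-order Taylor expansion of $G_u$ anchored at the reconstruction $\br(\bh) = \sum_\bv \gamma_\bv(\bh)\bv$, exploiting the partition-of-unity property $\sum_\bv \gamma_\bv(\bh) = 1$ inherent to the LCC coding to cancel the first-order term in the expansion.

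First I would apply the $L_\bx$-Lipschitz property of $D_v$ to get
\[
\left|D_v(G_u(\bh)) - D_v\!\left(\textstyle\sum_\bv \gamma_\bv(\bh) G_u(\bv)\right)\right| \leq L_\bx\left\|G_u(\bh) - \textstyle\sum_\bv \gamma_\bv(\bh) G_u(\bv)\right\|,
\]
which isolates the generator-side error as the only quantity left to bound. Next, for each anchor $\bv \in \mC$, I would Taylor-expand $G_u$ at $\br(\bh)$ and invoke condition (2) of Definition~\ref{def:Lip_smooth} to write
\[
G_u(\bv) = G_u(\br(\bh)) + \nabla G_u(\br(\bh))^\trsp(\bv - \br(\bh)) + R(\bv), \qquad \|R(\bv)\| \leq L_G\|\bv - \br(\bh)\|^2.
\]

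Forming the $\gamma_\bv(\bh)$-weighted sum and using both $\sum_\bv \gamma_\bv(\bh) = 1$ and $\sum_\bv \gamma_\bv(\bh)(\bv - \br(\bh)) = \br(\bh) - \br(\bh) = 0$ kills the constant and first-order contributions, leaving $\sum_\bv \gamma_\bv(\bh) G_u(\bv) = G_u(\br(\bh)) + \sum_\bv \gamma_\bv(\bh) R(\bv)$. Subtracting from $G_u(\bh)$, applying the triangle inequality, and using the first-order Lipschitz estimate $\|G_u(\bh) - G_u(\br(\bh))\| \leq L_\bh \|\bh - \br(\bh)\|$ (inherited from condition (1) of Definition~\ref{def:Lip_smooth}) then yields
\[
\left\|G_u(\bh) - \textstyle\sum_\bv \gamma_\bv(\bh) G_u(\bv)\right\| \leq L_\bh \|\bh - \br(\bh)\| + L_G \textstyle\sum_{\bv \in \mC} |\gamma_\bv(\bh)|\,\|\bv - \br(\bh)\|^2,
\]
and multiplying by $L_\bx$ closes the claim.

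The only non-mechanical step is the observation that the affine structure $\sum_\bv \gamma_\bv(\bh) = 1$ together with $\br(\bh) = \sum_\bv \gamma_\bv(\bh)\bv$ annihilates the first-order Taylor contribution; once that cancellation is identified, the quadratic remainder feeds directly into the $L_G\sum |\gamma_\bv(\bh)|\|\bv - \br(\bh)\|^2$ term while the gap $\|\bh - \br(\bh)\|$ is absorbed by the Lipschitz constant $L_\bh$. Everything else is a triangle inequality.
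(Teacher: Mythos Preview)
Your proposal is correct and follows essentially the same route as the paper: reduce to the generator-side error, anchor a second-order expansion of $G_u$ at $\br(\bh)$, and use $\sum_\bv \gamma_\bv(\bh)=1$ together with $\sum_\bv \gamma_\bv(\bh)\bv=\br(\bh)$ to kill the first-order contribution. The only cosmetic difference is ordering: the paper first inserts $D_v(G_u(\br(\bh)))$ and applies the $L_\bx$-Lipschitz bound to each of the two resulting pieces, whereas you apply the $L_\bx$-Lipschitz bound once up front and then split the generator difference via $G_u(\br(\bh))$; the remaining estimates coincide line for line.
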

	\begin{proof}
		Given an $ (L_{\bh}, L_{G}) $-Lipschitz smooth generator $ G_u(\bh) $, an $ L_{\bx} $-Lipschitz discriminator $ D_{v} $, and let $ \gamma_{\bv} = \gamma_{\bv}(\bh) $ and $ \bh' = \br(\bh) = \sum_{\bv \in \mC} \gamma_{\bv} \bv $. We have
		\begin{equation}
		\begin{aligned}
		&\left| \widetilde{D}_v(G_u(\bh)) - \widetilde{D}_v \left(\sum_{\bv} \gamma_{\bv} (\bh) G_u(\bv) \right) \right| \\
		=& \left| {D}_v(G_u(\bh)) - {D}_v \left(\sum_{\bv} \gamma_{\bv} (\bh) G_u(\bv) \right) \right| \\
		=& \left| {D}_v(G_u(\bh)) - {D}_v(G_u(\bh')) - \left( {D}_v \left(\sum_{\bv} \gamma_{\bv} (\bh) G_u(\bv) \right) - {D}_v(G_u(\bh')) \right)  \right| \\
		\leq& \left| {D}_v \left( G_u(\bh) \right) - {D}_v \left( G_u(\bh') \right) \right| + \left| {D}_v \left(\sum_{\bv} \gamma_{\bv} (\bh) G_u(\bv) \right) - {D}_v \left( G_u(\bh') \right)  \right| \\
		\leq& L_{\bx} \left\| G_u(\bh) - G_u(\bh') \right\| + L_{\bx} \left\| \sum_{\bv} \gamma_{\bv} (\bh) G_u(\bv) - G_u(\bh') \right\| \\
		\leq& L_{\bx} \left\| G_u(\bh) - G_u(\bh') \right\| + L_{\bx} \left\| \sum_{\bv} \gamma_{\bv} (\bh) \left( G_u(\bv) - G_u(\bh') - \Delta G_u (\bh')^{\trsp} \left( \bv - \bh' \right) \right)  \right\| \\
		\leq& L_{\bx} \left\| G_u(\bh) - G_u(\bh') \right\| + L_{\bx} \sum_{\bv \in \mC} |\gamma_{\bv}| \left\| G_u (\bv) - G_u(\bh') - \Delta G_u(\bh')^{\trsp} (\bv - \bh') \right\| \\
		\leq& L_{\bx} L_{\bh} \| \bh - \bh' \| + L_{\bx} L_{G} \sum_{\bv \in \mC} |\gamma_{\bv} | \| \bv - \bh' \|^2 \\
		=& L_{\bx} L_{\bh} \| \bh -  \br(\bh) \| + L_{\bx} L_{G} \sum_{\bv \in \mC} |\gamma_{\bv} | \| \bv - \br(\bh) \|^2,
		\end{aligned}
		\end{equation}
		where $ \widetilde{D}_v(\cdot) = 1 - D_v(\cdot) $. In the above derivation, the first inequality holds by the triangle inequality. The second inequality uses an assumption that $ D_v $ is Lipschitz smooth \wrt the input. The third inequality uses the facts that $ \sum_{\bv \in \mC} \gamma_{\bv} (\bx) = 1 $ and $ \bh' = \sum_{\bv \in \mC} \gamma_{\bv} \bv $. The last inequality uses the $ (L_{\bh}, L_{G}) $-Lipschitz smooth generator $ G_u $, that is
		\begin{align}
		\left\| G_u (\bv) - G_u(\bh') - \Delta G_u(\bh')^{\trsp} (\bv - \bh') \right\| \leq L_{G} \| \bv - \bh' \|^2.
		\end{align}
	\end{proof}

	\begin{lemma} \textbf{\emph{(Generator approximation) }} \label{lemma1}
		Let $ \br(\bh) = \sum_{\bv \in \mC} \gamma_{\bv}(\bh) \bv $, and $ (\bgamma, \mC) $ be an arbitrary coordinate coding on $ \mmR^{d_B} $.
		Given a Lipschitz smooth generator $ G_u(\bh) $, for all $ \bh \in \mmR^{d_B} $:
		\begin{align}
		\left\| G_u\left(\sum_{\bv \in \mC} \gamma_{\bv}(\bh) \bv\right) - \sum_{\bv \in \mC} \gamma_{\bv} (\bh) G_u(\bv) \right\| 
		\leq 2L_{\bh} \| \bh - \br(\bh) \| + L_G \sum_{\bv \in \mC} |\gamma_{\bv} (\bh)| \| \bv - \br(\bh) \|^{2}.
		\end{align}
	\end{lemma}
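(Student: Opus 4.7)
The plan is to exploit two structural identities of the coordinate coding, namely $\sum_{\bv \in \mC} \gamma_{\bv}(\bh) = 1$ and $\br(\bh) = \sum_{\bv \in \mC} \gamma_{\bv}(\bh) \bv$, together with the second item in Definition~\ref{def:Lip_smooth} (the quadratic Taylor-remainder bound). The point is that if we expand the generator around the anchor $\br(\bh)$ rather than around $\bh$, the first-order correction cancels identically when averaged against the weights $\gamma_{\bv}(\bh)$, leaving only the quadratic remainder. This is essentially the same mechanism used inside Proposition~\ref{lemma: DG Approximation}, but stripped of the outer Lipschitz wrapping by the discriminator $D_v$.

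First I would use $\sum_{\bv} \gamma_{\bv}(\bh) = 1$ to rewrite the left-hand side as
\begin{equation*}
G_u(\br(\bh)) - \sum_{\bv \in \mC} \gamma_{\bv}(\bh)\, G_u(\bv) \;=\; \sum_{\bv \in \mC} \gamma_{\bv}(\bh) \bigl[G_u(\br(\bh)) - G_u(\bv)\bigr].
\end{equation*}
Second, I would apply item~(2) of the Lipschitz smoothness definition at the expansion point $\br(\bh)$, yielding $\|G_u(\bv) - G_u(\br(\bh)) - \nabla G_u(\br(\bh))^{\trsp}(\bv - \br(\bh))\| \leq L_G \|\bv - \br(\bh)\|^2$ for each $\bv \in \mC$. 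Third, I would substitute this Taylor expansion into the previous display; the linear piece aggregates to $\nabla G_u(\br(\bh))^{\trsp}\sum_{\bv}\gamma_{\bv}(\bh)(\bv - \br(\bh))$, which vanishes identically because $\sum_{\bv}\gamma_{\bv}(\bh)\bv = \br(\bh)$ and $\sum_{\bv}\gamma_{\bv}(\bh) = 1$. Finally, I would apply the triangle inequality to the remaining remainders, producing the bound $L_G \sum_{\bv \in \mC} |\gamma_{\bv}(\bh)|\,\|\bv - \br(\bh)\|^2$, which matches the second term on the right-hand side.

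The main obstacle is not a hard estimate but a choice of expansion point: Taylor-expanding at $\bh$ instead of at $\br(\bh)$ would produce a remainder involving $\|\bv - \bh\|^2$ and would leave an uncancelled gradient term, so the correct reduction must anchor the expansion at $\br(\bh)$. The remaining $2L_{\bh}\|\bh - \br(\bh)\|$ summand on the right-hand side is a non-negative slack that may be included at will; one natural way to justify its presence is to insert $\pm G_u(\bh)$ via the triangle inequality and bound $\|G_u(\br(\bh)) - G_u(\bh)\| \leq L_{\bh}\|\bh - \br(\bh)\|$ by item~(1) of Definition~\ref{def:Lip_smooth}, together with a matching gradient-bound contribution of $L_{\bh}\|\bh - \br(\bh)\|$ from a first-order expansion at $\bh$. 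Either route yields the stated inequality, and the form that keeps the $\|\bh - \br(\bh)\|$ term is the convenient one for the later generalization analysis where $\bh$ itself enters as the latent sample.
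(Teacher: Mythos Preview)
Your argument is correct and in fact establishes the stronger inequality
\[
\left\| G_u(\br(\bh)) - \sum_{\bv \in \mC} \gamma_{\bv}(\bh)\, G_u(\bv) \right\| \;\leq\; L_G \sum_{\bv \in \mC} |\gamma_{\bv}(\bh)|\,\| \bv - \br(\bh) \|^{2},
\]
from which the stated bound follows by adding the non-negative term $2L_{\bh}\|\bh - \br(\bh)\|$. The cancellation of the first-order piece via $\sum_{\bv}\gamma_{\bv}(\bh)(\bv - \br(\bh)) = 0$ is exactly the right observation.

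The paper's proof takes a slightly more roundabout path: it first proves Proposition~\ref{lemma: DG Approximation}, which bounds $|D_v(G_u(\bh)) - D_v(\sum_{\bv}\gamma_{\bv}(\bh)G_u(\bv))|$ (note the argument $\bh$, not $\br(\bh)$), then specializes to $D_v = \mathrm{id}$ to obtain $\|G_u(\bh) - \sum_{\bv}\gamma_{\bv}(\bh)G_u(\bv)\| \leq L_{\bh}\|\bh - \br(\bh)\| + L_G\sum_{\bv}|\gamma_{\bv}(\bh)|\,\|\bv - \br(\bh)\|^2$, and finally inserts $\pm G_u(\bh)$ by the triangle inequality to pass from $G_u(\bh)$ to $G_u(\br(\bh))$, picking up a second $L_{\bh}\|\bh - \br(\bh)\|$. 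Internally, Proposition~\ref{lemma: DG Approximation} uses the same Taylor expansion at $\br(\bh)$ with the same gradient cancellation that you use, so the core mechanism is identical; the difference is that the paper detours through $G_u(\bh)$, which is where the $2L_{\bh}$ coefficient arises. Your direct expansion at $\br(\bh)$ avoids this detour and shows that the $2L_{\bh}\|\bh - \br(\bh)\|$ term is pure slack for this lemma, though it is retained in the paper because downstream statements (the localization measure $Q_{L_{\bh},L_G}$ and the generalization bounds) are phrased in terms of both summands.
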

	
	\begin{proof}
		From Lemma \ref{lemma: DG Approximation}, when the discriminator is identity function: $ D_v(t) = t $, that is
		\begin{equation}
		\begin{aligned}
		\left| {D}_v(G_u(\bh)) - {D}_v \left(\sum_{\bv} \gamma_{\bv} (\bh) G_u(\bv) \right) \right| &=
		\left\| G_u(\bh) - \sum_{\bv} \gamma_{\bv} (\bh) G_u(\bv) \right\| \\
		&\leq L_{\bh} \| \bh -  \br(\bh) \| + L_{G} \sum_{\bv \in \mC} |\gamma_{\bv} | \| \bv - \br(\bh) \|^2,
		\end{aligned}
		\end{equation}
		then, we have
		\begin{equation}
		\begin{aligned}
		\left\| G_u \left(\sum_{\bv \in \mC} \gamma_{\bv}(\bh) \bv\right) - \sum_{\bv \in \mC} \gamma_{\bv} (\bh) G_u(\bv) \right\|
		&= \left\| G_u \left(\sum_{\bv \in \mC} \gamma_{\bv}(\bh) \bv\right) - G_u\left(\bh\right) + G_u\left(\bh\right) - \sum_{\bv \in \mC} \gamma_{\bv} (\bh) G_u(\bv) \right\| \\
		&\leq \left\| G_u \left(\sum_{\bv \in \mC} \gamma_{\bv}(\bh) \bv\right) - G_u\left(\bh\right) \right\| + \left\| G_u\left(\bh\right) - \sum_{\bv \in \mC} \gamma_{\bv} (\bh) G_u(\bv) \right\| \\
		&\leq 2L_{\bh} \| \bh - \br(\bh) \| + L_G \sum_{\bv \in \mC} |\gamma_{\bv} (\bh)| \| \bv - \br(\bh) \|^{2},
		\end{aligned}
		\end{equation}
		where $ \br(\bh) = \sum_{\bv \in \mC} \gamma_{\bv}(\bh) \bv $.
	\end{proof}

	\section{Proofs of Lemma \ref{lemma2}} \label{sec:proofs_lamma2}
	\begin{lemma}\label{lemma2} 
		\textbf{\emph{(Improved generator approximation) }} 
		Let $ \br(\bh) = \sum_{\bv \in \mC} \gamma_{\bv}(\bh) \bv $, and $ (\bgamma, \mC) $ be an arbitrary coordinate coding on $ \mmR^{d_B} $.
		Given a $ (L_{\bh}, L_{\nu}) $-Lipschitz smooth generator $ G_{u}(\bh) $, for all $ \bh \in \mmR^{d_B} $:
		\begin{equation}
		\begin{aligned}
		\left\| G_{u}\left(\br(\bh)\right) {-} \sum\limits_{\bv \in \mC} \gamma_{\bv}(\bh) \left( G_{u}(\bv) {+} \frac{1}{2} \nabla G_{u}(\bv)^{\trsp} (\bh {-} \bv) \right)  \right\|  
		\leq 2 L_{\bh} \| \bh {-} \br(\bh) \| {+} L_{\nu} \sum_{\bv \in \mC} |\gamma_{\bv} (\bh)| {\cdot} \| \bv {-} \br(\bh) \|^{3},
		\end{aligned}
		\end{equation}
		where $ \br(\bh) {=} \sum\limits_{\bv {\in} \mC} \gamma_{\bv} (\bh) \bv $.
	\end{lemma}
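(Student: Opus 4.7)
The plan is to adapt the proof of Lemma \ref{lemma1} by upgrading its second-order Taylor bound (item 2 of Definition \ref{def:Lip_smooth}) to the third-order trapezoidal bound (item 3). The new input is the identity $f(\bx') - f(\bx) - \tfrac12(\nabla f(\bx') + \nabla f(\bx))^\trsp(\bx' - \bx) = R_{\bx, \bx'}$ with $\|R_{\bx,\bx'}\| \leq L_\nu \|\bx - \bx'\|^3$, which I invoke centered at $\br(\bh)$ so that the cubic residual naturally takes the form $\|\bv - \br(\bh)\|^3$.

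Concretely, I split $\bh - \bv = (\bh - \br(\bh)) + (\br(\bh) - \bv)$ inside the linear correction, and use $\sum_\bv \gamma_\bv = 1$ to fold $G_u(\br(\bh))$ into the sum, obtaining
\begin{align*}
& G_u(\br(\bh)) - \sum_\bv \gamma_\bv \bigl[G_u(\bv) + \tfrac12 \nabla G_u(\bv)^\trsp(\bh - \bv)\bigr] \\
={}& \sum_\bv \gamma_\bv \bigl[G_u(\br(\bh)) - G_u(\bv) - \tfrac12 \nabla G_u(\bv)^\trsp(\br(\bh) - \bv)\bigr] - \tfrac12 \sum_\bv \gamma_\bv \nabla G_u(\bv)^\trsp(\bh - \br(\bh)).
\end{align*}
Applying item 3 of Definition \ref{def:Lip_smooth} at the pair $(\br(\bh), \bv)$ inside the first sum and then invoking the barycentric identity $\sum_\bv \gamma_\bv(\br(\bh) - \bv) = 0$, which follows immediately from $\br(\bh) = \sum_\bv \gamma_\bv \bv$ and $\sum_\bv \gamma_\bv = 1$, cancels the $\nabla G_u(\br(\bh))$ contribution and leaves exactly $\sum_\bv \gamma_\bv R_\bv$ with $\|R_\bv\| \leq L_\nu \|\br(\bh) - \bv\|^3$; the triangle inequality then produces the stated cubic term $L_\nu \sum_\bv |\gamma_\bv| \|\bv - \br(\bh)\|^3$.

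For the leftover linear correction, item 1 of Definition \ref{def:Lip_smooth} bounds each $\|\nabla G_u(\bv)^\trsp(\bh - \br(\bh))\|$ by $L_\bh\|\bh - \br(\bh)\|$, and combining the weighted sum with an initial triangle-inequality pivot $\|G_u(\br(\bh)) - G_u(\bh)\| \leq L_\bh \|\bh - \br(\bh)\|$ (parallel to the split used in the proof of Lemma \ref{lemma1}) gives the claimed $2L_\bh \|\bh - \br(\bh)\|$ prefactor, under the standard LCC hypothesis that $\sum_\bv |\gamma_\bv|$ is uniformly bounded.

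The main obstacle is essentially bookkeeping: one must expand the linear term in $\bh - \bv$ along $(\br(\bh) - \bv)$ so that the third-order trapezoidal bound is applied at a point where the first moment of the codings vanishes, and must then separately dispose of the transverse linear correction pointing from $\br(\bh)$ to $\bh$. Once the higher-order term is correctly centered, the cubic residual emerges with the exact constant $L_\nu$ and the exact radius $\|\bv - \br(\bh)\|^3$ required by the statement, and the rest of the argument is a direct parallel to the proof of Lemma \ref{lemma1}.
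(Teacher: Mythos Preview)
Your strategy is essentially the paper's --- split off a Lipschitz piece, apply the third-order trapezoidal bound termwise, and use the barycentric identity to kill the extra gradient term --- but with one meaningful difference in where you center. The paper first pivots through $G_u(\bh)$ via $\|G_u(\br(\bh))-G_u(\bh)\|\le L_\bh\|\bh-\br(\bh)\|$ and then applies item~3 of Definition~\ref{def:Lip_smooth} at the pair $(\bh,\bv)$. Because the added gradient is $\nabla G_u(\bh)$ at the \emph{fixed} point $\bh$, the barycentric identity collapses $\tfrac12\sum_\bv\gamma_\bv\nabla G_u(\bh)^\trsp(\bv-\bh)$ to the single term $\tfrac12\nabla G_u(\bh)^\trsp(\br(\bh)-\bh)$, bounded by $\tfrac12 L_\bh\|\bh-\br(\bh)\|$ with no $\sum_\bv|\gamma_\bv(\bh)|$ factor; together with the pivot this yields $(1+\tfrac12)L_\bh\le 2L_\bh$ directly. (The paper's derivation then ends with $\|\bh-\bv\|^3$ rather than $\|\bv-\br(\bh)\|^3$.)

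Your centering at $\br(\bh)$ does produce the cubic residual in exactly the form $\|\bv-\br(\bh)\|^3$ stated in the lemma, but the leftover linear correction is $\tfrac12\sum_\bv\gamma_\bv\nabla G_u(\bv)^\trsp(\bh-\br(\bh))$ with the gradient at \emph{varying} $\bv$, so it cannot be collapsed by the first-moment identity and genuinely carries a $\sum_\bv|\gamma_\bv(\bh)|$ factor. The ``initial triangle-inequality pivot'' you invoke does not actually occur in your decomposition (you already started from $G_u(\br(\bh))$, not $G_u(\bh)$), so it cannot supply the missing piece; and the extra boundedness hypothesis on $\sum_\bv|\gamma_\bv|$ is not needed if you instead follow the paper's centering at $\bh$.
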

	\begin{proof}
		Let $\bh' = \br(\bh) $, we have
		\begin{equation}
		\begin{aligned}
		&\left\| G_{u}\left(\br(\bh)\right) - G_{u}(\bh) + G_{u}(\bh) - \sum\limits_{\bv \in \mC} \gamma_{\bv}(\bh) \left( G_{u}(\bv) + \frac{1}{2} \nabla G_{u}(\bv)^{\trsp} (\bh - \bv) \right)  \right\| \\
		\leq& \left\| G_{u}\left(\br(\bh)\right) - G_{u}(\bh) \right\| + \left\| G_{u}(\bh) - \sum\limits_{\bv \in \mC} \gamma_{\bv}(\bh) \left( G_{u}(\bv) + \frac{1}{2} \nabla G_{u}(\bv)^{\trsp} (\bh - \bv) \right)  \right\|  \\
		=& L_{\bh} \left\| \bh - \bh' \right\| + \left\| \sum\limits_{\bv \in \mC} \gamma_{\bv}(\bh) \left( G_{u}(\bv) - G_{u}(\bh) - \frac{1}{2} \nabla G_{u}(\bv)^{\trsp} (\bv - \bh') + \frac{1}{2} \nabla G_{u}(\bv)^{\trsp} (\bh - \bv)  \right) \right\| \\
		\leq& L_{\bh} \left\| \bh - \bh' \right\| + \frac{1}{2} \| \nabla G_u (\bh)^{\trsp} (\bh - \bh') \| + \left\| \sum\limits_{\bv \in \mC} \gamma_{\bv}(\bh) \left( G_{u}(\bv) - G_{u}(\bh) - \frac{1}{2} \left(\nabla G_{u}(\bh) + \nabla G_{u}(\bv)\right)^{\trsp} (\bv - \bh)  \right) \right\| \\
		\leq& 2 L_{\bh} \left\| \bh - \bh' \right\| + \left\| \sum\limits_{\bv \in \mC} \gamma_{\bv}(\bh) \left( G_{u}(\bv) - G_{u}(\bh) - \frac{1}{2} \left(\nabla G_{u}(\bh) + \nabla G_{u}(\bv)\right)^{\trsp} (\bv - \bh)  \right) \right\| \\
		\leq& 2 L_{\bh} \left\| \bh - \bh' \right\| + L_{\nu} \sum\limits_{\bv \in \mC} \gamma_{\bv}(\bh) \| \bh - \bv \|^3.
		\end{aligned}
		\end{equation}
	\end{proof}
	
	\newpage
	\section{Proof of Theorem \ref{theorem: Further Generalization Bound}} \label{sec:thm3}
	First, we introduce the following definition to measure the locality of a coding in LCCGAN++.
	\begin{deftn}~\textbf{\emph{(Localization measure)}} \label{def:local_measure_v2}
		Given $ L_{\bh}, L_{G} $, and coding $ (\bgamma, \mC) $, we define the localization measure $Q_{L_{\bh}, L_{G}}(\bgamma, \mC)$ as
		\begin{align}
		Q_{L_{\bh}, L_{\nu}}(\bgamma, \mC) = 2L_{\bh} \| \bh {-} \br(\bh) \| {+} L_{\nu} \sum_{\bv \in \mC} |\gamma_{\bv} (\bh)| {\cdot} \| \bv {-} \br(\bh) \|^{3}.
		\end{align}
	\end{deftn}
	
	When the latent points lie on a latent manifold and the generator is Lipschitz smooth, we slightly extend Lemma \ref{lemma: manifold_coding} based on \cite{yu2009nonlinear}.
	Then, $ Q_{L_{\bh}, L_{\nu}} (\bgamma, \mC) $ has a bound as follows.
	
	\begin{lemma}
		\label{lemma: manifold_coding_v2}
		If the latent points lie on a compact smooth manifold $ \mM $, given an $ (L_{\bh}, L_{\nu}) $-Lipschitz smooth generator $ G_{u}(\bh) $ and any $ \epsilon > 0 $, then there exist anchor points $ \mC \subset \mM $ and coding $ \bgamma $ such that 
		\begin{align}
		Q_{L_{\bh}, L_{\nu}} (\bgamma, \mC)
		\leq \left[ 2 L_{\bh}  c_{\mM} + \left(1 + \sqrt{d_{\mM}} + 8 \sqrt{d_{\mM}} \right) L_{\nu} \right] \epsilon^{3},
		\end{align}
		where $ d_{\mM} $ is the dimension of the latent manifold.
	\end{lemma}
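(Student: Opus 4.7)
The plan is to adapt the covering-argument construction of Yu et al.~\cite{yu2009nonlinear} (and the earlier Lemma~\ref{lemma: manifold_coding} used for LCCGAN with $q{=}2$) to the cubic-error setting induced by the improved generator approximation in Lemma~\ref{lemma2}. The overall idea is to take $\mC$ to be a sufficiently dense sample of $\mM$, and then, for each query point $\bh \in \mM$, to build a coding $\bgamma$ supported on roughly $1{+}d_{\mM}$ nearby anchors whose affine span agrees with the tangent space of $\mM$ at $\bh$ up to high-order error.

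First, since $\mM$ is compact, for any scale parameter $\epsilon > 0$ one can extract a finite $\mC \subset \mM$ that is an $\epsilon$-net of the manifold and, because $\mM$ has intrinsic dimension $d_{\mM}$, can be chosen so that around any point the local geometry contains $d_{\mM}$ anchors whose pairwise displacements span the tangent space up to a controlled perturbation. Second, for each fixed $\bh$, I would choose the nearest anchor $\bv_0 \in \mC$ (at distance $\leq \epsilon$) and select $d_{\mM}$ further anchors $\bv_1,\ldots,\bv_{d_{\mM}}$ within $O(\epsilon)$ of $\bv_0$ whose displacements approximate the tangent bases guaranteed by Definition~\ref{definition: manifold}. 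Applying that definition first to $\bh$ around $\bv_0$, and then again to lift the discrete anchors into tangent coordinates, produces weights $\gamma_0,\gamma_1,\ldots,\gamma_{d_{\mM}}$ summing to one such that $\br(\bh)=\sum_{j}\gamma_{j}\bv_{j}$ matches $\bh$ to cubic order in $\epsilon$.

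The bound on $Q_{L_{\bh},L_{\nu}}(\bgamma,\mC)$ then splits into two pieces. For the first piece $2L_{\bh}\|\bh-\br(\bh)\|$, a single application of the manifold definition gives only an $O(\epsilon^{2})$ residual, so one must combine the two expansions of Step~2 (base anchor plus tangent anchors) so that the quadratic terms cancel up to $c_{\mM}\epsilon^{3}$; this is where the $2L_{\bh}c_{\mM}$ coefficient in the claimed bound is produced. For the second piece $L_{\nu}\sum_{\bv}|\gamma_{\bv}|\,\|\bv-\br(\bh)\|^{3}$, every supporting anchor satisfies $\|\bv-\br(\bh)\|\leq\epsilon$, reducing the task to bounding $\sum_{\bv}|\gamma_{\bv}|$; a Cauchy--Schwarz and norm-equivalence argument on the $d_{\mM}$-dimensional tangent coordinates, together with the normalization $\sum_{\bv}\gamma_{\bv}=1$, yields the explicit constant $1+\sqrt{d_{\mM}}+8\sqrt{d_{\mM}}$ that appears in the statement.

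The main obstacle will be the cubic-order cancellation in the first piece: the manifold definition only promises a quadratic residual, so the proof must exploit the extra freedom in the coding (in particular, perturbing the weights $\gamma_{j}$ off their tangent values) to absorb the leftover $O(\epsilon^{2})$ term into a higher-order correction. A secondary difficulty is quantifying the packing argument well enough to extract the specific $\sqrt{d_{\mM}}$ scaling of the weight bound, which requires ensuring that the chosen tangent-direction anchors are in sufficiently general position within the $\epsilon$-net and that the induced linear system has singular values bounded away from zero uniformly over $\bh$.
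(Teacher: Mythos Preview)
Your proposal is far more detailed than the paper's own proof, which consists of a single sentence: ``Using the conclusion of \cite{yu2009nonlinear}, we directly have this lemma.'' The paper treats this lemma as an immediate corollary of the manifold-coding result in Yu et al., without spelling out any of the covering, anchor-selection, or weight-bounding steps you describe.

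What you do differently is actually attempt to reconstruct the argument, and in doing so you flag a real tension that the paper's one-line citation hides. The localization measure here is $Q_{L_{\bh},L_{\nu}}(\bgamma,\mC)=2L_{\bh}\|\bh-\br(\bh)\|+L_{\nu}\sum_{\bv}|\gamma_{\bv}|\,\|\bv-\br(\bh)\|^{3}$, and the claimed bound is $O(\epsilon^{3})$. The cubic piece is unproblematic once $\|\bv-\br(\bh)\|\leq\epsilon$ and the $\ell_{1}$-norm of the weights is controlled, exactly as you outline. But the first piece $2L_{\bh}\|\bh-\br(\bh)\|$ is the same term that appears in the $q{=}2$ lemma, where it is bounded by $2L_{\bh}c_{\mM}\epsilon^{2}$ using Definition~\ref{definition: manifold}; to get the stated $2L_{\bh}c_{\mM}\epsilon^{3}$ here one genuinely needs an extra order of approximation beyond what the manifold definition alone provides. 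Your identification of this ``cubic-order cancellation'' as the main obstacle is accurate, and the paper does not resolve it---it simply defers to \cite{yu2009nonlinear}. Whether the cited reference (or its improved variant \cite{yu2010improved}) actually delivers that extra order is something neither the paper nor your proposal fully settles; your suggestion of perturbing the weights to absorb the $O(\epsilon^{2})$ residual is a reasonable direction but would need to be made precise.

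In short: your approach is a genuine attempt at a proof where the paper offers only a pointer, and the difficulty you isolate is real and unaddressed in the paper itself.
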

	\begin{proof}
		Using the conclusion of \cite{yu2009nonlinear}, we directly have this lemma.
	\end{proof}
	In Lemma \ref{lemma: manifold_coding_v2}, the complexity of the local coordinate coding depends on the intrinsic dimension of the latent manifold instead of the dimension of the basis.
	
	\noindent\textbf{Theorem \ref{theorem: Further Generalization Bound}} 
	\emph{
		Suppose that $ \phi(\cdot) $ is Lipschitz smooth, and bounded in $ [-\Delta, \Delta] $. Given an sample set $ \mH $ in the latent space and an empirical distribution $ \widehat{\mD}_{\emph{\real}} $ with $ N $ samples drawn from $ \mD_{\emph{\real}} $, the following inequation holds with probability at least $ 1 - \delta $,
		\begin{align}
		\left| \mmE_{\mH} \left[d_{\mF, \phi} \left(\widehat{\mD}_{G_{\widehat{w}}}, {\mD}_{\emph{real}} \right) \right]
		{-} \inf_{ \mG } \mmE_{\mH} \left[ d_{\mF, \phi} \left({\mD}_{G_u},  {\mD}_{\emph{real}} \right) \right] \right| 
		\leq 2 {R}_{\mX}(\mF) + 2 \Delta \sqrt{\frac{2}{N} \log\left(\frac{1}{\delta}\right)} + 2\epsilon(d_{\mM}),
		\end{align}
		where $ {R}_{\mX}(\mF) $ is the Rademacher complexity of $\mF $ and {$ \epsilon(d_{\mM}) = L_{\phi} Q_{L_{\bh}, L_{\nu}} (\bgamma, \mC) + 2\Delta $}. 
	}

	\noindent
	\begin{proof}
		Based on Theorem \ref{theorem: generalization_LCCGAN_v2} and Lemma \ref{lemma: manifold_coding_v2}, we directly finish the proof.
	\end{proof}
	
	\noindent\textbf{Corollary \ref{coll:generalization_relu_D}} 
	\emph{
		Let $ \mX $ be the unit ball of $ \mmR^d $ under the $ \ell_2 $-norm, \ie $ \mX {=} \{ \bx {\in} \mmR^d{:\;} \| \bx \| {\leq} 1 \} $.
		Assume that the discriminator set $ \mF $ is the set of neural networks with a rectified linear unit (ReLU), 
		\[\mF {=} \left\{ \max \{ \bw^{\trsp}[\bx; 1], 0\}: \bw {\in} \mmR^{d+1}, \| \bw \| {=} 1 \right\},\] 
		then with probability at least $ 1-\delta $,
		\begin{align}
		\left| \mmE_{\mH} \left[d_{\mF, \phi} \left(\widehat{\mD}_{G_{\widehat{w}}}, {\mD}_{\emph{\real}} \right) \right]			{-} \inf_{ \mG } \mmE_{\mH} \left[ d_{\mF, \phi} \left({\mD}_{G_{u}},  {\mD}_{\emph{\real}} \right) \right] \right| 
		\leq 2 \Delta \sqrt{\frac{2}{N} \log \left(\frac{1}{\delta}\right)} + \frac{4\sqrt{2}}{\sqrt{N}} + 2\epsilon(d_{\mM}).
		\end{align}
	}

	\noindent
	\begin{proof}
		Part of the proof is from \cite{bach2017breaking, zhang2018on}. 
		Based on the definition of Rademacher complexity, we first estimate $ R_{\mX} (\mF) $ as follows,
		\begin{equation}
		\begin{aligned}
		R_{\mX} (\mF) =& \mmE_{\bsigma} \left[ \sup_{\|\bw\|=1} \left| \frac{2}{N} \sum_i \sigma_i \max(\bw^{\trsp} [\bx_i; 1], 0) \right|  \right] \\
		\leq& \mmE_{\bsigma} \left[ \sup_{\|\bw\|=1} \left| \frac{2}{N} \sum_i \sigma_i \bw^{\trsp} [\bx_i; 1] \right|  \right] \\
		=& \frac{2}{N} \mmE_{\bsigma} \left[ \left\| \sum_i \sigma_i [\bx_i, 1] \right\| \right] \\
		\leq& \frac{2 \sqrt{2}}{N}.
		\end{aligned}
		\end{equation}
		The second line uses the $1$-Lipschitz property of $\max(x, 0)$ and the third line follows by Talagrand's contraction lemma \cite{ledoux2013probability}.
		The last line holds by the Rademacher complexity of linear functions \cite{hsu2009multi}. 
		Then, we use this inequality and Theorem \ref{theorem: Further Generalization Bound} to to prove the result.
	\end{proof}

	\newpage
	\section{Proof of Theorem \ref{theorem: supp_Generalization Bound}} \label{sec:thm1}
	First, we introduce the following definition to measure the locality of a coding in LCCGAN \cite{cao2018adversarial}. 
	\begin{deftn}~\textbf{\emph{(Localization measure)}} \label{def:local_measure}
		Given $ L_{\bh}, L_{G} $, and coding $ (\bgamma, \mC) $, we define the localization measure $Q_{L_{\bh}, L_{G}}(\bgamma, \mC)$ as
		\begin{align}
		Q_{L_{\bh}, L_{G}}(\bgamma, \mC) = 2L_{\bh} \| \bh {-} \br(\bh) \| {+} L_{G} \sum_{\bv \in \mC} |\gamma_{\bv} (\bh)| {\cdot} \| \bv {-} \br(\bh) \|^{2}.
		\end{align}
	\end{deftn}
	
	When the latent points lie on a latent manifold and the generator is Lipschitz smooth,  $ Q_{L_{\bh}, L_{G}} (\bgamma, \mC) $ has a bound as follows.
	\begin{lemma}\textbf{\emph{(Manifold coding \cite{yu2009nonlinear})}}
		\label{lemma: manifold_coding}
		If the latent points lie on a compact smooth manifold $ \mM $, given an $ (L_{\bh}, L_{G}) $-Lipschitz smooth generator $ G_u(\bh) $ and any $ \epsilon > 0 $, then there exist anchor points $ \mC \subset \mM $ and coding $ \bgamma $ such that 
		\begin{align}
		Q_{L_{\bh}, L_{G}} (\bgamma, \mC)
		\leq \left[ 2L_{\bh}  c_{\mM} + \left(1 + \sqrt{d_{\mM}} + 4 \sqrt{d_{\mM}} \right) L_G \right] \epsilon^{2}.
		\end{align}
	\end{lemma}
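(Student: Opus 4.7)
The plan is to construct an explicit anchor set $\mC$ and coding $\bgamma$ and then verify the bound on each term of $Q_{L_{\bh}, L_G}(\bgamma, \mC)$, following the scheme of \cite{yu2009nonlinear}. The construction rests on three ingredients: a sufficiently dense net on $\mM$, the local second-order approximation guaranteed by Definition \ref{definition: manifold}, and tight bookkeeping of the $\ell_1$-mass $\sum_{\bv}|\gamma_{\bv}(\bh)|$.

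First I would pick an $\epsilon$-net $\mN_\epsilon \subset \mM$, which exists by compactness, so that for every $\bh \in \mM$ there is a center $\bh_0 \in \mN_\epsilon$ with $\|\bh - \bh_0\| \leq \epsilon$. For each $\bh_0 \in \mN_\epsilon$ I would invoke Definition \ref{definition: manifold} to obtain tangent directions $\bv_1(\bh_0), \ldots, \bv_{d_{\mM}}(\bh_0)$ and then project $\bh_0 + \epsilon\, \bv_j(\bh_0)$ back onto $\mM$ to form ``satellite'' anchors $\bv_{0,j}$. The union of the $\bh_0$'s and their satellites constitutes $\mC$. By the manifold definition applied to the projected point, $\bv_{0,j} - \bh_0 = \epsilon\, \bv_j(\bh_0) + O(\epsilon^2)$ in operator norm.

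Next I would define the coding at a query point $\bh$. Let $\bh_0$ be the nearest center, and apply Definition \ref{definition: manifold} at $\bh_0$ to obtain coefficients $\alpha_1,\ldots,\alpha_{d_{\mM}}$ with $\|\bh-\bh_0 - \sum_j \alpha_j \bv_j(\bh_0)\| \leq c_{\mM}\|\bh-\bh_0\|^2$. Setting $\gamma_{\bv_{0,j}}(\bh) = \alpha_j/\epsilon$, $\gamma_{\bh_0}(\bh) = 1 - \sum_j \alpha_j/\epsilon$, and all other weights zero, I get $\sum_{\bv}\gamma_{\bv}(\bh) = 1$, and substitution yields $\br(\bh) = \bh_0 + \sum_j \alpha_j \bv_j(\bh_0) + O(\epsilon^2)$. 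Hence $\|\bh - \br(\bh)\| \leq c_{\mM}\epsilon^2 + O(\epsilon^2)$, which contributes $2L_{\bh}c_{\mM}\epsilon^2$ to $Q_{L_{\bh},L_G}$. For the second term, every active anchor lies within distance $O(\epsilon)$ of $\br(\bh)$, so $\|\bv-\br(\bh)\|^2 = O(\epsilon^2)$; combining this with the Cauchy--Schwarz bound $\sum_j |\alpha_j|/\epsilon \leq \sqrt{d_{\mM}}$ (valid because $\sum_j\alpha_j^2 \leq \|\bh-\bh_0\|^2 + O(\epsilon^3) \leq \epsilon^2$) gives total active mass $\sum_{\bv}|\gamma_{\bv}(\bh)| \leq 1 + \sqrt{d_{\mM}}$. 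A careful accounting of the additional quadratic error incurred when replacing the tangent direction $\bv_j(\bh_0)$ by the chord $(\bv_{0,j}-\bh_0)/\epsilon$ contributes the extra $4\sqrt{d_{\mM}}L_G\epsilon^2$, and summing yields the claimed bound.

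The main obstacle I anticipate is the simultaneous propagation of the $O(\epsilon^2)$ chord-versus-tangent error through both $\|\bh - \br(\bh)\|$ and the weighted sum $\sum_{\bv}|\gamma_{\bv}(\bh)|\cdot\|\bv-\br(\bh)\|^2$: a naive triangle-inequality bound loses one power of $\epsilon$ in the second term, and one must exploit cancellation between the curvature error arising in the satellite construction and the curvature error in the $\alpha_j$-representation. This bookkeeping is precisely what produces the final constant $1+\sqrt{d_{\mM}}+4\sqrt{d_{\mM}}$, and it is the only nontrivial step of the argument; the remaining dependence on $L_{\bh}$, $L_G$, $c_{\mM}$, and $\epsilon$ follows directly from the manifold and Lipschitz-smoothness assumptions.
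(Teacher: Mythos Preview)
The paper does not actually supply a proof of this lemma: it is quoted verbatim from \cite{yu2009nonlinear} (hence the label ``Manifold coding \cite{yu2009nonlinear}''), and the companion Lemma~\ref{lemma: manifold_coding_v2} is dispatched with the one-line remark ``Using the conclusion of \cite{yu2009nonlinear}, we directly have this lemma.'' So there is nothing in the paper to compare your argument against beyond the citation itself.

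Your sketch is precisely the construction carried out in \cite{yu2009nonlinear}: an $\epsilon$-net of centers on $\mM$, $d_{\mM}$ satellite anchors per center obtained by pushing along the tangent directions of Definition~\ref{definition: manifold} and projecting back to $\mM$, and a coding that places mass $1-\sum_j\alpha_j/\epsilon$ on the nearest center and $\alpha_j/\epsilon$ on its $j$-th satellite. Your identification of the three contributions---$2L_{\bh}c_{\mM}\epsilon^{2}$ from the second-order manifold approximation, $(1+\sqrt{d_{\mM}})L_G\epsilon^{2}$ from the center term via $|\gamma_{\bh_0}|\le 1+\sqrt{d_{\mM}}$ and $\|\bh_0-\br(\bh)\|\le\epsilon$, and $4\sqrt{d_{\mM}}L_G\epsilon^{2}$ from the satellites via $\sum_j|\alpha_j|/\epsilon\le\sqrt{d_{\mM}}$ and $\|\bv_{0,j}-\br(\bh)\|\le 2\epsilon$---is exactly how the constant $1+\sqrt{d_{\mM}}+4\sqrt{d_{\mM}}$ arises.

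One small caution on the point you flag as the ``main obstacle'': the chord-versus-tangent discrepancy $\bv_{0,j}-\bh_0-\epsilon\bv_j(\bh_0)=O(\epsilon^{2})$ feeds into $\br(\bh)$ with total weight $\sum_j|\alpha_j|/\epsilon\le\sqrt{d_{\mM}}$, so a priori it contributes an extra $\sqrt{d_{\mM}}\cdot O(\epsilon^{2})$ to $\|\bh-\br(\bh)\|$. The clean $2L_{\bh}c_{\mM}\epsilon^{2}$ bound therefore requires either absorbing this into the manifold constant $c_{\mM}$ (which is how \cite{yu2009nonlinear} handles it, since $c_{\mM}$ already quantifies second-order curvature) or working directly with $\bh_0+\epsilon\bv_j(\bh_0)$ as anchors when $\mC\subset\mM$ is not insisted upon. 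This is a bookkeeping detail rather than a gap, but it is worth making explicit if you write the argument out in full.
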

	
	This lemma shows that the complexity of LCC coding depends on the intrinsic dimension of the manifold instead of the basis.
	Based on Lemma \ref{lemma: manifold_coding}, we have the following generalization bound on $\hat{\mD}_{\real}$ to develop the generalization analysis of LCCGAN.
	
	\begin{thm}\label{theorem: supp_Generalization Bound}
		Suppose measuring function $ \phi(\cdot) $ is Lipschitz smooth: $ | \phi'(\cdot) | \leq L_{\phi} $, and bounded in $ [-\Delta, \Delta] $. Consider coordinate coding $ (\bgamma, \mC) $, an example set $ \mH $ in latent space and the empirical distribution $ \widehat{\mD}_{\emph{real}} $, if the generator is Lipschitz smooth, then the expected generalization error satisfies the inequality:
		\begin{align}
		\mmE_{\mH} \left[ d_{\mF, \phi} \left(\widehat{\mD}_{G_{\widehat{w}}\left( \bgamma(\bh) \right)}, \widehat{\mD}_{\emph{real}} \right)
		\right] 
		\leq \inf_{ \mG } \mmE_{\mH} \left[ d_{\mF, \phi} \left( {\mD}_{ G_{u} (\bh)}, \widehat{\mD}_{\emph{real}} \right) \right] + \epsilon(d_{\mM}),
		\end{align}
		where $ \epsilon(d_{\mM}) = L_{\phi} Q_{L_{\bh}, L_{G}} (\bgamma, \mC) + 2\Delta $, and generative quality $ Q_{L_{\bh}, L_{G}} (\bgamma, \mC) $ has an upper bound \wrt $ d_{\mM} $ in Lemma \ref{lemma: manifold_coding}.
	\end{thm}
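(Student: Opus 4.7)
The plan is to bound the LCCGAN empirical objective by the population GAN objective via a three-step decomposition: (i) invoking the ERM optimality of $\widehat w$; (ii) using the generator approximation of Lemma \ref{lemma1} combined with the Lipschitz smoothness of $\phi$ to replace $G_w(\bgamma(\bh))$ by $G_u(\bh)$; and (iii) using the boundedness of $\phi$ to pass from the empirical generated distribution to its population counterpart. Taking the infimum over $G_u \in \mG$ at the end yields the claimed inequality.

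Concretely, for any $G_u \in \mG$, the LCC construction assigns an associated generator $G_w$ satisfying $G_w(\bgamma(\bh)) = G_u(\br(\bh))$. Since $\widehat w$ minimizes the LCCGAN training objective over $\mW$, ERM directly gives
\[ d_{\mF,\phi}(\widehat\mD_{G_{\widehat w}(\bgamma(\bh))}, \widehat\mD_\real) \leq d_{\mF,\phi}(\widehat\mD_{G_w(\bgamma(\bh))}, \widehat\mD_\real). \]
Next, I would expand the supremum in the neural network distance on the right-hand side and, for every $D_v \in \mF$ and every $\bh \in \mH$, bound the pointwise deviation $|\phi(1-D_v(G_w(\bgamma(\bh)))) - \phi(1-D_v(G_u(\bh)))|$ by $L_\phi \cdot |D_v(G_w(\bgamma(\bh))) - D_v(G_u(\bh))|$ using $|\phi'| \leq L_\phi$, and then appeal to Proposition \ref{lemma: DG Approximation} (Lipschitz $D_v$ chained with Lemma \ref{lemma1}) to bound the latter by the generator approximation error $Q_{L_\bh,L_G}(\bgamma,\mC)$. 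Averaging this pointwise bound over $\bh$ and taking supremum over $D_v$ replaces $G_w(\bgamma(\bh))$ by $G_u(\bh)$ at a cost of $L_\phi Q_{L_\bh,L_G}(\bgamma,\mC)$ inside the neural network distance.

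Finally, to swap the empirical generated distribution $\widehat\mD_{G_u(\bh)}$ for its population counterpart $\mD_{G_u(\bh)}$, I would use that the integrand $\phi(1 - D_v(G_u(\bh)))$ lies in $[-\Delta, \Delta]$, so the corresponding neural network distance changes by at most $2\Delta$ inside the supremum. Combining the three bounds, taking expectation over $\mH$, and then infimum over $u \in \mU$ delivers the inequality with $\epsilon(d_\mM) = L_\phi Q_{L_\bh,L_G}(\bgamma,\mC) + 2\Delta$; its explicit dependence on the intrinsic dimension $d_\mM$ then follows immediately from Lemma \ref{lemma: manifold_coding}. The main obstacle is the second step: it requires either assuming or deriving from the class $\mF$ a Lipschitz property for every $D_v$, so that the $\ell_2$ generator approximation of Lemma \ref{lemma1} can be pushed through $D_v$ into a uniform bound on the integrand of $d_{\mF,\phi}$. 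Without such Lipschitz control on the discriminators, one would only obtain the trivial $2\Delta$ slack and the $L_\phi Q_{L_\bh,L_G}(\bgamma,\mC)$ dependence on the local geometry of the latent manifold would be lost.
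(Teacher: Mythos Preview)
Your proposal is essentially correct and reaches the theorem through a more direct decomposition than the paper. The paper instead sets up a leave-one-out construction: it draws $n{+}1$ independent latent sample sets $\mH^{(1)},\dots,\mH^{(n+1)}$, defines a joint minimizer $\widetilde w$ over all of them and leave-one-out minimizers $\widehat w^{(k)}$, and uses the boundedness of $\phi$ to pay $2\Delta$ when swapping $\widehat w^{(k)}$ for $\widetilde w$ on the held-out set $\mH^{(k)}$; optimality of $\widetilde w$ then lets them pass to the linear combination $\sum_{\bv}\gamma_{\bv}(\bh)G_u(\bv)$, after which Lemma~\ref{lemma1} together with the Lipschitz property and concavity of $\phi$ contribute the $L_\phi Q_{L_\bh,L_G}(\bgamma,\mC)$ term. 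Your route avoids the leave-one-out machinery: you invoke ERM optimality of $\widehat w$ on the single sample set directly, and then spend the $2\Delta$ on the (equally crude) swap from the empirical to the population generated distribution for the \emph{fixed} comparator $G_u$. Both approaches extract $2\Delta$ from mere boundedness of $\phi$, so neither is sharper; yours is simply shorter and sidesteps the auxiliary sample sets.

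Two minor points. In your step (ii), since $G_w(\bgamma(\bh))=G_u(\br(\bh))$, the deviation $|D_v(G_w(\bgamma(\bh)))-D_v(G_u(\bh))|$ is controlled just by Lipschitz continuity of $G_u$ and $D_v$, giving $L_{\bx} L_{\bh}\|\bh-\br(\bh)\|$ --- only the first summand of $Q$. The paper obtains the full $Q$ by detouring through $\sum_{\bv}\gamma_{\bv}(\bh)G_u(\bv)$, which is the object Proposition~\ref{lemma: DG Approximation} actually bounds; your bound would in fact be tighter, but be careful not to cite Proposition~\ref{lemma: DG Approximation} for a comparison it does not literally make. And your flagged obstacle is real and shared by the paper: both arguments tacitly take the discriminators in $\mF$ to be $1$-Lipschitz (i.e.\ $L_{\bx}=1$), so that the $\ell_2$ generator approximation can be pushed through $D_v$; the statement would be cleaner if this were made explicit.
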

	
	\begin{proof}
		Let $ \mH^{(k)} = \left\{ \bh_1^{(k)}, \bh_2^{(k)}, \ldots, \bh_r^{(k)} \right\} $ be a set of $ r $ latent samples which lie on the latent distribution. Consider $ n+1 $ independent experiments over the latent distribution, we have $ {\mH}_{r, n+1} = \left\{ \mH^{(1)}, \mH^{(2)}, \ldots, \mH^{(n+1)} \right\} $. Recall the optimization problem, we consider an empirical version of the expected loss:
		\begin{align}\label{optimization}
		[\widetilde{w}] = \argmin_{[w]} \left[ \frac{1}{n} \sum_{i=1}^{n+1} d_{\mF, \phi} \left({\mD}_{G_{w, \mH^{(i)}} (\bgamma(\bh))}, \widehat{\mD}_{\real} \right) \right].
		\end{align}
		
		Let $ k $ be an integer randomly drawn from $ \{1, 2, \ldots, n+1\} $. Let $ \left[ \widehat{w}^{(k)} \right] $ be the solution of
		\begin{align}
		\left[\widehat{w}^{(k)}\right] = \argmin_{[w]} \left[ \frac{1}{n} \sum_{i\neq k}^{n+1} d_{\mF, \phi} \left({\mD}_{G_{w, \mH^{(i)}} (\bgamma(\bh))}, \widehat{\mD}_{\real} \right) \right],
		\end{align}
		with the $ k $-th example left-out.
		
		Recall the definition of the neural net distance, we have
		\begin{align*}
		d_{\mF, \phi} (\mu, \nu) = \sup\limits_{\mF} \left| \mathop \mmE\limits_{\bx \sim \mu} \left[ \phi(D_v(\bx)) \right] + \mathop \mmE\limits_{\bx \sim \nu} \left[ \phi(\widetilde{D}_v (\bx)) \right] \right|,
		\end{align*}
		where $ \mF = \{ D_v, v \in \mV \} $.
		Given the $ k $-th sample experiment, the same real distribution $ \widehat{\mD}_{\real} $ over the training samples $ \bx_1, \bx_2, \ldots, \bx_m $, and two different distributions generated by $ {G_{\widehat{w}^{(k)}, \mH^{(k)}}\left(\bgamma(\bh)\right)} $ and $ {G_{\widetilde{w}, \mH^{(k)}}\left(\bgamma(\bh)\right)} $, respectively, the difference value of the neural net distance between these two generated distributions is:
		\begin{equation}
		\begin{aligned}
		&d_{\mF, \phi} \left( \widehat{\mD}_{{G_{\widehat{w}^{(k)}, \mH^{(k)}}\left(\bgamma(\bh)\right)}}, \widehat{\mD}_{\real} \right)
		- d_{\mF, \phi} \left(\widehat{\mD}_{{G_{\widetilde{w}, \mH^{(k)}}\left(\bgamma(\bh)\right)}}, \widehat{\mD}_{\real} \right) \\
		=&\sup\limits \left| \mathop \mmE\nolimits_{\bx \in \widehat{\mD}_{\real}} \left[ \phi(D_v(\bx)) \right] + \mathop \mmE\nolimits_{\bh \in \mH^{(k)}} \left[ \phi\left(\widetilde{D}_v \left( {G_{\widehat{w}^{(k)}, \mH^{(k)}}\left(\bgamma(\bh)\right)} \right)\right) \right] \right| \\
		&- \sup\limits \left| \mathop \mmE\nolimits_{\bx \in \widehat{\mD}_{\real}} \left[ \phi(D_v(\bx)) \right] + \mathop \mmE\nolimits_{\bh \in \mH^{(k)}} \left[ \phi\left(\widetilde{D}_v \left( {G_{\widetilde{w}, \mH^{(k)}}\left(\bgamma(\bh)\right)} \right)\right) \right] \right|\\
		\leq& \sup \left| \mmE_{ \bh \in \mH^{(k)} } \left[ \phi \left( \widetilde{D}_{v} \left( {G_{\widehat{w}^{(k)}, \mH^{(k)}}\left(\bgamma(\bh)\right)} \right) \right) \right]
		- \mmE_{ \bh \in \mH^{(k)} } \left[ \phi \left( \widetilde{D}_{v} \left( {G_{\widetilde{w}, \mH^{(k)}}\left(\bgamma(\bh)\right)} \right) \right) \right] \right| \\
		=& \sup \left| \frac{1}{\left| \mH^{(k)} \right|} \sum\limits_{\bh \in \mH^{(k)}} \left[ \phi \left( \widetilde{D}_{v} \left( {G_{\widehat{w}^{(k)}, \mH^{(k)}}\left(\bgamma(\bh)\right)} \right) \right)
		- \phi \left( \widetilde{D}_{v} \left( {G_{\widetilde{w}, \mH^{(k)}}\left(\bgamma(\bh)\right)} \right) \right) \right] \right| \\
		\leq& 2\Delta,
		\end{aligned}
		\end{equation}
		
		where $ \widetilde{D}_v(\cdot) = 1 - D_v(\cdot) $. In the above derivation, the first equality uses the definition of the neural net distance. The last inequality holds by the assumption that $ \phi(\cdot) $ is $ L_{\phi} $-Lipschitz and bounded in $ [-\Delta, \Delta] $.
		
		By summing over $ k $, and consider any fixed $ G_u \in \mG $, we obtain:
		\begin{align*}
		\sum_{k=1}^{n+1} d_{\mF, \phi} \left( \widehat{\mD}_{ {G_{\widehat{w}^{(k)}, \mH^{(k)}}\left(\bgamma(\bh)\right)} }, \widehat{\mD}_{\real} \right)	
		\leq& \sum_{k=1}^{n+1} d_{\mF, \phi} \left(\widehat{\mD}_{ {G_{\widetilde{w}, \mH^{(k)}}\left(\bgamma(\bh)\right)} }, \widehat{\mD}_{\real} \right) + 2(n+1) \Delta \\
		\leq& \sum_{\bh \in \mH^{(k)}, k=1}^{n+1} d_{\mF, \phi} \left( \widehat{\mD}_{\sum_{\bv \in \mC} \gamma_{\bv} \left(\bh \right) G_u (\bv)}, \widehat{\mD}_{\real} \right) + 2(n+1) \Delta \\
		\leq& \sum_{\bh \in \mH^{(k)}, k=1}^{n+1} d_{\mF, \phi} \left( \widehat{\mD}_{ G_{u} (\bh)}, \widehat{\mD}_{\real} \right) + \sum_{k=1}^{n+1} L_{\phi} Q_{L_{\bh}, L_{G}} (\bgamma, \mC) + 2(n+1) \Delta,
		\end{align*}
		where $ {Q}_{L_{\bh}, L_{G}} (\bgamma, \mC) = \mmE_{\bh} \left[ L_{\bh} \| \bh -  \br(\bh) \| + L_{G} \sum_{\bv \in \mC} |\gamma_{\bv} | \| \bv - \br(\bh) \|^2 \right] $. In the above derivation, the second
		inequality holds since $ \widetilde{w} $ is the minimizer of Problem (\ref{optimization}). The third inequality follows from the concavity of $ \phi(\cdot) $ and Lemma \ref{lemma1}:
		
		\begin{align*}
		d_{\mF, \phi} \left( {\mD}_{\sum_{\bv \in \mC, \bh \in \mH^{(k)}} \gamma_{\bv} \left(\bh \right) G_u (\bv)}, \widehat{\mD}_{\real} \right) =& \sup\limits \left| \mathop \mmE\nolimits_{\bx \in \widehat{\mD}_{\real}} \left[ \phi(D_v(\bx)) \right] + \mathop \mmE\nolimits_{\bh \in \mH^{(k)}} \left[ \phi\left(\widetilde{D}_v \left(\sum\nolimits_{\bv \in \mC} \gamma_{\bv} \left(\bh \right) G_u (\bv) \right)\right) \right] \right| \\
		\leq& \sup\limits \left| \mathop \mmE\nolimits_{\bx \in \widehat{\mD}_{\real}} \left[ \phi(D_v(\bx)) \right] + \mathop \mmE\nolimits_{\bh \in \mH^{(k)}} \left[ \phi\left(\widetilde{D}_v \left( G_u (\bh) \right) + \widehat{Q}_{L_{\bh}, L_{G}} (\bgamma, \mC) \right) \right] \right| \\
		\leq& \sup\limits \left| \mathop \mmE\nolimits_{\bx \in \widehat{\mD}_{\real}} \left[ \phi(D_v(\bx)) \right] + \mathop \mmE\nolimits_{\bh \in \mH^{(k)}} \left[ \phi\left(\widetilde{D}_v \left( G_u (\bh) \right) \right) \right] \right| + L_{\phi} Q_{L_{\bh}, L_{G}} (\bgamma, \mC) \\
		=& d_{\mF, \phi} \left( {\mD}_{G_u (\bh)}, \widehat{\mD}_{\real} \right) + L_{\phi} Q_{L_{\bh}, L_{G}} (\bgamma, \mC),
		\end{align*}
		where $ \widehat{Q}_{L_{\bh}, L_{G}} (\bgamma, \mC) = L_{\bh} \| \bh -  \br(\bh) \| + L_{G} \sum_{\bv \in \mC} |\gamma_{\bv} | \| \bv - \br(\bh) \|^2 $ and $ \mmE_{\bh} \left[ \widehat{Q}_{L_{\bh}, L_{G}} (\bgamma, \mC) \right] = Q_{L_{\bh}, L_{G}} (\bgamma, \mC) $. In the above derivation, the firth equality holds by the definition of the neural net distance. The first inequality because of Lemma \ref{lemma1} and the fact that $ \phi(\cdot) $ is a concave measuring function. Here, we suppose $ \phi(\cdot) $ is a monotonically increasing function. The second inequality holds by the following derivation:
		\begin{align*}
		&\left| \phi\left(\widetilde{D}_v \left( G_u (\bh) \right) + \widehat{Q}_{L_{\bh}, L_{G}} (\bgamma, \mC) \right) - \phi \left( \widetilde{D}_{v} \left( G_{u}(\bh) \right) \right) \right| \\
		\leq& \left| \phi' \left( \widetilde{D}_{v} \left( G_{u}(\bh) \right) \right) \left[ \left( \widetilde{D}_v \left( G_u (\bh) \right) + \widehat{Q}_{L_{\bh}, L_{G}} (\bgamma, \mC) \right) -  \widetilde{D}_{v} \left( G_{u}(\bh) \right) \right] \right| \\
		=& \left| \phi' \left( \widetilde{D}_{v} \left( G_{u}(\bh) \right) \right) \right| \widehat{Q}_{L_{\bh}, L_{G}} (\bgamma, \mC) \\
		\leq& L_{\phi} \widehat{Q}_{L_{\bh}, L_{G}} (\bgamma, \mC),
		\end{align*}
		In the above derivation, the first inequality uses the concavity of measuring function $ \phi(\cdot) $. The last inequality follows from that $ |\phi'| \leq L_{\phi} $. Now by taking expectation \wrt $ \mH_{r, n+1} $, we obtain
		\begin{align*}
		&\mmE_{\mH \subseteq \mH_{r, n+1}} \left[ d_{\mF, \phi} \left(\widehat{\mD}_{G_{\widehat{w}, \mH}\left( \bgamma(\bh) \right)}, \widehat{\mD}_{\real} \right) \right] \\
		\leq& \mmE_{\mH \subseteq \mH_{r, n+1}} \left[ d_{\mF, \phi} \left( \widehat{\mD}_{ G_{u, \bh \in \mH} (\bh)}, \widehat{\mD}_{\real} \right) \right] + L_{\phi} Q_{L_{\bh}, L_{G}} (\bgamma, \mC) + 2\Delta.
		\end{align*}
	\end{proof}

	\newpage
	\section{Proof of Theorem \ref{theorem: generalization_LCCGAN_v2}} \label{sec:thm2}
	\begin{thm} \label{theorem: generalization_LCCGAN_v2}
		Under the condition of Theorem \ref{theorem: supp_Generalization Bound}, and given an empirical distribution $ \widehat{\mD}_{\emph{real}} $ drawn from $ \mD_{\emph{real}} $, then the following holds with probability at least $ 1 - \delta $, 
		\begin{align}
		\left| \mmE_{\mH} \left[d_{\mF, \phi} \left(\widehat{\mD}_{G_{\widehat{w}}}, {\mD}_{\emph{real}} \right) \right]
		- \inf_{ \mG } \mmE_{\mH} \left[ d_{\mF, \phi} \left({\mD}_{G_u},  {\mD}_{\emph{real}} \right) \right] \right| 
		\leq 2 {R}_{\mX}(\mF) + 2 \Delta \sqrt{\frac{2}{N} \log(\frac{1}{\delta})} + 2\epsilon(d_{\mM}),
		\end{align}
		where $ {R}_{\mX}(\mF) = \mathop\mmE\limits_{\sigma, \mX} \left[ \sup\limits_{\mF} \frac{1}{N} \sum\limits_{i=1}^{N} \sigma_i \phi\left( D_v(\bx_i) \right) \right] $ and $ \sigma_i \in \{-1, 1\}, i = 1, 2, \ldots, m $ are independent uniform random variables.
	\end{thm}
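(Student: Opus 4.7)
The plan is to derive the bound in two stages: first establish the LCCGAN++ analog of an empirical-distribution generalization bound (the counterpart of Theorem B.1 / the cited \emph{generalization\_LCCGAN\_v2}), then lift from the empirical real distribution $\widehat{\mD}_{\text{real}}$ to the population distribution $\mD_{\text{real}}$ using Rademacher complexity together with McDiarmid's inequality, and finally invoke the manifold coding lemma to convert $Q_{L_{\bh},L_{\nu}}(\bgamma,\mC)$ into a bound in terms of the intrinsic dimension $d_{\mM}$.

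For the first stage, I would replay the leave-one-out/replication argument used in the LCCGAN proof but swap the original generator approximation (Lemma~\ref{lemma1}) for the improved one (Lemma~\ref{lemma2}). Concretely, take $n+1$ i.i.d.\ latent sample sets $\mH^{(1)},\dots,\mH^{(n+1)}$, let $\widetilde w$ be the joint empirical minimizer and $\widehat w^{(k)}$ the leave-one-out minimizer, and use $|\phi|\leq\Delta$ to bound the swap cost by $2\Delta$ per replica. On the competing side, for any $G_u\in\mG$, concavity of $\phi$ combined with $|\phi'|\leq L_\phi$ and Lemma~\ref{lemma2} gives, for each latent $\bh$,
\[
d_{\mF,\phi}\!\left(\mD_{\sum_{\bv}\gamma_{\bv}(\bh)(G_u(\bv)+\tfrac12 \nabla G_u(\bv)^{\trsp}(\bh-\bv))},\widehat{\mD}_{\text{real}}\right)\leq d_{\mF,\phi}(\mD_{G_u(\bh)},\widehat{\mD}_{\text{real}})+L_\phi Q_{L_{\bh},L_{\nu}}(\bgamma,\mC).
\]
Averaging over the $n+1$ replicas and using empirical optimality of $\widetilde w$ then yields
\[
\mmE_{\mH}\!\left[d_{\mF,\phi}(\widehat{\mD}_{G_{\widehat w}},\widehat{\mD}_{\text{real}})\right]\leq\inf_{\mG}\mmE_{\mH}\!\left[d_{\mF,\phi}(\mD_{G_u},\widehat{\mD}_{\text{real}})\right]+\epsilon(d_{\mM}).
\]

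For the second stage, I need to replace $\widehat{\mD}_{\text{real}}$ by $\mD_{\text{real}}$ uniformly over $D_v\in\mF$. A standard symmetrization argument (together with Talagrand's contraction to absorb the $L_\phi$ factor into $R_{\mX}(\mF)$) bounds $\mmE\sup_{D_v\in\mF}|\mmE_{\widehat{\mD}_{\text{real}}}\phi(D_v(\bx))-\mmE_{\mD_{\text{real}}}\phi(D_v(\bx))|$ by $2R_{\mX}(\mF)$, and McDiarmid's inequality, with bounded-differences constants $2\Delta/N$ since $|\phi|\leq\Delta$, upgrades this to a high-probability statement adding $2\Delta\sqrt{(2/N)\log(1/\delta)}$. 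Because the swap must be applied on both the $\widehat{\mD}_{G_{\widehat w}}$ side and the $\inf_{\mG}$ side to produce the two-sided (absolute-value) statement, the Stage-1 approximation term $\epsilon(d_{\mM})$ appears with factor $2$. Finally, applying the manifold coding lemma (Lemma~\ref{lemma: manifold_coding_v2}) bounds $Q_{L_{\bh},L_{\nu}}(\bgamma,\mC)$ by a quantity depending polynomially on $d_{\mM}$, so the error term genuinely scales with the intrinsic dimension of the latent manifold rather than the ambient $d_B$.

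The main obstacle I anticipate is the Stage-1 derivation: the improved approximation in Lemma~\ref{lemma2} introduces the gradient-correction term $\tfrac12\nabla G_u(\bv)^{\trsp}(\bh-\bv)$, so the subscript inside $\mD_{\sum_{\bv}\gamma_{\bv}(\bh)(\cdot)}$ is no longer a plain convex combination of $G_u(\bv)$'s. One must carefully verify that the concavity-plus-$L_\phi$ Lipschitz chain used in the LCCGAN proof still collapses the discriminator-evaluated residual into the clean additive term $L_\phi Q_{L_{\bh},L_{\nu}}(\bgamma,\mC)$, and that the higher-order $\|\bv-\br(\bh)\|^3$ bound (from the localization measure specific to the improved LCC) propagates through the expectation over $\mH$ without picking up extra dimension-dependent factors. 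Once this bookkeeping is settled, the Rademacher/McDiarmid step and the manifold coding lemma combine routinely to yield the stated bound.
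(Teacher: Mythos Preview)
Your two-stage plan is essentially the paper's argument: Stage~1 reproduces the leave-one-out proof of the empirical-distribution bound (the paper's Theorem~\ref{theorem: supp_Generalization Bound}, which is simply invoked as a black box in the proof of the present theorem), and Stage~2 is exactly the paper's telescoping reduction to $\sup_{D_v\in\mF}\bigl|\mmE_{\bx\sim\mD_{\text{real}}}\phi(D_v(\bx))-\mmE_{\bx\sim\widehat{\mD}_{\text{real}}}\phi(D_v(\bx))\bigr|$ followed by McDiarmid's inequality and symmetrization to obtain the $R_{\mX}(\mF)$ term. Your swap of Lemma~\ref{lemma1} for Lemma~\ref{lemma2} and your final appeal to Lemma~\ref{lemma: manifold_coding_v2} actually belong to the proof of Theorem~\ref{theorem: Further Generalization Bound} in the main text rather than to this theorem (which inherits $\epsilon(d_{\mM})$ as-is from Theorem~\ref{theorem: supp_Generalization Bound}), so the Stage-1 gradient-correction ``obstacle'' you anticipate does not in fact arise here.
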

	\begin{proof}
		For the real distribution $ \mD_{\real} $, we are interested in the generalization error in term of the following neural net distance:
		\begin{align}
		&\left| \mmE_{\mH} \left[d_{\mF, \phi} \left(\widehat{\mD}_{G_{\widehat{w}}}, {\mD}_{\real} \right) \right]
		- \inf_{ \mG } \mmE_{\mH} \left[ d_{\mF, \phi} \left({\mD}_{G_u},  {\mD}_{\real} \right) \right] \right|\nonumber \\
		\leq& \left| \mmE_{\mH} \left[ d_{\mF, \phi}  \left(\widehat{\mD}_{G_{\widehat{w}}}, {\mD}_{\real} \right) \right]
		- \mmE_{\mH} \left[ \inf_{\mG} d_{\mF, \phi} \left({\mD}_{G_u}, {\mD}_{\real} \right) \right] \right| \nonumber \\
		=& \left| \mmE_{\mH} \left[ d_{\mF, \phi} \left(\widehat{\mD}_{G_{\widehat{w}}},  {\mD}_{\real} \right)
		- d_{\mF, \phi} \left( \widehat{\mD}_{G_{\widehat{w}}}, \widehat{\mD}_{\real} \right)
		+ d_{\mF, \phi} \left( \widehat{\mD}_{G_{\widehat{w}}}, \widehat{\mD}_{\real} \right)
		- \inf_{\mG} d_{\mF, \phi} \left({\mD}_{G_u},  {\mD}_{\real} \right) \right] \right| \nonumber \\
		\leq& \left| \mmE_{\mH} \left[ d_{\mF, \phi} \left( \widehat{\mD}_{G_{\widehat{w}}}, {\mD}_{\real} \right)
		- d_{\mF, \phi} \left(\widehat{\mD}_{G_{\widehat{w}}}, \widehat{\mD}_{\real} \right)
		+ \inf_{\mG} d_{\mF, \phi} \left({\mD}_{G_u},  \widehat{\mD}_{\real} \right)
		- \inf_{\mG} d_{\mF, \phi} \left({\mD}_{G_u},  {\mD}_{\real} \right)
		+ \epsilon(d_{\mM}) \right] \right| \nonumber \\
		\leq& 2 \mmE_{\mH} \left[ \sup_{\mG} \left| d_{\mF, \phi} \left({\mD}_{G_u},  {\mD}_{\real} \right) - d_{\mF, \phi} \left({\mD}_{G_u},  \widehat{\mD}_{\real} \right) \right| + \epsilon(d_{\mM}) \right] \nonumber \\
		=& 2 \mmE_{\mH} \left[ \sup_{\mG} \left| \sup_{D_v \in \mF} \left| \mathop\mmE\limits_{\bx \in \mD_{\real}} \left[ \phi \left( D_v(\bx) \right) \right] + \mathop\mmE\limits_{\bx \in \mD_{G_u}} \left[ \phi \left( \widetilde{D}_v(\bx) \right) \right] \right|
		- \sup_{D_v \in \mF} \left| \mathop\mmE\limits_{\bx \in \widehat{\mD}_{\real}} \left[ \phi \left( D_v(\bx) \right) \right] + \mathop\mmE\limits_{\bx \in \mD_{G_u}} \left[ \phi \left( \widetilde{D}_v(\bx) \right) \right] \right| \right| + \epsilon(d_{\mM}) \right] \nonumber \\
		\leq& 2 \sup_{D_v \in \mF} \left| \mathop\mmE\limits_{\bx \in \mD_{\real}} \left[ \phi \left( D_v(\bx) \right) \right]
		- \mathop\mmE\limits_{\bx \in \widehat{\mD}_{\real}} \left[ \phi \left( D_v(\bx) \right) \right] \right| + 2\epsilon(d_{\mM}) \label{thm2: eq1}.
		\end{align}
		In the above derivation, the first inequality holds by by Jensen's inequality and the concavity of the infimum function.
		The second inequality holds by Theorem \ref{theorem: generalization_LCCGAN_v2}. The third inequality satisfies when we take supremum \wrt $ G_u \in \mG $. The last inequality uses the definition of the neural net distance and holds by triangle inequality. This reduces the problem to bounding the distance
		\begin{align}
		d'_{\mF} \left( \mD_{\real}, \widehat{\mD}_{\real} \right) := \sup_{D_v \in \mF} \left| \mmE_{\bx \in \mD_{\real}} \left[ \phi \left( D_v(\bx) \right) \right] - \mmE_{\bx \in \widehat{\mD}_{\real}} \left[ \phi \left( D_v(\bx) \right) \right] \right|, 
		\end{align}
		between the true distribution and its empirical distribution. This can be achieved by the uniform concentration bounds developed in statistical learning theory, and thus the distance $ d'_{\mF} \left( \mD_{\real}, \widehat{\mD}_{\real} \right) $ can be achieved by the Rademacher complexity.
		Let $ \bx_1, \bx_2, \ldots, \bx_N \in \mX $ be a set of $ N $ independent random samples in data space.  We introduce a function
		\begin{align}
		h\left( \bx_1, \bx_2, \ldots, \bx_N \right) = \sup_{D_v \in \mF} \left| \mmE_{\bx \in \mD_{\real}} \left[ \phi \left( D_v(\bx) \right) \right] - \mmE_{\bx \in \widehat{\mD}_{\real}} \left[ \phi \left( D_v(\bx) \right) \right] \right|.
		\end{align}
		Since measuring function $ \phi $ is Lipschitz and bounded in $ [-\Delta, \Delta] $, changing $ \bx_i $ to another independent sample $ \bx'_i $ can change the function $ h $ by no more than $ \frac{4\Delta}{ N } $, that is,
		\begin{align}
		h\left( \bx_1, \ldots, \bx_i \ldots, \bx_N \right) - h\left( \bx_1, \ldots, \bx'_i, \ldots, \bx_N \right) \leq \frac{4\Delta}{N},
		\end{align}
		for all $ i \in [1, N] $ and any points $ \bx_1, \ldots, \bx_N, \bx'_i \in \mX $.
		McDiarmid's inequality implies that with probability at least $ 1 - \delta $, the following inequality holds:
		\begin{align}
		&\sup_{D_v \in \mF} \left| \mmE_{\bx \in \mD_{\real}} \left[ \phi \left( D_v(\bx) \right) \right] - \mmE_{\bx \in \widehat{\mD}_{\real}} \left[ \phi \left( D_v(\bx) \right) \right] \right| \nonumber \\
		\leq & \mmE \left[ \sup_{D_v \in \mF} \left| \mmE_{\bx \in \mD_{\real}} \left[ \phi \left( D_v(\bx) \right) \right] - \mmE_{\bx \in \widehat{\mD}_{\real}} \left[ \phi \left( D_v(\bx) \right) \right] \right| \right]
		+ 2 \Delta \sqrt{\frac{2\log\left( \frac{1}{\delta} \right)}{N}}. \label{thm2: eq2}
		\end{align}
		From the bound on Rademacher complexity, we have
		\begin{align}
		\mmE \left[ \sup_{D_v \in \mF} \left| \mmE_{\bx \in \mD_{\real}} \left[ \phi \left( D_v(\bx) \right) \right] - \mmE_{\bx \in \widehat{\mD}_{\real}} \left[ \phi \left( D_v(\bx) \right) \right] \right| \right] 
		\leq 2 \mmE_{\sigma, \mX} \left[ \sup_{D_v \in \mF} \frac{1}{N} \sum_{i=1}^{N} \sigma_i \phi\left( D_v(\bx_i) \right) \right] = 2 R_{\mX} (\mF). \label{thm2: eq3}
		\end{align}
		Combining the inequalities (\ref{thm2: eq1}), (\ref{thm2: eq2}) and (\ref{thm2: eq3}), we have
		\begin{align*}
		& \mmE_{\mH} \left[d_{\mF, \phi} \left(\widehat{\mD}_{G_{\widehat{w}}}, {\mD}_{\real} \right) \right] - \inf_{G_u} \mmE_{\mH} \left[ d_{\mF, \phi} \left({\mD}_{G_u},  {\mD}_{\real} \right) \right] \leq 2 {R}_{\mX}(\mF) + 2 \Delta \sqrt{\frac{2 \log(\frac{1}{\delta})}{N}} + 2\epsilon(d_{\mM}).
		\end{align*}
	\end{proof}
	
	\newpage
	\section{Experimental Details} \label{sec:detail}
	\textbf{Implementation Details.}
	In the training, we follow the experimental settings in DCGAN~\cite{radford2015unsupervised}.
	Specifically, we use Adam~\cite{kingma2014adam} with a mini-batch size of 64 and a learning rate of 0.0002 to train the generator and the discriminator.
	Following the strategy in~\cite{he2015delving}, we initialize the parameters of both the generator and the discriminator.
	We set the hyperparameters $L_{\bh} {=} 1$ and $L_\nu {=} 0.0001$. All experiments are conducted on a single NVIDIA Titan X GPU.
	For all considered GAN methods, the inputs are sampled from a $d$-dimensional prior distribution, and we train the generative models to produce $64 \times 64$ images.
	
	Then, we introduce some details about StackGAN and Progressive GAN.
	For StackGAN, it is originally devised with an input text as the condition.  
	However, since there is no text data acting as the condition in our experiments, we remove its module of text embedding.
	For Progressive GAN, it is trained with a very large number of iterations and takes about 20 days for the training (reported in the original paper). However, the other GAN methods are only trained with a limited number of iterations to converge and take several hours for the training. In this sense, it is unfair to directly compare different GAN methods with different training settings.
	To address this, we train different GAN models with the same number of iterations to conduct a fair comparison.

	\section{More Results}\label{sec:more_res}
	In Table \ref{fig:supp_celeba-result}, by introducing LCC sampling into the training, LCCGAN and LCCGAN++ with a low input dimension $d{=}30$ produce promising face images with better quality and larger diversity than DCGAN and Progressive GAN with $d{=}100$. 
	Moreover, given the same input dimension, our proposed LCCGAN++ shows better performance than LCCGAN-v1 and other baseline methods. 
	
	\begin{table}[h]
		\centering
		{
			\caption{		
				Comparison of GAN methods with different dimensions on CelebA.
			}
			\label{fig:supp_celeba-result}
			\includegraphics[width=0.9\linewidth]{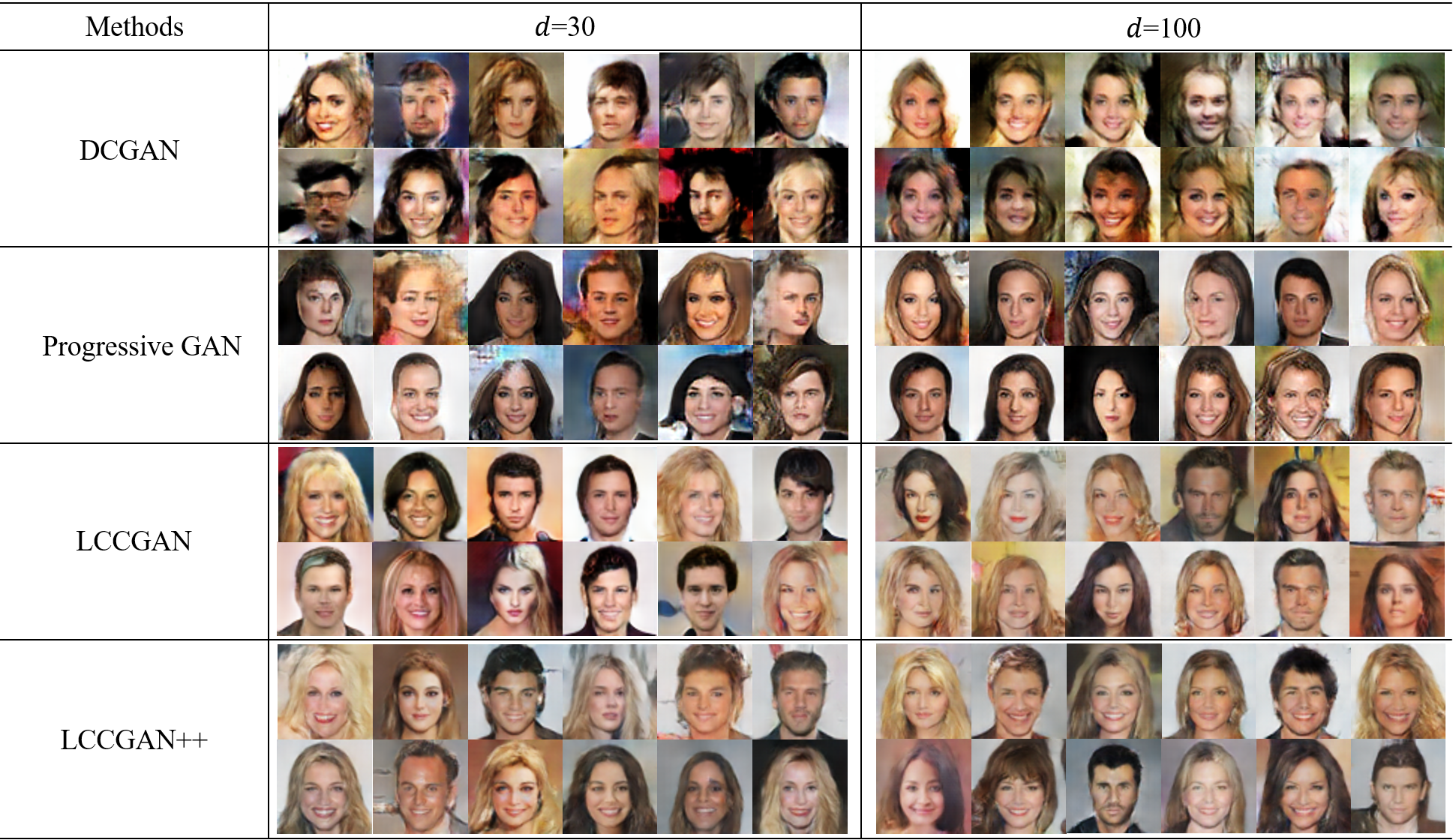}
		}
	\end{table}
	
\end{document}